\newcommand{\modelname}{EvoGOOD}
\newcommand{\ie}{{i.e.}}
\theoremstyle{definition}
\newtheorem{asm}{Assumption}
\newtheorem{prop}{Proposition}
\newtheorem{Lemma}{Lemma}
\newcommand{\change}[1]{\textcolor{black}{#1}}
\newcommand{\changecolor}{black}
\useunder{\uline}{\ul}{}
\def\UrlAlphabet{%
      \do\a\do\b\do\c\do\d\do\e\do\f\do\g\do\h\do\i\do\j%
      \do\k\do\l\do\m\do\n\do\o\do\p\do\q\do\r\do\s\do\t%
      \do\u\do\v\do\w\do\x\do\y\do\z\do\A\do\B\do\C\do\D%
      \do\E\do\F\do\G\do\H\do\I\do\J\do\K\do\L\do\M\do\N%
      \do\O\do\P\do\Q\do\R\do\S\do\T\do\U\do\V\do\W\do\X%
      \do\Y\do\Z}
\def\UrlDigits{\do\1\do\2\do\3\do\4\do\5\do\6\do\7\do\8\do\9\do\0}
\g@addto@macro{\UrlBreaks}{\UrlOrds}
\g@addto@macro{\UrlBreaks}{\UrlAlphabet}
\g@addto@macro{\UrlBreaks}{\UrlDigits}
\begin{document}
\title{Evolving Graph Learning for Out-of-Distribution Generalization in Non-stationary Environments}

\author{Qingyun~Sun,~\IEEEmembership{Member,~IEEE},~Jiayi~Luo,~Haonan~Yuan,~Xingcheng~Fu,~Hao~Peng,~\IEEEmembership{Member,~IEEE},\\Jianxin~Li,~\IEEEmembership{Senior Member,~IEEE},~and~Philip~S.~Yu,~\IEEEmembership{Life~Fellow,~IEEE}%
\IEEEcompsocitemizethanks{\IEEEcompsocthanksitem Q. Sun, J. Luo, H. Yuan, H. Peng, and J. Li are with Beijing Advanced Innovation Center for Big Data and Brain Computing, School of Computer Science and Engineering, Beihang University, Beijing 100083, China.
\protect\\
E-mail: \{sunqy, luojy, yuanhn, penghao, lijx\}@buaa.edu.cn;
\IEEEcompsocthanksitem X. Fu is with the Key Lab of Education Blockchain and Intelligent Technology, Ministry of Education, Guangxi Normal University, China. E-mail: fuxc@buaa.edu.cn.%
\IEEEcompsocthanksitem PS. Yu is with the Department of Computer Science, University of Illinois at Chicago, Chicago 60607, USA. E-mail: psyu@uic.edu.}%
\thanks{Manuscript received Mar. 1, 2024.  }}

\markboth{Journal of \LaTeX\ Class Files,~Vol.~14, No.~8, August~2021}%
{Sun \MakeLowercase{\textit{et al.}}: Bare Demo of IEEEtran.cls for Computer Society Journals}

\IEEEtitleabstractindextext{%
\begin{abstract}
Graph neural networks have shown remarkable success in exploiting the spatial and temporal patterns on dynamic graphs. 
However, existing GNNs exhibit poor generalization ability under distribution shifts, which is inevitable in dynamic scenarios. 
As dynamic graph generation progresses amid evolving latent non-stationary environments, it is imperative to explore their effects on out-of-distribution (OOD) generalization.
This paper proposes a novel \textbf{Evo}lving \textbf{G}raph Learning framework for \textbf{OOD} generalization (\textbf{EvoGOOD}) by environment-aware invariant pattern recognition. 
Specifically, we first design an environment sequential variational auto-encoder to model environment evolution and infer underlying environment distribution. 
Then, we introduce a mechanism for environment-aware invariant pattern recognition, tailored to address environmental diversification through inferred distributions.
Finally, we conduct fine-grained causal interventions on individual nodes using a mixture of instantiated environment samples. This approach helps to distinguish spatio-temporal invariant patterns for OOD prediction, especially in non-stationary environments.
Experimental results demonstrate the superiority of \modelname~on both real-world and synthetic dynamic datasets under distribution shifts. 
To the best of our knowledge, it is the first attempt to study the dynamic graph OOD generalization problem from the environment evolution perspective. 
\end{abstract}

\begin{IEEEkeywords}
Graph Neural Networks, Dynamic Graph, Graph Evolution, Out-of-Distribution Generalization.
\end{IEEEkeywords}}

\maketitle

\IEEEdisplaynontitleabstractindextext

\IEEEpeerreviewmaketitle

\IEEEraisesectionheading{\section{Introduction}\label{sec:introduction}}

\IEEEPARstart{G}{\change{raph}} \change{representation learning has been extensively studied, ranging from static knowledge graph embedding approaches~\cite{wang2022duality} to dynamic graph neural networks (DGNNs), which}
have achieved great success in many applications across different domains including social network~\cite{cordeiro2016dynamic}, recommender system~\cite{wu2022graph}, fraud detection~\cite{ma2021comprehensive}, etc. 
DGNNs combine the merits of GNN-based models and sequential-based models and show excellent expressive power in exploring the highly complex spatio-temporal patterns in dynamic graphs.

Although a vast number of emerging methods and techniques have flourished in recent years, 
many of them are based on the assumption of in-distribution, 
meaning that the testing data is assumed to be identically distributed to the training data. 
However, the GNN models may encounter significantly different distributions of features, structure, and labels, which is inevitable, especially in dynamic scenarios. 
Recent works~\cite{shen2021towards,zhu2021shift,hendrycks2021many} have demonstrated that existing works exhibit poor generalization ability when faced with unknown distribution shifts. 
Distribution shifts can cause GNNs to overfit to labeled samples and make their predictions on unseen samples prone to error, resulting in unstable predictions. 
Therefore, developing GNNs that are robust against distribution shifts is crucial for their applicability in real-world scenarios.

Out-of-distribution (OOD) generalization in graph data has received extensive research attention in the fields of graph data in recent years~\cite{shen2021towards,li2022ood,chen2022learning,gui2022good,fan2021generalizing}. 

From the causal-based theories~\cite{pearl2009causal, pearl2010causal,pearl2018book}, the generation of graphs can be described using a Structural Causal Model (SCM)~\cite{pearl2009causal} as shown in Fig. ~\ref{fig:SCM}, with arrows indicating causal relationships between variables. 
In the field of OOD generalization, there is a widespread consensus~\cite{gagnon2022woods, arjovsky2019invariant, rosenfeld2020risks, wu2022discovering, chang2020invariant, ahuja2020invariant, mitrovic2020representation} that the correlations between labels and causal components ($C$) of the latent features are invariant across different data distributions in both training and testing phases, while other parts of the features constitute the variant/spurious component ($S$). 
It is widely recognized that spurious correlations between $C\leftrightarrow S$ within and across graph snapshots are detrimental to generalization performance under distribution shifts.
These correlations need to be filtered out through careful investigation of the impact of latent environment $\mathbf{e}$~\cite{pearl2009causal, peters2017elements}. 
The majority of existing works are proposed for static graphs and focus on finding invariant patterns in terms of features and structures as shown in Fig.~\ref{fig:SCM}, ignoring the spurious correlations in spatio-temporal interactions. 
In addition, they assume that the graph generation environment follows a fixed distribution~\cite{li2022graphde,zhang2022dynamic}. 
However, real-world environments are usually non-stationary~\cite{sugiyama2012machine,park2020hop}, \ie, the environment distribution changes over time. 
Since the generation of graphs is affected by many unknown factors in the dynamic environment, the spatio-temporal distribution shifts in dynamic graphs are more complex to handle. 
\begin{figure}
    \centering
    \includegraphics[width=1\linewidth]{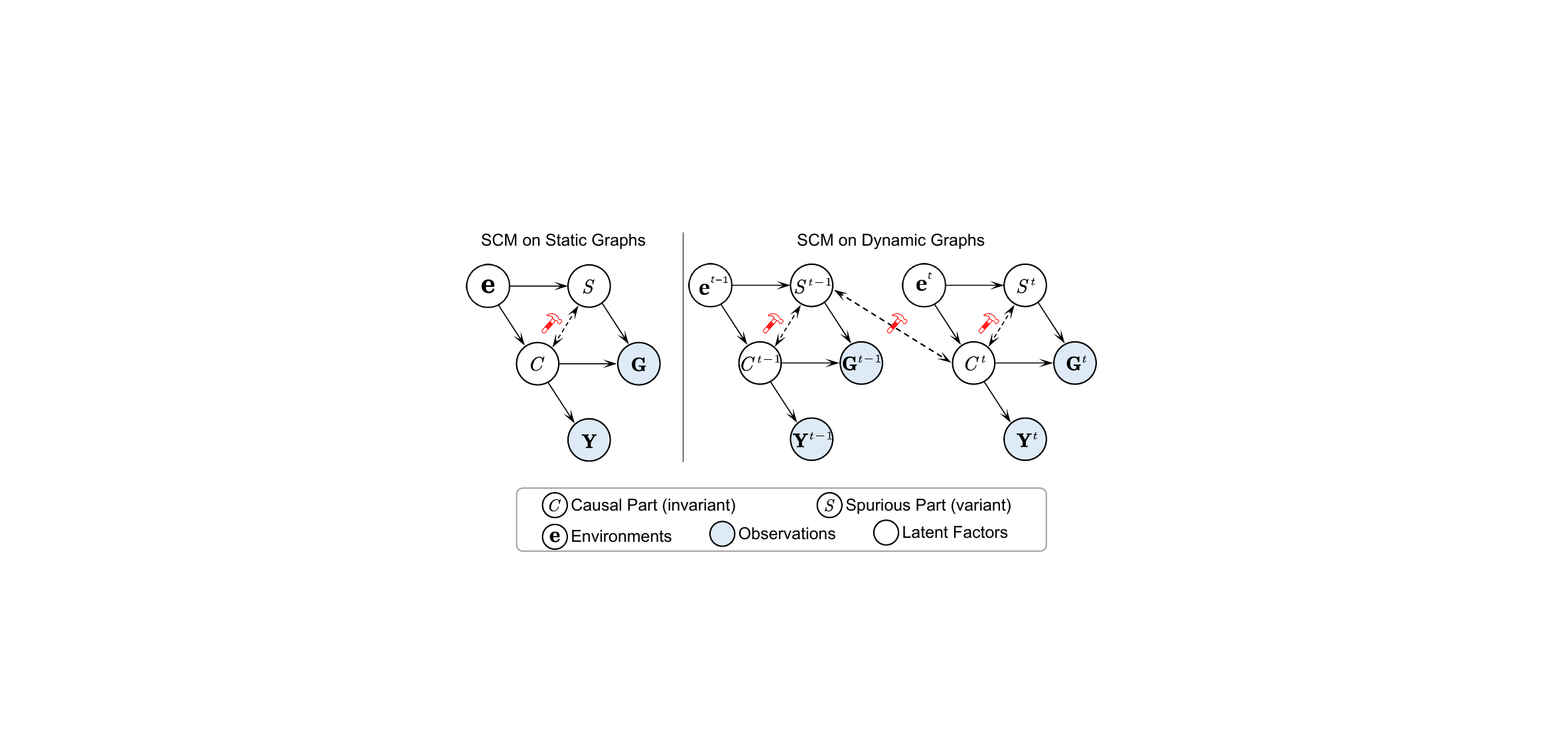}
    \caption{SCM models for graph OOD generalization.}
    \label{fig:SCM}
\end{figure}

This paper studies the problem of identifying invariant spatio-temporal patterns in dynamic graphs under non-stationary environments, a topic that has not been explored in the existing literature. 
However, it is non-trivial with the following two challenges:
1) How to appropriately model and infer the complex evolutionary environments on dynamic graphs? 
2) How to recognize the invariant patterns under the unknown spatio-temporal distribution shifts of the non-stationary environments? 
To alleviate the aforementioned issues, we can take advantage of the graph evolution patterns along with the environment evolution. 
For example, graduation and job-hopping would affect the interactions in collaboration networks and friendship networks, which can be exploited as the underlying environments to predict future coauthors or to recommend possible friends. 

\textbf{Present work.} 
This paper proposes a novel \textbf{Evo}lving \textbf{G}raph Learning framework for \textbf{O}ut-\textbf{O}f-\textbf{D}istribution Generalization named \textbf{\modelname}~to learn invariant prediction across spatio-temporal distribution shifts. 
First, to shed light on the evolving environment, we design an environment sequential auto-encoder (ESVAE) to model and infer environments by variational inference. 
This approach enables us to predict the future environment and identify the corresponding invariant patterns. 
Next, we introduce a novel environment-aware invariant pattern recognition mechanism that satisfies the \textit{Invariance Property} and \textit{Sufficient Condition}, and offers theoretical assurances for out-of-distribution prediction.
Lastly, we use a fine-grained causal intervention mechanism for each node with inferred environment distributions for better generalization. 
In this way, \modelname~can generalize well in the non-stationary environment by recognizing the environment-related invariant patterns. 

The initial iteration of this research was presented in the proceedings of the Conference on Neural Information Processing System 2023~\cite{yuan2023environment}. %
This journal version incorporates several enhancements that elevate the framework, focusing on significant aspects. 
First, \modelname~learns both static and dynamic factors in non-stationary environments with a novel environment sequential variational autoencoder instead of using a single distribution to model the environments as in the previous version, which can capture the evolving patterns of dynamic environments and generalize better to the unseen future environments. 
In this way, \modelname~can not only model the intrinsic changing dynamics of the observed environments but also predict the future environment distribution of finite steps and generate convincing instances for causal inference. 
Second, \modelname~performs fine-grained node-wise causal interventions with not only the historical samples but also the generated samples from the current environment. 
With the generated samples that can better represent the potentially shifted future environments, EvoGOOD can achieve better generalization capability, and extend the generalization boundaries to unseen testing distributions. 
Third, \modelname~uses a simple temporal convolution for dynamic graphs instead of the disentangled graph encoding method in the previous version, showing better computational and space complexity. 
Last, we also conduct more experiments on not only the link prediction task as in the preliminary version but also on the node classification task to further explore the generalization ability of \modelname. 
The contributions can be summarized as follows:
\begin{itemize}[leftmargin=1.5em]
    \item 
    We introduce a novel OOD generalization framework \modelname~for dynamic graphs, which leverages spatio-temporal invariant patterns in relation to environments.
    To our best knowledge, 
    this is the first attempt to investigate the influence of environment evolution on dynamic graphs during distribution shifts. 
    \item 
    We design a novel environment sequential auto-encoder to enhance \modelname's extrapolation capability for distribution shifts in non-stationary environments.
    This enables \modelname~to recognize the environment-aware invariant patterns and perform the node-wise fine-grained causal intervention to enhance the generalization ability. 
    \item 
    Extensive experiments validate the superior generalization ability of \modelname~over state-of-the-art baselines in the future link prediction and node classification tasks, utilizing both real-world and synthetic datasets. 

\end{itemize}

The rest of this paper is structured as follows.
Section~\ref{sec:related} discusses the related works about dynamic graph learning, graph OOD generalization methods, and environment learning for OOD generalization. 
Section~\ref{sec:formulation} gives the problem formulations. 
Section~\ref{sec:method} reviews the overall framework of~\modelname~following the environment ``Modeling-Inferring-Discriminating-Generalizing'' paradigm, respectively. 
Section~\ref{sec:exp} describes the experimental methodology and the results. 
Section~\ref{sec:conclusion} concludes this work. 

\section{Related Work}
\label{sec:related}
This section will discuss the related works including dynamic graph learning, the disentangled graph OOD generalization, invariant graph OOD generalization, and graph OOD generalization for dynamic scenarios. 

\subsection{Dynamic Graph Learning}
How to handle the spatial-temporal patterns of nodes and edges is the main issue in dynamic graph learning. 
Existing works~\cite{seo2018structured,sankar2020dysat} typically are a combination of GNNs and sequential models to regulate the node embeddings and learn the spatial-temporal dynamics. 
For example, 
GCRN~\cite{seo2018structured} incorporates the Graph Convolutional Network and GRU~\cite{cho2014learning} to learn the spatial-temporal relations. 
DySAT~\cite{sankar2020dysat} captures the dynamic structural patterns through a unified self-attention mechanism.

Encoding the evolution patterns of dynamic graphs has attracted recent research interest. 
DynamicTriad~\cite{zhou2018dynamic} utilizes the triad closure process to simulate the graph evolution. 
HTNE~\cite{zuo2018embedding} captures the sequence of neighborhood formation during the evolutionary process using a Hawkes process with time-dependent weights.
HoMo-DyHNE~\cite{ji2023higher} incorporates node features and the Hawkes Process into a skip-gram model to learn the evolution of complex patterns for heterogeneous networks. 
EPNE~\cite{wang2020epne} proposes the causal convolutions to learn the periodic linkage evolution patterns. 
EvolveGCN~\cite{pareja2020evolvegcn} utilizes an RNN to dynamically adjust the GCN parameters for model adaptation at each timestamp. 
This paper aims to investigate the invariant patterns that evolve during the graph evolution.

\begin{table*}[t]
  \caption{Comparison of graph out-of-distribution generalization methods.}
  \resizebox{\linewidth}{!}{
  \centering
  \renewcommand\arraystretch{1}
  \begin{tabular}{clccccccc}
    \toprule
    \multicolumn{2}{c}{\multirow{2}[4]{*}{\textbf{Model}}} & \multicolumn{1}{c}{\multirow{2}[4]{*}{\makecell{\textbf{factor}\\\textbf{disentangling}}}} & \multicolumn{1}{c}{\multirow{2}[4]{*}{\makecell{\textbf{invariant pattern}\\\textbf{recognition}}}} & \multicolumn{2}{c}{\textbf{causal inference}} & \multicolumn{3}{c}{\textbf{environment extrapolation}} \\
\cmidrule(r){5-6} \cmidrule(r){7-9}   \multicolumn{2}{c}{} &       &       & \multicolumn{1}{c}{coarse-grained} & \multicolumn{1}{c}{fine-grained} & \multicolumn{1}{c}{\makecell{environment\\modeling}} & \multicolumn{1}{c}{\makecell{environment\\diversification}} & \multicolumn{1}{c}{\makecell{environment\\evolution}} \\
    \midrule
    \multirow{7}[38]{*}{\rotatebox{90}{\makecell{\textbf{static}}}} & \multicolumn{1}{l}{\makecell[l]{DisenGCN~\cite{ma2019disentangled}, IPGDN~\cite{liu2020independence}, DisC~\cite{fan2022debiasing},\\FactorGCN~\cite{yang2020factorizable}, NED-VAE~\cite{guo2020interpretable}, DGCL~\cite{li2021disentangled},\\IDGCL~\cite{li2022disentangled}}} & \checkmark &       &       &       &       &       &  \\
\cmidrule{2-9}          & \multicolumn{1}{l}{DGNN~\cite{fan2022debiased}, CAL~\cite{sui2022causal}, CFLP~\cite{zhao2022learning}, Gem~\cite{lin2021generative}} &       &       & \checkmark &       &       &       &  \\
\cmidrule{2-9}          & \multicolumn{1}{l}{\makecell[l]{GRAND~\cite{feng2020graph}, FLAG~\cite{kong2022robust}, GraphCL~\cite{you2020graph},\\GREA~\cite{liu2022graph}, DPS~\cite{yu2022finding}, Mixup~\cite{zhang2017mixup},\\GraphMix~\cite{verma2021graphmix}, G-Mixup~\cite{wang2021mixup}, NodeAug~\cite{wang2020nodeaug}}} &       &       &       &       &       & \checkmark &  \\
\cmidrule{2-9}          & \multicolumn{1}{l}{OOD-GNN~\cite{li2022ood}, StableGNN~\cite{fan2021generalizing}} &       & \checkmark & \checkmark &       &       &       &  \\
\cmidrule{2-9}          & \multicolumn{1}{l}{\makecell[l]{GAug~\cite{zhao2021data}, MH-Aug~\cite{park2021metropolis}, KDGA~\cite{wu2022knowledge},\\AdvCA~\cite{sui2022adversarial}, LA-GNN~\cite{liu2022local}, ifMixup~\cite{guo2022intrusionfree},\\$\mathcal{G}$-Mixup~\cite{han2022g}}} &       &       &       &       & \checkmark & \checkmark &  \\
\cmidrule{2-9}          & \multicolumn{1}{l}{\makecell[l]{SR-GNN~\cite{zhu2021shift}, GSAT~\cite{miao2022interpretable}, EERM~\cite{wu2022handling}, GIL~\cite{li2022learning},\\SizeShiftReg~\cite{buffelli2022sizeshiftreg}, RGCL~\cite{li2022let}, INL~\cite{li2023invariant},\\FLOOD~\cite{liu2023flood}}} &       & \checkmark &       &       & \checkmark & \checkmark &  \\
\cmidrule{2-9}          & \multicolumn{1}{l}{\makecell[l]{DSE~\cite{wu2022deconfounding}, DIR~\cite{wu2022discovering}, CIGA~\cite{chen2022learning}, gMPNN~\cite{zhou2022ood},\\E-invariant GR~\cite{bevilacqua2021size}, LECI\cite{gui2024joint}, CAL+~\cite{sui2024enhancing},\\CaNet~\cite{wu2024graph}}} &       & \checkmark & \checkmark &       & \checkmark & \checkmark &  \\
    \midrule
    \multirow{3}[20]{*}{\rotatebox{90}{\makecell{\textbf{dynamic}}}} & DIDA~\cite{zhang2022dynamic}, SILD~\cite{zhang2023spectral} &       & \checkmark & \checkmark &       &       &  &  \\
\cmidrule{2-9}          & EAGLE~\cite{yuan2023environment} &   \checkmark    & \checkmark &       & \checkmark & \checkmark & \checkmark &  \\
\cmidrule{2-9}          & \change{EpoD~\cite{yang2024improving}} &       & \change{\checkmark} &   \change{\checkmark}    &  & \change{\checkmark} &  &  \\
\cmidrule{2-9}          & \change{PURE~\cite{wu2024pure}} &       &  &     &  & \change{\checkmark} &  &  \\
\cmidrule{2-9}          & \change{ProST~\cite{xia2025prost}} &       & \change{\checkmark} &     &  & \change{\checkmark} &  &  \\
\cmidrule{2-9}          & \textbf{\modelname~(ours)} & \checkmark & \checkmark &       & \checkmark & \checkmark & \checkmark & \checkmark \\
    \bottomrule
    \end{tabular}%
  \label{tab:comparison}%
}
\end{table*}%

\subsection{Disentangled Graph OOD Generalization}
The generation of real-world graphs typically involves complex processes influenced by numerous underlying factors~\cite{rojas2018invariant, arjovsky2019invariant}. 
Disentangled graph representation learning is a technique that seeks to extract representations that isolate the various distinctive and informative factors that underpin graph data. 
These factors are then captured in different components of the factorized vector representations. 
Existing works can be categorized into supervised factor disentangling~\cite{ma2019disentangled, liu2020independence, fan2022debiasing, yang2020factorizable}, unsupervised factor disentangling~\cite{guo2020interpretable}, and self-supervised contrastive factor disentangling~\cite{li2021disentangled, li2022disentangled}. 
Studies have shown that such disentangled representations are of high quality and can contribute to improving OOD generalization capabilities.

\subsection{Invariant Graph OOD Generalization}
Deep learning models frequently capture predictive correlations within the observed samples. However, the patterns learned may not always align with unseen data. Invariant learning seeks to identify these patterns that exhibit minimal variation between training and testing, thereby enabling the generation of informative and discriminative representations that underpin stable predictions~\cite{creager2021environment, li2021learning, zhao2019learning}. 

Graph machine learning models often capitalize on statistical correlations within the training set, which may be spurious and lead to improved training accuracy. However, these models' performance can suffer significantly when deployed in OOD testing scenarios, as the spurious correlations they depend on may not hold in new, unobserved environments.
Encouraged by causal theories, invariant learning harnesses the power of Structural Causal Models (SCMs)~\cite{pearl2009causal} to eliminate spurious correlations through interventions or counterfactuals, enforcing significant causal dependencies, and approaches the OOD generalization challenge from a more theoretical standpoint, thus exposing its considerable potential~\cite{gagnon2022woods, arjovsky2019invariant, rosenfeld2020risks, chang2020invariant, ahuja2020invariant, mitrovic2020representation}. For graph data, \cite{li2022ood, fan2021generalizing, wu2022deconfounding, wu2022discovering, chen2022learning, zhou2022ood, bevilacqua2021size, gui2024joint, sui2024enhancing, wu2024graph} perform causal inference to remove spurious correlations such that contribute to determining the predictive dependency on the causal invariant patterns. Additionally, \cite{fan2022debiased, sui2022causal, zhao2022learning, lin2021generative} apply confounder balancing techniques to carry out explicit causal inference with better explainability directly. However, these methods only carry out coarse causal inference for node level or graph level and lack individual and adaptive invariant pattern mining. 

In contrast, \cite{zhu2021shift, miao2022interpretable, wu2022handling, li2022learning, buffelli2022sizeshiftreg, li2022let, li2023invariant, liu2023flood} do not rely on the causal inference to learn invariant patterns. 
They depend on generative or heuristically deterministic modules to optimize a learnable invariant pattern, leveraging the principle of invariance to address the graph OOD generalization challenge.

\subsection{Environment Learning for Graph OOD Generalization}
Environment plays an important role in OOD generalization, especially in the OOD generalization of graphs. 
Given that the environment labels are not available in real-world scenarios, the environments present in graph data are typically considered as latent factors that contribute to data distribution shifts.
The environment extrapolation method is usually combined with the aforementioned disentangled representation learning method and invariant representation learning method to improve the graph OOD generalization performance. 

According to the depth of utilizing the latent environment, existing works fall into two categories: random-data-augmentation-based environment diversification and distribution-modeling-based environment diversification. For example, \cite{feng2020graph, kong2022robust, you2020graph, liu2022graph, yu2022finding, zhang2017mixup, verma2021graphmix, wang2021mixup, wang2020nodeaug} are random-data-augmentation-based environment diversification methods that diversify the latent environments by randomly or heuristically mixing and generating new data samples. These methods are well known for their simple but effective performance in most cases. For distribution-modeling-based environment diversification methods, ~\cite{zhao2021data, park2021metropolis, wu2022knowledge, sui2022adversarial, liu2022local, guo2022intrusionfree, han2022g} first explicitly model the distribution of latent environments and generate necessary data instantiations for environment diversifying by sampling from the inferred environment distribution. In addition, the diversified environments with their generated samples are utilized to gain better invariant pattern recognition~\cite{zhu2021shift, miao2022interpretable, wu2022handling, li2022learning, buffelli2022sizeshiftreg, li2022let, li2023invariant, liu2023flood, wu2022deconfounding, wu2022discovering, chen2022learning, zhou2022ood, bevilacqua2021size, gui2024joint, sui2024enhancing, wu2024graph}.

\subsection{Graph OOD Generalization for Dynamic Scenario}
Existing static graph OOD methods are not tailored to dynamic graph data. 
DIDA~\cite{zhang2022dynamic} is the first work tackling the distribution shifts by a spatio-temporal attention network to determine the time-invariant patterns with coarse-grained causal inference techniques. 
Similar to DIDA~\cite{zhang2022dynamic}, SILD~\cite{zhang2023spectral} addresses the dynamic graph OOD problem by identifying and leveraging invariant patterns in the spectral domain. 
However, these two methods omit the importance of exploring the impact of environments on dynamic graphs, which are of vital importance in the dynamic scenario, as dynamic environments are coupled in both spatial and temporal perspectives. 
EAGLE~\cite{yuan2023environment} first models the dynamic and complex intertwined environments and then utilizes spatio-temporal invariant patterns within these environments. 
However, it overlooks the evolution of these environments, which is essential for identifying invariant patterns in dynamic graph settings. 
Furthermore, existing studies assume that the OOD component conforms to a fixed distribution~\cite{li2022graphde}, which is not applicable in dynamic settings.

\change{Recent studies complement dynamic OOD generalization with prompt mechanisms. EpoD~\cite{yang2024improving} proposes a self-prompted scheme to infer latent environments and utilize them by treating dynamic subgraphs as mediators in a structural causal pathway, thereby enhancing robustness to temporal environment shifts. PURE~\cite{wu2024pure} addresses OOD adaptation in dynamical systems by learning time-evolving prompts governed by a graph ODE: prompts are initialized from multi-view frequency-domain contexts and then evolved via ODE-based interpolation to adapt forecasting models under temporal distribution shifts, with mutual-information regularization to improve robustness. ProST~\cite{xia2025prost} focuses on spatio-temporal prediction and leverages dynamic graph pre-training to build a premise graph, upon which subgraph prompts (optimized via meta-learning) are used to infer explicit future snapshots.}

\change{While these works demonstrate the promise of prompt-based adaptation or snapshot inference in dynamic scenarios, they do not explicitly model environment evolution as a distributional process nor provide intervention-based, node-wise invariant-pattern identification.}

Table~\ref{tab:comparison} compares our \modelname~with existing graph OOD generalization methods in terms of the graph OOD generalization capabilities, \ie, factor disentangling, invariant pattern recognition, causal inference, and environment extrapolation. 
Nevertheless, the application of invariant learning to dynamic graphs with evolving environments remains largely unexplored, which can be attributed primarily to the intricacies involved in the analysis of both spatial and temporal invariant patterns.
\change{\modelname~performs sequential environment inference via an environment sequential VAE and conducts fine-grained causal interventions to isolate spatio-temporal invariant patterns, accompanied by theoretical guarantees on fulfilling invariance and bounding OOD error in non-stationary environments.}

\section{Problem Formulation}
\label{sec:formulation}
A \textbf{dynamic graph} across a set
of discrete time steps is defined as a sequence of discrete graph snapshots observed over time $\mathcal{G} = \left\{ G^t \right\}_{t=1}^{T}$, where $T$ denotes the time length, $G^{t} = \left(\mathcal{V}^{t}, \mathcal{E}^{t}\right)$ denotes the snapshot at $t$, $\mathcal{V}^{t}$ denotes the node set and $\mathcal{E}^{t} $ denotes the edge set. 
Let $\mathbf{A}^{t} \in \{0,1\}^{N \times N}$ and $\mathbf{X}^{t} \in \mathbb{R}^{N \times d}$ be the adjacency matrix and node feature matrix in timestamp $t$, where $N = | \mathcal{V}^t |$ denotes the number of nodes and $d$ denotes the feature dimensionality. 

\textbf{Future link prediction} task aims to predict the presence of edges in upcoming time periods based on historical graphs, \ie, given $G^{1:T}$, the task aims to learn a predictor $f_{{\boldsymbol{\theta}}}: \mathcal{V} \times \mathcal{V} \mapsto \{0,1\}$ for links at time $T+1$. 

\textbf{Node Classification} task focuses on predicting the labels of nodes in forthcoming time steps based on historical graph data, \ie, given $G^{1:T}$, the task aims to learn a classifier $f_{{\boldsymbol{\theta}}}: \mathcal{V} \mapsto \{1,\cdots, C\}$ for nodes at time $T+1$.

\textbf{OOD on dynamic graphs} refers to the presence of distribution shifts, which occur when $p_\mathrm{test}(G^{1:T}, Y^{T}) \neq p_\mathrm{train}(G^{1:T}, Y^{T})$, \change{where $Y^{T}$ denotes the label at time $T$}. 
It is widely recognized that distribution shifts in graph data are induced by the latent environments~\cite{wu2022handling, rojas2018invariant, buhlmann2020invariance, gong2016domain, arjovsky2019invariant}, which influence the generation of graph data. 
Following~\cite{sinha2017certifying,chen2022bagnn}, an environment variable $\mathbf{e}$ is introduced as a potential unknown indicator for the dynamic graph, drawn from specific distributions.
In this paper, we aim to learn a robust and generalized model $f_{{\boldsymbol{\theta}}}$ to precisely predict the links or class labels in the unseen future environment. 
The optimization objective can be reformulated as: 
\begin{equation}\label{eq:ood}
    \min_{{\boldsymbol{\theta}}} \max_{\mathbf{e} \in \mathbf{E}} \mathbb{E}_{(G^{1:T}, Y^{T}) \sim p(G^{1:T}, Y^{T} | \mathbf{e})} \left [ \ell\left(f_{{\boldsymbol{\theta}}}\left(G^{1:T}\right ), Y^{T}\right ) | \mathbf{e}\right],
\end{equation}
where $\mathbf{E}$ represents the set of environments and the min-max strategy seeks to minimize the empirical risk under the most challenging environment. 
However, it is not practical to optimize Eq.~\eqref{eq:ood} directly since the environment $\mathbf{e}$ is not observable in the data. 
Besides, the environments in the training set may not cover all the environments in the real-world scenarios, \ie, $\mathbf{E}_{\mathrm{train}} \subseteq \mathbf{E}_{\mathrm{test}}$.

\section{\modelname : Evolving
Graph Learning for OOD Generalization}
\label{sec:method}
This section will introduce a framework named \textbf{\modelname}, to solve the OOD generalization problem for dynamic graphs. 
Fig.~\ref{fig:model} shows the overall architecture of \modelname. 
\modelname~follows a \textbf{environment “Modeling-Inferring-Discriminating-Generalizing” paradigm}. 

\begin{figure*}[t]
    \centering
    \includegraphics[width=1\linewidth]{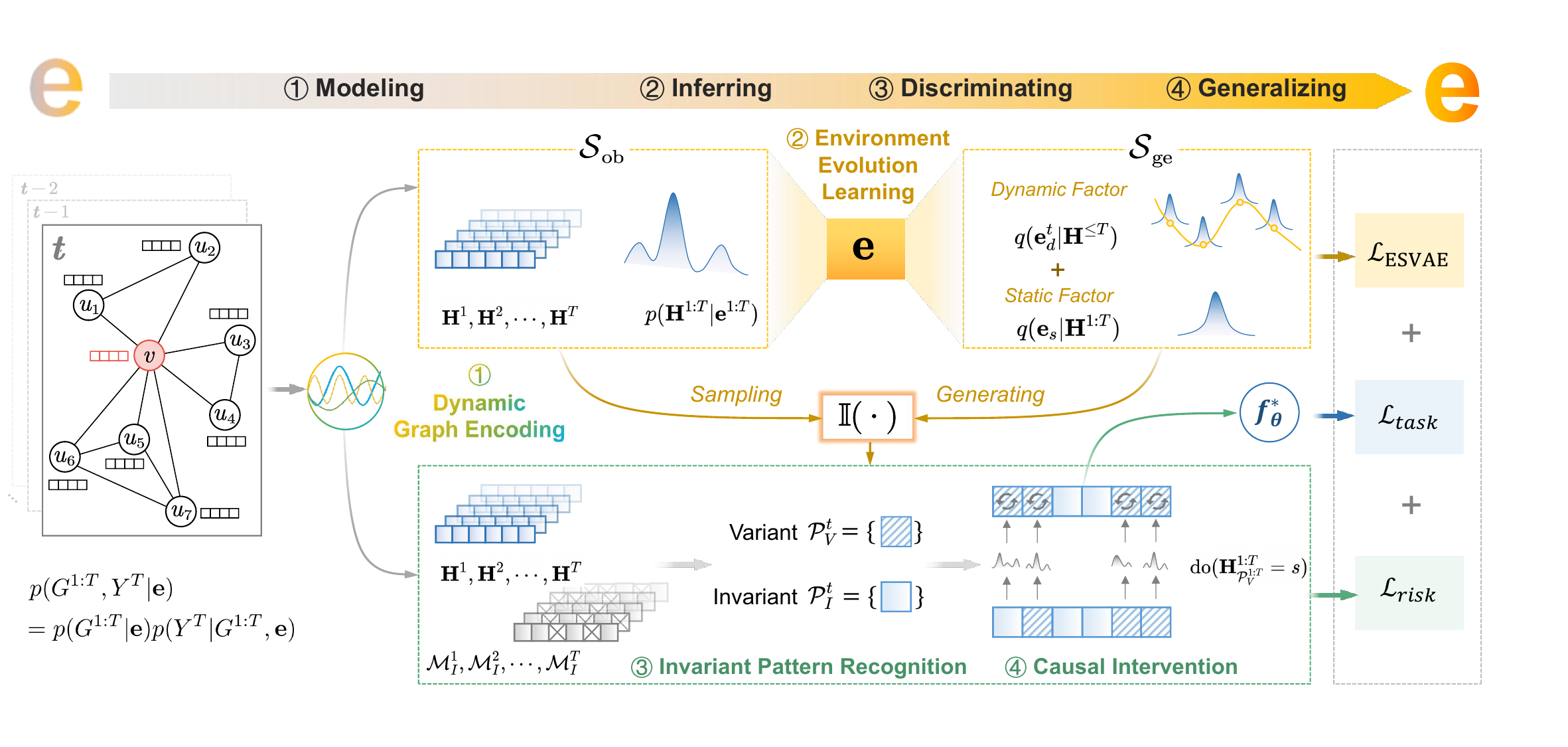}
    \caption{
    The framework of \modelname. \modelname~is following an environment “Modeling-Inferring-Discriminating-Generalizing” paradigm. 
    \change{Step \ding{172}: For a dynamic graph, we first use a spatio-temporal graph attention encoding mechanism to generate node encodings $\{\mathbf{H}^1,\mathbf{H}^2,\cdots,\mathbf{H}^T\}$ and model the latent environments $\mathbf{e}$ (Section~\ref{subsec:graph_encoding}). }
    \change{Step \ding{173}: The environment sequential auto-encoder (ESVAE) learns the environment evolution and then infers the future environment distribution (Section~\ref{subsec:ESVAE}). }
    \change{Step \ding{174}: The environment-aware invariant pattern recognition mechanism $\mathbb{I}(\cdot)$ recognizes the spatio-temporal invariant patterns $\mathcal{P}_I^t$ and variant patterns $\mathcal{P}_V^t$ within the environment for stable predictions (Section~\ref{subsec:invariant_recognition}).} 
    \change{Step \ding{175}: Finally, \modelname~performs node-wise fine-grained causal interventions with sampled and generative environment instances $\mathcal{S}_{\mathrm{ob}}$ and $\mathcal{S}_{\mathrm{ge}}$ to generalize to non-stationary environments (Section~\ref{subsec:intervention}). }
    }
    \label{fig:model}
\end{figure*}

\subsection{Environment Modeling by Dynamic Graph Encoding}
\label{subsec:graph_encoding}
To capture the interaction patterns of the dynamic graph, we first utilize a spatio-temporal graph attention encoding network to make each node attend to its dynamic neighborhood simultaneously. 
We follow the ``spatial first, temporal second'' paradigm~\cite{zhang2022dynamic,li2022autost} to learn time-aware node representations. 
Specifically, we first project the feature $\mathbf{x}_{v}^{t}$ of node $v$ at time $t$: 
\begin{equation}
\label{eq:proj}
    \mathbf{z}_{v}^{t} = \sigma\left(\mathbf{W}^{\top}_1 \left(\mathbf{x}_{v}^{t}\oplus \mathrm{RTE}\left(t\right)\right) + \mathbf{b}\right),
\end{equation}
where $\mathbf{W}_1 \in \mathbb{R}^{d \times d'}$ and $\mathbf{b} \in \mathbb{R}^{d'}$ are learnable parameters, $\mathrm{RTE}(\cdot)$ represents the relative time encoding function, $d'$ indicates the dimensionality of $\mathbf{z}_{v}^{t}$, $\sigma(\cdot)$ is the activation function, and $\oplus$ denotes element-wise addition.
The spatial convolutions are formulated as (the layer superscript is omitted for simplicity):
\begin{align}
    \hat{\mathbf{z}}_{v}^{t} &= \mathbf{z}_{v}^{t} + \sum_{u \in \mathcal{N}^{t}(v)} a_{(u,v)}^{t} \mathbf{z}_{u}^{t}, 
\end{align}
where $\mathcal{N}^{t}(v)$ is node $v$'s neighbors at time $t$, \change{$a_{(u,v)}^{t}$ is the attention score of edge $(u,v)$ at time $t$ calculated by the multi-head attention mechanism~\cite{velivckovic2017graph},} 
and $\hat{\mathbf{z}}_{v}^{t}$ is the updated node representation. 
Subsequently, we can perform temporal convolutions comprehensively for graph snapshots at time $t$ and all those preceding:
\begin{equation}\label{eq:repre}
    \mathbf{h}_{v}^{t} = \frac{1}{t} \sum_{\tau =1}^{t}\hat{\mathbf{z}}_{v}^{\tau}.
\end{equation}
where $\mathbf{h}_{v}^{t}$ denotes the representation of node $v$ at time $t$. 
We can also extend the temporal convolutions to other sequential convolutions or compound them with attention mechanisms. 
Here we use the widely-used attention as in~\cite{velivckovic2017graph}. 
Then we can stack $L$ layers to generate the node representations. 
In this way, each $\mathbf{h}_{v}^{t}$ not only contains information about the current time but also summarizes the co-evolution trend of the neighbors from the recent past to the near future. 

\subsection{Environment Inferring by Evolution Learning}
\label{subsec:ESVAE}
To infer the distribution of evolving environments, we propose an environment sequential variational autoencoder (ESVAE), which can learn from the environment instances given graph snapshots $G^{1:T}$ and infer the incoming environment pattern. 

It can be considered that the link occurrence is decided by the underlying environment $\mathbf{e}_t$ at $t$, which can be revealed by the graph features and structure. 
To this end, the learned node embeddings $\mathbf{H}^t=\{\mathbf{h}_{v}^{t}\}$ can be regarded as environment instances drawn from the ground-truth environment distribution $\mathbf{e}^t$. 
The ESVAE is used to infer the environment distribution $\mathbf{e} \sim q(\mathbf{e}|\mathbf{H}^{1:T})$ from $\{\mathbf{H}^1,\mathbf{H}^2,\cdots,\mathbf{H}^T\}$. 
To capture the evolving pattern of the non-stationary environments, we disentangled the environment representation in the latent space into a static factor $\mathbf{e}_s$ and a dynamic factor $\mathbf{e}^{1:T}_d$. 

\textbf{Priors. }
Specifically, the prior of $\mathbf{e}_s$ can be defined as a standard Gaussian distribution: $\mathbf{e}_s\sim\mathcal{N}(0,1)$. 
The dynamic latent variables $\mathbf{e}^{1:T}_d$ follow a sequential prior
\begin{equation}
    \mathbf{e}^{t}_d|\mathbf{e}^{<t}_d\sim\mathcal{N}\left(\mu_t, \mathbf{D}_{diag}\left(\sigma^2_t\right)\right),
\end{equation}
where $\left[\mu_t, \sigma^2_t\right]=\phi^{prior}_R(\mathbf{e}_{<t}^d)$, $\mu_t$ and $\sigma^2_t$ are the parameters conditioned on all previous dynamic latent variables $\mathbf{e}^{<t}_d$, and $\mathbf{D}_{diag}$ is the diagonal matrix. 
The model $\phi^{prior}_R$ can be parameterized by a recurrent network by updating the hidden states temporally. 
We use LSTM~\cite{hochreiter1997long} as the $\phi^{prior}_R$ in this paper. 
Then we can factorize the prior of $\mathbf{e}$ as: 
\begin{equation}
    p(\mathbf{e})=p(\mathbf{e}_s)p\left(\mathbf{e}^{1:T}_d\right)=p(\mathbf{e}_s)\prod_{t=1}^{T} p\left(\mathbf{e}^{t}_d|\mathbf{e}^{<t}_d\right).
\end{equation}

\textbf{Generation. }
The generation distribution of the environment in timestep $t$ is conditioned on $\mathbf{e}_s$ and $\mathbf{e}^{t}_d$:
\begin{equation}
    \mathbf{H}^t|\mathbf{e}_s,\mathbf{e}^{t}_d\sim\mathcal{N}\left(\mu_{G,t}, diag\left(\sigma^2_{G,t}\right)\right),
\end{equation}
where $\left[\mu_{G,t}, diag\left(\sigma^2_{G,t}\right)\right]=\phi^{Dec}\left(\mathbf{e}_s,\mathbf{e}^{t}_d\right)$ and the decoder $\phi^{Dec}$ can be a highly flexible function, and we use a 2-layer multilayer perceptron here. 
The overall generative process can be formulated by the factorization: 
\begin{equation}
    p\left(\mathbf{H}^{1:T},\mathbf{e}_s,\mathbf{e}^{1:T}_d\right)=p\left(\mathbf{e}_s\right)\prod_{t=1}^{T}p\left(\mathbf{H}^t|\mathbf{e}_s,\mathbf{e}^{t}_d\right) p\left(\mathbf{e}^{t}_d|\mathbf{e}^{<t}_d\right).
\end{equation}

\textbf{Inference. }
The posterior distributions can be approximated using variational inference by the ESVAE:
\change{
\begin{equation}
    \mathbf{e}_s\sim\mathcal{N}\left(\mu_s, diag\left(\sigma^2_s\right)\right), \mathbf{e}_d\sim\mathcal{N}\left(\mu_t, diag\left(\sigma^2_t\right)\right),
\end{equation}
}
where $\left[\mu_s, diag\left(\sigma^2_s\right)\right]=\phi^{Enc}_s\left(\mathbf{H}^{1:T}\right)$ and $\left[\mu_t, diag\left(\sigma^2_t\right)\right]=\phi^{Enc}_d\left(\mathbf{H}^{\le t}\right)$. 
The static factor $\mathbf{e}_s$ is conditioned on the whole sequence, while the dynamic factor $\mathbf{e}_d$ is inferred by a recurrent encoder $\phi^{Enc}_d$ and only conditioned on graphs of the previous timestamps. 
The environment inference model can be factorized as:
\begin{equation}
    q\left(\mathbf{e}_d^{1:T},\mathbf{e}_s|\mathbf{H}^{1:T}\right)=q\left(\mathbf{e}_s|\mathbf{H}^{1:T}\right)\prod_{t=1}^Tq\left(\mathbf{e}_d^t|\mathbf{H}^{\le t}\right).
\end{equation}

\textbf{Optimization. }
The objective of ESVAE can be formulated as the negative variational lower bound, computed timestep-wise:
\begin{equation}
\begin{aligned}
      \mathcal{L}_{\rm SVAE}= &\mathbb{E}_{q\left(\mathbf{e}_s,\mathbf{e}_d^{1:T}|\mathbf{H}^{1:T}\right)}\left[-\sum_{t=1}^T\log p\left(\mathbf{H}^t|\mathbf{e}_s,\mathbf{e}_d^t\right)\right]\\
      &+\mathcal{D}_{\rm KL}\left(q\left(\mathbf{e}_s||\mathbf{H}^{1:T}\right)||p\left (\mathbf{e}_s\right)\right)\\
      &+\sum_{t=1}^T\mathcal{D}_{\rm KL}\left(q\left(\mathbf{e}_d^t|\mathbf{H}^{\le t}\right)||p\left(\mathbf{e}_d^t|\mathbf{e}_d^{<t}\right)\right).
\end{aligned}
\end{equation}

To encourage the static factor $\mathbf{e}_s$ to exclude dynamic information, we aim for $\mathbf{e}_s$ to remain relatively stable even when the dynamic information undergoes significant changes.
In an ideal scenario, the static factor $\mathbf{e}_s$ learned from the node representation sequence should be similar to the static factor learned from its corresponding shuffled sequence.
To achieve this, we treat the shuffled sequence as a positive sample of the static factor and select another sequence at random as the negative sample.
A triplet loss $\mathcal{L}_{s}$ is introduced utilizing a triplet of static factors:
\begin{equation}
    \mathcal{L}_{s}=\max\left ( D\left(\mathbf{e}_s,\mathbf{e}^{pos}_s\right )-D\left(\mathbf{e}_s,\mathbf{e}^{neg}_s\right )+m,0\right ),
\end{equation}
where $D(\cdot)$ is the Euclidean distance function and $m$ is the margin. 

To encourage the dynamic $\mathbf{e}_d$ to carry adequate dynamic information of evolution, we design an auxiliary task to utilize dynamic information-related signals as the regularization imposed on $\mathbf{e}_d$. 
The dynamic representation $\mathbf{e}_d$ can be learned by forcing it to predict the dynamics of graphs. 
To this end, we first use the K-means algorithm~\cite{hartigan1979algorithm} to split all nodes $v\in \mathcal{V}$ into $m$ clusters $\{\mathcal{V}_1, \mathcal{V}_2, \cdots, \mathcal{V}_m\}$ and use the structural changes of these clusters to represent the dynamics of the whole graph. 
Here we set $m=10$. 
To this end, we use the 1-order structural entropy~\cite{li2016structural} to estimate the uncertainty of every node for every timestamp. 
\begin{equation}
    H^1(G^t)=-\sum_{v\in \mathcal{V}}\frac{d^t_{v}}{vol(G^t)}\log_2\frac{d^t_{v}}{vol(G^t)},
\end{equation}
where $d^t_v$ is node $v$'s degree in $t$ and $vol(G^t)$ is the sum of the degree of all nodes $\mathcal{V}$ in $G^t$. 
The one-dimensional $H^1(G^t)$ measures the uncertainty of $\mathcal{V}$ in $G^t$ and the uncertainty of node $v$ is denoted as $SE^t_v$. 
Then the average of uncertainty magnitudes for every node cluster can be computed to represent the structural features. 
We set the indices of clusters with the top-k largest magnitudes at $t$ as the pseudo label $Y_{pse}^t$ for prediction. 
\begin{equation}
    SE^t_{\mathcal{V}_i}=\frac{1}{|\mathcal{V}_i|}\sum_{v \in \mathcal{V}_i}SE^t_v, \mathcal{V}_i\in\{\mathcal{V}_1, \mathcal{V}_2, \cdots, \mathcal{V}_m\}.
\end{equation}
The objective function can be formulated as: 
\begin{equation}
    \mathcal{L}_{d}=CE(\phi_{SE}(\mathbf{e}_d^t),Y_{pse}^t),
\end{equation}
where $CE(\cdot)$ denotes the cross-entropy function and $\phi_{SE}(\cdot)$ denotes the cluster predictor. 

Then the overall objective of ESVAE is:
\begin{equation}
\label{eq:loss_ESVAE}
    \mathcal{L}_{\mathrm{ESVAE}} =  \mathcal{L}_{\mathrm{SVAE}}+ \alpha_1\mathcal{L}_{s}+ \alpha_2\mathcal{L}_{d},
\end{equation}
where $\alpha_1$ and $\alpha_2$ are the loss weight parameters.

\subsection{Environment Discriminating by Invariant Pattern Recognition}
\label{subsec:IPR}
We intend to identify spatio-temporal invariant patterns by utilizing the inferred environmental distributions, which adhere to the Independent Causal Mechanism (ICM) assumption~\cite{pearl2009causal, pearl2010causal, peters2017elements}. 

\textbf{Invariance Assumption.} 
\change{To encourage our \modelname~to rely on the invariant correlations within node representations and target labels that can generalize under distribution shifts, 
we propose uncovering the node-wise invariant pattern by introducing a pair of complementary feature masks, denoted as $\mathcal{M}_{I}$ and $\mathcal{M}_{V}$.}
\change{Intuitively, $\mathcal{M}_{I}$ acts as a selector that highlights the dimensions of node representations $\mathbf{H}^t$ which correspond to invariant patterns whose correlations that remain stable across temporal dynamics and distribution shifts. 
In contrast, $\mathcal{M}_{V}$ plays the complementary role, filtering out the dimensions that are more variant, which may fluctuate with time or environment changes.}
\change{Formally, the invariant representation is obtained as $\mathbf{H}_I^t = \mathcal{M}_I^t \odot \mathbf{H}^t$, and $\mathbf{H}_V^t$ is complementary to $\mathbf{H}_I^t$.}
\change{The masks are optimized jointly with the model to automatically partition each node embedding into invariant and variant subspaces. This design ensure that predictions mainly rely on $\mathbf{H}_I^t$, which is less sensitive to distribution shifts.}
We can make Assumption~\ref{asm:invariance}.
\begin{asm}
\label{asm:invariance}
    Given a dynamic graph $G^{1:T}$, there exist invariant patterns $\mathcal{P}_I^t$ and variant patterns $\mathcal{P}_V^t$ in each node representation that lead to generalized out-of-distribution prediction. Node representation $\mathbf{H}_I^t$ under $\mathcal{P}_I^t$ uncovered by $\mathcal{M}_{I}$ should satisfy:
    \begin{itemize}[leftmargin=*]
        \item Invariance Property: $\forall \mathbf{e} \in \mathbf{E}$, $\left(\mathcal{M}_{I}^t \odot \mathbf{H}^t\right) \models \mathcal{P}_I^t$, s.t. $p\left(Y^{t}|\mathcal{P}_I^t, \mathbf{e}^t\right) = p\left(Y^{t}|\mathcal{P}_I^t\right)$;
        \item Sufficient Condition: $Y^{t} = f\left(\mathbf{H}_I^t\right)+\epsilon$, i.e., $Y^t \perp \mathcal{P}_V^t \mid \mathcal{P}_I^t$, where $f\left(\cdot\right)$ is the node classifier or link predictor, and $\epsilon$ is the independent noise.
    \end{itemize}
    where $\models$ denotes the logic structure $\mathcal{M}_{I}^t \odot \mathbf{H}^t$ models the pattern $\mathcal{P}_I^t$. $\perp$ and $|$ represent the variable $Y^t$ that is independent of $\mathcal{P}_V^t$ but only relies on $\mathcal{P}_I^t$. 
\end{asm}

Assumption~\ref{asm:invariance} suggests that the node representations $\mathbf{H}_I^t$, when governed by $\mathcal{P}_I^t$, include invariant causal features. These features are instrumental in facilitating generalized OOD predictions across different time periods. The Invariance Property ensures the $\mathcal{M}_{I}^t$ mask can always recognize the time-invariant pattern $\mathcal{P}_I^t$. The Sufficient Condition ensures that the attributes within $\mathbf{H}_I^t$ are sufficiently robust for the predictor to accurately forecast outcomes.

\textbf{Invariant Pattern Recognition.} To learn the node-wise invariant patterns, we propose the implementation of the learnable complementary dimensional masks $\mathcal{M}_{I}$ and $\mathcal{M}_{V}$ by Proposition~\ref{prop:mask}.

\begin{prop}
\label{prop:mask}
    Given a series of node representations $\mathbf{h}_v^{1:t} \in \mathbb{R}^{t\times d}$, we initialize the learnable invariant dimensional mask $\mathcal{M}_{I}^t(v)$ by:
    \begin{equation}
    \label{eq:mask_invariant}
        \mathcal{M}_{I}^t(v) = \left[\mathbb{I}\left(\mathrm{Var}\left(\mathbf{h}_v^{1:t} > \delta \right)\right )\right] \odot \mathbf{W}_{I},
    \end{equation}
    where $\mathbb{I}(\cdot)$ is the bit-wise indicator function, $\mathrm{Var}(\cdot)$ calculates bit variance within the past $t$ times, $\delta$ is the decision threshold, and $\mathbf{W}_{I}$ is the learnable parameters. Then, $\mathcal{M}_{V}^t(v)$ can be calculated as:
    \begin{equation}
    \label{eq:mask_variant}
        \mathcal{M}_{V}^t(v) = \mathbf{1} - \mathcal{M}_{I}^t(v).
    \end{equation}
\end{prop}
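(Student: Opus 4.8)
The plan is to show that the pair $(\mathcal{M}_I^t(v),\mathcal{M}_V^t(v))$ built in Eqs.~\eqref{eq:mask_invariant} and \eqref{eq:mask_variant} is (i) well defined and mutually complementary and (ii) consistent with the Invariance Property and the Sufficient Condition of Assumption~\ref{asm:invariance}, working under the Independent Causal Mechanism (ICM) assumption and the SCM of Fig.~\ref{fig:SCM}. In other words, the proposition is read as the claim that this particular instantiation of the masks is a valid realization of the abstract invariant/variant split postulated in Assumption~\ref{asm:invariance}.

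First I would dispatch complementarity and trainability, which are immediate: Eq.~\eqref{eq:mask_variant} gives $\mathcal{M}_I^t(v)+\mathcal{M}_V^t(v)=\mathbf{1}$, so $\mathbf{H}_I^t+\mathbf{H}_V^t=\mathbf{H}^t$ and the masks induce an additive partition of each node embedding into an invariant subspace and its complement; the bit-wise indicator $\mathbb{I}(\cdot)$ only fixes the support of $\mathcal{M}_I^t(v)$ — a structural prior that is not differentiated through — while the factor $\mathbf{W}_I$ carries the gradient, so the mask is genuinely learnable and its support is stable under optimization. Next, for the Invariance Property I would argue that the per-coordinate temporal variance $\mathrm{Var}(\mathbf{h}_v^{1:t})$ acts as a statistic separating environment-stable coordinates from environment-sensitive ones: by the SCM and ICM, coordinates of $\mathbf{H}^t$ driven by the invariant causal component $C$ inherit a conditional law $p(Y^t\mid\cdot)$ that does not depend on $\mathbf{e}^t$, whereas coordinates reached through the spurious channel $C\leftrightarrow S$ fluctuate as the non-stationary environment $\mathbf{e}^{1:T}$ evolves and thus land on the other side of the threshold $\delta$. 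Taking $\mathcal{P}_I^t$ to be the pattern supported on the coordinates that $\mathbb{I}(\cdot)$ retains, we get $\mathcal{M}_I^t\odot\mathbf{H}^t\models\mathcal{P}_I^t$, and since any measurable reweighting — here by $\mathbf{W}_I$ — of an $\mathbf{e}$-invariant pattern is again $\mathbf{e}$-invariant, $p(Y^t\mid\mathcal{P}_I^t,\mathbf{e}^t)=p(Y^t\mid\mathcal{P}_I^t)$ for every $\mathbf{e}\in\mathbf{E}$.

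For the Sufficient Condition I would then note that the coordinates discarded into $\mathcal{M}_V^t(v)$ carry, under the SCM, only the spurious $S$-information, which is d-separated from $Y^t$ once the causal ancestors gathered in $\mathcal{P}_I^t$ are conditioned on; this yields $Y^t\perp\mathcal{P}_V^t\mid\mathcal{P}_I^t$, hence the existence of a predictor with $Y^t=f(\mathbf{H}_I^t)+\epsilon$ and $\epsilon$ independent noise. The learnable $\mathbf{W}_I$, optimized jointly with $f$ through the downstream objective, is precisely what realizes such a composition $f\circ(\mathcal{M}_I^t\odot\cdot)$, so the Sufficient Condition is met by construction together with training.

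The hard part will be the Invariance-Property step, namely turning the heuristic ``the temporal variance of an embedding coordinate crossing $\delta$ signals that it is variant'' into a rigorous equivalence with ``membership in the $S$-driven component.'' I expect this to require an additional faithfulness assumption on the dynamic-graph encoder of Section~\ref{subsec:graph_encoding} — that environment drift surfaces as coordinate-wise variance in $\mathbf{H}^t$ rather than in the $C$-coordinates — or, alternatively, a stability argument exploiting that the ESVAE of Section~\ref{subsec:ESVAE} has already disentangled the static factor $\mathbf{e}_s$ from the dynamic factor $\mathbf{e}_d^{1:T}$, so that the residual variance of $\mathbf{H}^t$ along the window is attributable to $\mathbf{e}_d^{1:T}$. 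I would make this bridging assumption explicit; once it is in place the remaining implications follow from d-separation in the SCM together with the invariance-under-reweighting observation, and the concrete value of $\delta$ only controls how tightly the retained support matches the true $C$-coordinates rather than the qualitative conclusion.
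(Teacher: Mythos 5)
Your reading of Proposition~\ref{prop:mask} is stronger than what the paper actually claims or proves, and this is where the attempt breaks down. In the paper this proposition is purely constructive: it specifies how the complementary masks are \emph{initialized} (Eqs.~\eqref{eq:mask_invariant}--\eqref{eq:mask_variant}), and the text immediately following it states explicitly that the proposition ``alone does not entirely ensure the global accuracy of mask identification'' and that the theoretical guarantee comes from joint optimization with the intervention loss $\mathcal{L}_{\mathrm{risk}}$ of Eq.~\eqref{eq:loss_risk}. Accordingly, the paper offers no proof of Proposition~\ref{prop:mask}; the claims you are trying to establish --- that the masked representation satisfies the Invariance Property and the Sufficient Condition of Assumption~\ref{asm:invariance} --- are instead argued in Propositions~\ref{prop:encourage} and~\ref{prop:bound}, via mutual-information/KL upper bounds on the full training objective, not via any property of the variance-threshold initialization.

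The concrete gap in your argument is the step you yourself flag as ``the hard part'': the equivalence between ``temporal variance of a coordinate of $\mathbf{h}_v^{1:t}$ exceeds $\delta$'' and ``that coordinate is driven by the spurious component $S$.'' Nothing in the paper's assumptions (the SCM of Fig.~\ref{fig:SCM}, ICM, or Assumption~\ref{asm:invariance}) licenses this; a coordinate can be temporally stable yet spuriously correlated with $Y^t$ in the training environments, or causally relevant yet time-varying, so the d-separation and ``invariance under reweighting by $\mathbf{W}_I$'' steps that you build on top of it do not follow. Patching this with an extra faithfulness assumption on the encoder or on the ESVAE disentanglement, as you propose, would prove a different (and stronger) statement than the one in the paper under hypotheses the paper does not make. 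The parts of your argument that do hold --- complementarity $\mathcal{M}_I^t(v)+\mathcal{M}_V^t(v)=\mathbf{1}$ and learnability through $\mathbf{W}_I$ --- are immediate from the definitions and are essentially all the content Proposition~\ref{prop:mask} carries; the remainder of the burden should be discharged the way the paper does it, through the loss-level analysis of Proposition~\ref{prop:encourage}.
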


Proposition~\ref{prop:mask} offers a practical approach for initializing the pair of masks, with the variability in the bits indicating the current state of invariance and serving as an optimal starting point for optimization.
It is important to recognize that Proposition~\ref{prop:mask} alone does not entirely ensure the global accuracy of mask identification. However, it should be optimized in conjunction with the $\mathcal{L}_{\mathrm{risk}}$ loss, which provides a theoretical guarantee.
Supported by Assumption \ref{asm:invariance} and Proposition \ref{prop:mask}, the spatio-temporal invariant patterns $\mathcal{P}_I^t(v)$ and variant patterns $\mathcal{P}_V^t(v)$ can be distinguished within evolving environments for each node by applying $\mathcal{M}_{I}^t(v)$ and $\mathcal{M}_{V}^t(v)$, respectively.

\subsection{Environment Generalizing by Causal Intervention}
\label{subsec:intervention}

Based on the inferred environment distributions from Section~\ref{subsec:ESVAE} and the spatio-temporal invariant patterns identified in Section~\ref{subsec:IPR}, we introduce a fine-grained intervention mechanism that is applied at the node level for causal inference. This mechanism is designed to help the model distinguish between spurious and causal correlations.

It is time-consuming and intractable to directly intervene in variant patterns node-wise. 
Therefore, we propose an approximate method for fine-grained spatio-temporal interventions using a combination of observed samples $\mathcal{S}_{\mathrm{ob}}$ and generated samples $\mathcal{S}_{\mathrm{ge}}$ by sampling and replacing variant patterns $\mathcal{P}^t_I(v)$. 
The observed samples can be formulated as: 
\begin{equation}
\label{eq:sob}
    \bigcup_{v \in \mathcal{V}} \bigcup_{t=1}^{T} \{\mathbf{h}_{v}^{t}\} \in \mathbb{R}^{N\times T\times d'} \xlongequal[]{\rm def}  \mathcal{S}_{\mathrm{ob}}.  
\end{equation} 
By directly sampling the latent variable $\mathbf{e}$ from its distribution $p(\mathbf{e})$ for $\mathcal{S}_{\mathrm{ge}}$, we can synthesize samples from the conditional distribution $p(\mathbf{h}|\mathbf{e})$, thereby substantially enriching the variety of the generated samples.
\begin{equation}
    \bigcup_{v \in \mathcal{V}} \bigcup_{t=1}^{T} \{\mathbf{h}_{v}^{t}\sim p(\mathbf{h}|\mathbf{e}^t)|\mathbf{e}^t\sim p(\mathbf{e})\} \xlongequal[]{\rm def}  \mathcal{S}_{\mathrm{ge}}. 
\end{equation}

For representations $\mathbf{h}_{v}^{t}$ of node $v$ at time $t$, $\mathcal{P}^t_I(v)$ and $\mathcal{P}^t_V(v)$ can be obtained by applying $\mathbb{I}(\cdot)$. 
As $\mathcal{P}^t_V(v)$ represents dimensionality indices related with variant patterns in $\mathbf{h}_{v}^{t}$, we generate the intervention set $\mathbf{s}_v$ by randomly sampling instances and then perform node-wise replacing: 
\begin{equation}
\label{eq:intervention}
    \mathbf{h}_{v}^{\mathbf{e},t}\left[\mathcal{P}^t_V(v)\right] := \mathbf{s}_v, \quad \mathbf{s}_{v} = \left \{ s | s \in \mathcal{S}_{\mathrm{ob}} \cup \mathcal{S}_{\mathrm{ge}} \right \}.
\end{equation}
Let $\mathcal{S}=\mathcal{S}_{\mathrm{ob}}\cup \mathcal{S}_{\mathrm{ge}}$ be the observed and generated environment instances for interventions. 
The causal intervention loss can be formulated as: 
\begin{equation}
\label{eq:loss_risk}
\begin{aligned}
    \mathcal{L}_{\mathrm{risk}} = {\rm Var}&\left\{ 
    \mathbb{E}
    \left [ \ell\left(f_{{\boldsymbol{\theta}}}\left(G^{1:T}\right), Y^{T} | \operatorname{do}(\mathbf{H}^{1:T}_{\mathcal{P}^{1:T}_V} = s)\right)\right],\right.\\
    &\left.s\in \mathcal{S},\mathbf{e}\sim q(\mathbf{e}),(G^{1:T}, Y^{T}) \sim p(G^{1:T}, Y^{T} | \mathbf{e})  \right\},     
\end{aligned}
\end{equation}
where $q(\mathbf{e})$ represents the distribution of $\mathbf{e}$ discussed in Section~\ref{subsec:ESVAE}, while $\operatorname{do}(\cdot)$ denotes the $do$-calculus, which is employed to intervene and manipulate the variant patterns.

Combining the future link prediction task loss, the ESVAE loss in Eq.~\eqref{eq:loss_ESVAE} and the casual intervention loss in Eq.~\eqref{eq:loss_risk}, the overall loss of \modelname~is:
\begin{equation}
\label{eq:final_loss}
    \mathcal{L} = \mathcal{L}_{\mathrm{task}} + \beta_1\mathcal{L}_{\mathrm{risk}} + \beta_2 {\mathcal{L}}_{\mathrm{ESVAE}},
\end{equation}
\begin{equation}
\label{eq:loss_task}
    \mathcal{L}_{\mathrm{task}} = \mathbb{E}_{\mathbf{e}\sim q_\phi(\mathbf{e}), (G^{1:T}, Y^{T}) \sim p(G^{1:T}, Y^{T} | \mathbf{e})} \left [ \ell\left(g(\mathbf{H}^{1:T}_{\mathcal{P}^{1:T}_I}), Y^{T}\right)\right],
\end{equation}
where $\beta_1$ and $\beta_2$ are two hyperparameters that control the contribution of the $\mathcal{L}_{\mathrm{risk}}$ and $\mathcal{L}_{\mathrm{ESVAE}}$, respectively. 
We can have the following two propositions: 
\begin{prop}
\label{prop:encourage}
    Minimizing Eq.~\eqref{eq:final_loss} encourages \modelname~to fulfill the Invariance Property and the Sufficient Condition as outlined in Assumption~\ref{asm:invariance}.
\end{prop}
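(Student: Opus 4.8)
The plan is to connect the two components of the total loss in Eq.~\eqref{eq:final_loss} that carry structure—$\mathcal{L}_{\mathrm{task}}$ in Eq.~\eqref{eq:loss_task} and $\mathcal{L}_{\mathrm{risk}}$ in Eq.~\eqref{eq:loss_risk}—to the two clauses of Assumption~\ref{asm:invariance}. I would first argue that the $\mathcal{L}_{\mathrm{risk}}$ term drives the Invariance Property. The observation is that $\mathcal{L}_{\mathrm{risk}}$ is a variance taken over interventions $\operatorname{do}(\mathbf{H}^{1:T}_{\mathcal{P}^{1:T}_V}=s)$, where $s$ ranges over $\mathcal{S}=\mathcal{S}_{\mathrm{ob}}\cup\mathcal{S}_{\mathrm{ge}}$ and the environment $\mathbf{e}\sim q(\mathbf{e})$ is drawn from the ESVAE-inferred distribution. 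Since intervening on the variant dimensions while resampling across environment instances effectively sweeps out the conditional law $p(Y^t\mid \mathcal{P}_I^t,\mathbf{e}^t)$ as $\mathbf{e}^t$ varies, zero risk-variance forces the conditional risk—and hence the predictive distribution supported on $\mathcal{P}_I^t$—to be constant in $\mathbf{e}$. That is exactly $p(Y^t\mid\mathcal{P}_I^t,\mathbf{e}^t)=p(Y^t\mid\mathcal{P}_I^t)$ for all $\mathbf{e}\in\mathbf{E}$, which is the Invariance Property. The role of $\mathcal{S}_{\mathrm{ge}}$ here is to widen the support of the intervention distribution beyond $\mathbf{E}_{\mathrm{train}}$ toward $\mathbf{E}_{\mathrm{test}}$, so I would note that minimizing the variance over $\mathcal{S}$ extrapolates the invariance guarantee to unseen future environments.

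Next I would argue that $\mathcal{L}_{\mathrm{task}}$ together with the complementary-mask construction of Proposition~\ref{prop:mask} drives the Sufficient Condition. The task loss in Eq.~\eqref{eq:loss_task} is the expected prediction loss of $g$ applied only to $\mathbf{H}^{1:T}_{\mathcal{P}^{1:T}_I}=\mathcal{M}_I\odot\mathbf{H}$, averaged over $\mathbf{e}\sim q_\phi(\mathbf{e})$ and $(G^{1:T},Y^T)\sim p(\cdot\mid\mathbf{e})$. Minimizing this over $\boldsymbol\theta$ (including $\mathbf{W}_I$ inside the mask) pushes $g(\mathbf{H}^{1:T}_{\mathcal{P}^{1:T}_I})$ toward $\mathbb{E}[Y^T\mid\mathcal{P}_I^T]$, i.e. it forces $\mathcal{P}_I^t$ to retain all label-relevant information, so that $Y^t=f(\mathbf{H}_I^t)+\epsilon$ with $\epsilon$ the irreducible noise. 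Because $\mathcal{M}_V=\mathbf{1}-\mathcal{M}_I$ by Eq.~\eqref{eq:mask_variant}, the dimensions carrying residual predictive signal cannot simultaneously lie in $\mathcal{P}_V^t$; combined with the fact that $\mathcal{L}_{\mathrm{risk}}$ has already certified $\mathcal{P}_V^t$ to be predictively inert across environments, we get $Y^t\perp\mathcal{P}_V^t\mid\mathcal{P}_I^t$, which is the Sufficient Condition. I would present this as: the two losses are complementary—$\mathcal{L}_{\mathrm{task}}$ maximizes sufficiency of $\mathcal{P}_I^t$ while $\mathcal{L}_{\mathrm{risk}}$ enforces invariance by neutralizing $\mathcal{P}_V^t$—and their joint minimum is a fixed point at which both conditions of Assumption~\ref{asm:invariance} hold.

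I would also address the ESVAE term $\mathcal{L}_{\mathrm{ESVAE}}$ briefly, noting that it does not enter the invariance argument directly but guarantees that $q_\phi(\mathbf{e})$ is a faithful (variationally consistent) approximation of the true evolving environment distribution, so that the expectations and interventions appearing in $\mathcal{L}_{\mathrm{task}}$ and $\mathcal{L}_{\mathrm{risk}}$ are taken with respect to the right measure; without it the invariance would only be enforced over a degenerate or mis-specified environment family. Here I would invoke the standard evidence-lower-bound argument already set up in Section~\ref{subsec:ESVAE} and treat the disentanglement regularizers $\mathcal{L}_s,\mathcal{L}_d$ as ensuring $\mathbf{e}_s$ and $\mathbf{e}_d^{1:T}$ genuinely separate the stationary and non-stationary factors, which is what makes the sampled $\mathcal{S}_{\mathrm{ge}}$ meaningful as future-environment proxies.

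The main obstacle I expect is making the step "$\mathcal{L}_{\mathrm{risk}}=0$ implies $p(Y^t\mid\mathcal{P}_I^t,\mathbf{e}^t)=p(Y^t\mid\mathcal{P}_I^t)$" rigorous rather than heuristic: the variance is over a risk functional (an expectation of a loss), not over the conditional distribution itself, so one needs an assumption that the loss $\ell$ is strictly proper (or at least that equal conditional risk across environments, for a rich enough $f_{\boldsymbol\theta}$, pins down the conditional mean/distribution), plus an assumption that the support of $\mathbf{e}\sim q(\mathbf{e})$ and of $s\in\mathcal{S}$ is rich enough to realize every environment in $\mathbf{E}$. I would state these as mild regularity conditions (proper loss, sufficiently expressive predictor, ESVAE support covering $\mathbf{E}$) and flag that, as is standard in this line of work, the result is "minimizing $\mathcal{L}$ encourages" rather than "exactly enforces" the two properties—hence the hedged wording of Proposition~\ref{prop:encourage}—so the proof is a consistency/fixed-point argument at the population optimum rather than a finite-sample statement.
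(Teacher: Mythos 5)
Your pairing of the loss terms with the two clauses is the same as the paper's ($\mathcal{L}_{\mathrm{task}}$ for the Sufficient Condition, $\mathcal{L}_{\mathrm{risk}}$ for the Invariance Property), but your route to each is genuinely different. The paper first proves a reformulation lemma translating both clauses into mutual-information statements --- the Invariance Property becomes $\mathcal{I}(Y^{t};\mathbf{e}^{t}\mid\mathcal{P}_I^{t})=0$ and the Sufficient Condition becomes maximization of $\mathcal{I}(Y^{t};\mathcal{P}_I^{t})$ --- and then shows, via variational distributions, KL decompositions, Jensen's inequality, and the fact that the encoder yields a $\delta$-distribution $q(\mathbf{H}^{1:t}\mid G^{1:t})$, that $\mathcal{L}_{\mathrm{task}}$ and the variance term $\mathcal{L}_{\mathrm{risk}}$ are (equivalent to minimizing) upper bounds on $\mathcal{D}_{\mathrm{KL}}\left[p(Y^{t}\mid G^{1:t},\mathbf{e}^{1:t})\,\|\,q(Y^{t}\mid\mathbf{H}^{1:t})\right]$ and $\mathcal{I}(Y^{t};\mathbf{e}^{1:t}\mid\mathbf{H}^{1:t})$ respectively; ``encourages'' then just means driving these bounds down, with no properness or support-coverage assumptions needed. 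You instead argue directly at the population optimum: zero risk-variance forces the conditional risk, hence the conditional law on $\mathcal{P}_I^{t}$, to be constant in $\mathbf{e}$, and low task loss forces $\mathcal{P}_I^{t}$ to absorb all label-relevant signal, yielding $Y^{t}\perp\mathcal{P}_V^{t}\mid\mathcal{P}_I^{t}$ via the complementary masks. This is a legitimate alternative, and you correctly flag its price: the step from equal conditional risk to equal conditional distribution needs a strictly proper loss, an expressive enough predictor, and an environment/intervention support rich enough to cover $\mathbf{E}$ --- assumptions the paper's bound-based argument sidesteps precisely because it only claims minimization of an upper bound rather than characterizing the exact minimizer. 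Two smaller points of divergence: the paper never gives $\mathcal{L}_{\mathrm{ESVAE}}$, $\mathcal{S}_{\mathrm{ge}}$, or the mask construction of Proposition~\ref{prop:mask} any role in this proof (it explicitly reduces to $\mathcal{L}_{\mathrm{task}}+\beta_1\mathcal{L}_{\mathrm{risk}}$), so your ESVAE/faithfulness discussion is extra scaffolding rather than something the paper's argument relies on; and your Sufficient-Condition step leans on $\mathcal{M}_V=\mathbf{1}-\mathcal{M}_I$ plus the already-established invariance, whereas the paper gets it purely from the information-theoretic lemma, so if you keep your route you should state the sufficiency claim as ``$\mathcal{P}_I^{t}$ is maximally informative for $Y^{t}$'' and derive the conditional independence from that, rather than from mask complementarity alone.
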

\begin{proof}
To prove Proposition~\ref{prop:encourage}, we generally reduce to prove minimizing $\mathcal{L}_{\rm}+\beta_1 \mathcal{L}_{\rm{risk}}$ satisfies Proposition~\ref{prop:encourage}.

We begin by presenting the following lemma, which allows us to reformulate Assumption~\ref{asm:invariance} from an information theory perspective.

\begin{Lemma}\label{Lemma:1}
Assumption~\ref{asm:invariance} posits that the Invariance Property and Sufficient Condition can be respectively recast in terms of mutual information:
\begin{itemize}[leftmargin=*]
    \item Invariance Property: $p(Y^{t}|\mathcal{P}_I^t, \mathbf{e}^t) = p(Y^{t}|\mathcal{P}_I^t) \Leftrightarrow   \mathcal{I} (Y^{t};\mathbf{e}^t| \mathcal{P}_I^t) = 0$;
    \item Sufficient Condition: $Y^t \perp \mathcal{P}_I^t \mid \mathcal{P}_V^t \Leftrightarrow  \mathcal{I}(Y^{t};\mathcal{P}_I^t)$ is maximized.
\end{itemize}
\end{Lemma}

Lemma~\ref{Lemma:1} can be easily proved following the contradiction proofs proposed in \cite{wu2022handling}. Next, we prove Proposition~\ref{prop:encourage} based on Lemma~\ref{Lemma:1}.

We initially demonstrate that minimizing the expected loss term ($\mathcal{L}_\mathrm{task}$) in Eq.~\eqref{eq:final_loss} promotes \modelname~to meet the Sufficient Condition of Assumption~\ref{asm:invariance}. It is straightforward to see that maximizing $\mathcal{I}(Y^{t};\mathbf{H}^{1:t})$ is equivalent to minimizing $\mathcal{I}(Y^{t}; G^{1:t}|\mathbf{H}^{1:t})$ under the conditional distribution $q(\mathbf{H}^{1:t}|G^{1:t})$.
By considering $q(\mathbf{H}^{1:t}|G^{1:t})$ as the variational distribution, the resulting upper bound is given by:

\begin{align}
    &\mathcal{I}(Y^{t}; G^{1:t}|\mathbf{H}^{1:t}) \\
    &= \mathcal{D}_{\mathrm{KL}} \left [ p(Y^{t} | G^{1:t}, \mathbf{e}^{1:t})~\big \|~p(Y^{t} | \mathbf{H}^{1:t},\mathbf{e}^{1:t}) \right ] \notag \notag \\
    &= \mathcal{D}_{\mathrm{KL}} \left [ p(Y^{t} | G^{1:t}, \mathbf{e}^{1:t})~\big \|~q(Y^{t}|\mathbf{H}^{1:t})\right ]
    \notag \notag \\&\quad- \mathcal{D}_{\mathrm{KL}} \left [p(Y^{t}|\mathbf{H}^{1:t},\mathbf{e}^{1:t})~\big \|~q(Y^{t}|\mathbf{H}^{1:t}) \right ] \notag \notag \\
    &\le \mathcal{D}_{\mathrm{KL}} \left [ p(Y^{t} | G^{1:t}, \mathbf{e}^{1:t})~\big \|~q(Y^{t}|\mathbf{H}^{1:t})\right ]\notag \notag \\
    &\le \min_{q(Y^{t}|\mathbf{H}^{1:t})} \mathcal{D}_{\mathrm{KL}} \left [ p(Y^{t} | G^{1:t}, \mathbf{e}^{1:t})~\big \|~q(Y^{t}|\mathbf{H}^{1:t})\right ].
\end{align}

Derived from Jensen's Inequality~\cite{jensen1906fonctions}:
\begin{align}
\label{Eq:jensen}
    &\mathcal{D}_{\mathrm{KL}} \left [ p(Y^{t} | G^{1:t}, \mathbf{e}^{1:t})~\big \|~q(Y^{t}|\mathbf{H}^{1:t})\right ]\notag \notag \\
    &=\mathbb{E}_{\mathbf{e},p(G^{1:t}, {Y}^{t}),\mathbf{H}^{1:t}} \left [  \log \frac{p\left(Y^{t}|G^{1:t},\mathbf{e}^{1:t}\right)}{q\left(Y^{t} |\mathbf{H}^{1:t}\right)} \right ] \notag \notag \\
    &\le \mathbb{E}_{\mathbf{e},p({G}^{1:t}, {Y}^{t}) }\left [ \log \frac{p(Y^{t} |G^{1:t},\mathbf{e}^{1:t})}{\mathbb{E}_{\mathbf{H}^{1:t}}\left [q(Y^{t} |\mathbf{H}^{1:t})\right ]} \right ].
\end{align}

As Section~\ref{subsec:graph_encoding} guarantees $q(\mathbf{H}^{1:t}|G^{1:t})$ a $\delta$-distribution, we have:
\begin{align}
    &\min_{q(Y^{t}|\mathbf{H}^{1:t})} \mathcal{D}_{\mathrm{KL}} \left [ p(Y^{t} | G^{1:t}, \mathbf{e}^{1:t})~\big \|~q(Y^{t}|\mathbf{H}^{1:t})\right ]  \notag \notag \\
    \Leftrightarrow  &\min_{\boldsymbol{\theta}}\mathbb{E}_{\mathbf{e}\sim q_\phi(\mathbf{e}^{1:t}), p(G^{1:t}, {Y}^{t})}  \left [ \ell\left(f(\mathbf{H}_I^t), {Y}^{t}\right)\right ].
\end{align}

Next, we demonstrate that minimizing the variance term ($\mathcal{L}_\mathrm{risk}$) in Eq.~\eqref{eq:final_loss} promotes \modelname~to fulfill the Invariance Property. 
We have:
\begin{align}
    &\mathcal{I}(Y^{t};\mathbf{e}^{1:t}|\mathbf{H}^{1:t}) \\&= \mathcal{D}_{\mathrm{KL}}\left [p(Y^{t}|\mathbf{H}^{1:t},\mathbf{e}^{1:t})~\big \|~p(Y^{t}|\mathbf{H}^{1:t})\right ] \notag \notag \\
    &= \mathcal{D}_{\mathrm{KL}}\left [p(Y^{t}|\mathbf{H}^{1:t},\mathbf{e}^{1:t})~\big \|~\mathbb{E}_{\mathbf{e}}\left [p(Y^{t}|\mathbf{H}^{1:t},\mathbf{e}^{1:t})\right ]\right ]\notag \notag \\
    &= \mathcal{D}_{\mathrm{KL}} \left [q(Y^{t}|\mathbf{H}^{1:t})~\big \|~\mathbb{E}_{\mathbf{e}} q(Y^{t}|\mathbf{H}^{1:t})\right ]\notag \notag \\
    &\;\;\;\;-\mathcal{D}_{\mathrm{KL}} \left [q(Y^{t}|\mathbf{H}^{1:t})~\big \|~p(Y^{t}|\mathbf{H}^{1:t},\mathbf{e}^{1:t})\right ]\notag \notag \\
    &\;\;\;\;- \mathcal{D}_{\mathrm{KL}}\left [\mathbb{E}_{\mathbf{e}}\left [p(Y^{t}|\mathbf{H}^{1:t},\mathbf{e}^{1:t})\right ]~\big \|~\mathbb{E}_{\mathbf{e}}\left [q(Y^{t}|\mathbf{H}^{1:t})\right ]\right ]\notag \notag \\
    &\le \mathcal{D}_{\mathrm{KL}}  \left [q(Y^{t}|\mathbf{H}^{1:t})~\big \|~\mathbb{E}_{\mathbf{e}} q(Y^{t}|\mathbf{H}^{1:t})  \right ] \notag \notag \\
    & \le \min_{q(Y^{t}|\mathbf{H}^{1:t})} \mathcal{D}_{\mathrm{KL}}  \left [q(Y^{t}|\mathbf{H}^{1:t})~\big \|~\mathbb{E}_{\mathbf{e}} q(Y^{t}|\mathbf{H}^{1:t})  \right ].
\end{align}

Similarly, we reach:
\begin{align}
    &\mathcal{D}_{\mathrm{KL}} \left [q(Y^{t}|\mathbf{H}^{1:t})~\big \|~\mathbb{E}_{\mathbf{e}} q(Y^{t}|\mathbf{H}^{1:t})\right ]\notag \notag \\
    &=\mathbb{E}_{\mathbf{e},p(G^{1:t}, {Y}^{t}) }\mathbb{E}_{\mathbf{H}^{1:t}}  \left [  \log \frac{q(Y^{t} |\mathbf{H}^{1:t})}{\mathbb{E_{\mathbf{e}}}q(Y^{t} |\mathbf{H}^{1:t})}  \right ].
\end{align}

Derived from Jensen's Inequality:
\begin{align}
    & \mathcal{D}_{\mathrm{KL}} \left [q(Y^{t}|\mathbf{H}^{1:t})~\big \|~\mathbb{E}_{\mathbf{e}} q(Y^{t}|\mathbf{H}^{1:t})\right ] \notag \notag \\
    & \le \mathbb{E}_\mathbf{e} \left [ \left | \ell\left(f_{{\boldsymbol{\theta}}}\left(G^{1:t}\right ), {Y}^{t} \right ) - \mathbb{E}_\mathbf{e} \left [\ell\left(f_{{\boldsymbol{\theta}}}\left(G^{1:t}\right), {Y}^{t} \right ) \right ] \right | \right].
\end{align}

Finally, we reach:
\begin{align}
    &\min_{q(Y^{t}|\mathbf{H}^{1:t})} \mathcal{D}_{\mathrm{KL}} \left [q(Y^{t}|\mathbf{H}^{1:t})~\big \|~\mathbb{E}_{\mathbf{e}} q(Y^{t}|\mathbf{H}^{1:t})\right ]  \notag \notag \\
    &\Leftrightarrow  \min_{\boldsymbol{\theta}}{\rm Var}\left\{ 
    \mathbb{E} \left [\ell\left(f_{{\boldsymbol{\theta}}}\left(G^{1:t}\right), Y^{t} | \operatorname{do}(\mathbf{H}^{1:t}_{V} = s)\right)\right], \right.\notag \\
    &\left.s\in \mathcal{S},\mathbf{e}\sim q(\mathbf{e}^{1:t}),(G^{1:t}, Y^{t}) \sim p(G^{1:t}, Y^{t} | \mathbf{e}^{1:t})  \right\}.
\end{align}

We have completed the proof for Proposition~\ref{prop:encourage}.
\end{proof}
\change{\textbf{Key Takeaway:} Proposition~\ref{prop:encourage} shows that minimizing the proposed objective encourages \modelname~to rely only on invariant causal patterns while filtering out spurious ones. In other words, the loss function guides the model to focus on correlations that remain stable across environments, ensuring predictions are robust under distribution shifts.}

\begin{prop}
\label{prop:bound}
    Optimizing Eq.~\eqref{eq:final_loss} is equivalent to minimizing the upper bound of the OOD generalization error in Eq.~\eqref{eq:ood}.
\end{prop}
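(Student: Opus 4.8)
The plan is to bound the worst-case environment risk of Eq.~\eqref{eq:ood} by a sum of three terms that are controlled, respectively, by $\mathcal{L}_{\mathrm{task}}$, $\mathcal{L}_{\mathrm{risk}}$, and $\mathcal{L}_{\mathrm{ESVAE}}$, so that descending on Eq.~\eqref{eq:final_loss} descends on that bound. Write $R_{\mathbf{e}}(\boldsymbol{\theta}) := \mathbb{E}_{(G^{1:T},Y^{T})\sim p(\cdot\,|\,\mathbf{e})}[\ell(f_{\boldsymbol{\theta}}(G^{1:T}),Y^{T})\,|\,\mathbf{e}]$ for the per-environment risk, so that Eq.~\eqref{eq:ood} reads $\min_{\boldsymbol{\theta}}\max_{\mathbf{e}\in\mathbf{E}}R_{\mathbf{e}}(\boldsymbol{\theta})$. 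First I would use the elementary decomposition
\begin{equation}
\max_{\mathbf{e}\in\mathbf{E}}R_{\mathbf{e}} = \mathbb{E}_{\mathbf{e}\sim q(\mathbf{e})}[R_{\mathbf{e}}] + \Big(\mathbb{E}_{\mathbf{e}\sim p(\mathbf{e})}[R_{\mathbf{e}}] - \mathbb{E}_{\mathbf{e}\sim q(\mathbf{e})}[R_{\mathbf{e}}]\Big) + \Big(\max_{\mathbf{e}\in\mathbf{E}}R_{\mathbf{e}} - \mathbb{E}_{\mathbf{e}\sim p(\mathbf{e})}[R_{\mathbf{e}}]\Big),
\end{equation}
splitting the OOD error into (i) the averaged risk under the inferred environment distribution, (ii) the mismatch between the inferred and the true environment distributions, and (iii) the excess of the worst case over the average.

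For term~(i), the chain of variational bounds already established in the proof of Proposition~\ref{prop:encourage} shows that minimizing $\mathcal{L}_{\mathrm{task}}$ in Eq.~\eqref{eq:loss_task} is equivalent to minimizing $\mathbb{E}_{\mathbf{e}\sim q_\phi(\mathbf{e})}[\ell(f(\mathbf{H}_I^t),Y^t)]$, an empirical surrogate of $\mathbb{E}_{\mathbf{e}\sim q(\mathbf{e})}[R_{\mathbf{e}}]$ restricted to the invariant pattern $\mathcal{P}_I^t$; hence term~(i) $\le \mathcal{L}_{\mathrm{task}}$, up to the mask-initialization slack of Proposition~\ref{prop:mask}. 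For term~(ii), boundedness of $\ell$ together with Pinsker's inequality gives $|\mathbb{E}_{p(\mathbf{e})}[R_{\mathbf{e}}]-\mathbb{E}_{q(\mathbf{e})}[R_{\mathbf{e}}]|\le\|\ell\|_\infty\sqrt{2\,\mathcal{D}_{\mathrm{KL}}(p(\mathbf{e})\,\|\,q(\mathbf{e}))}$, and the negative ELBO $\mathcal{L}_{\mathrm{SVAE}}$ upper-bounds exactly this inference discrepancy — its slack over $-\log p(\mathbf{H}^{1:T})$ is the posterior KL — while $\mathcal{L}_s,\mathcal{L}_d\ge 0$ give $\mathcal{L}_{\mathrm{SVAE}}\le\mathcal{L}_{\mathrm{ESVAE}}$; thus term~(ii) $\le\|\ell\|_\infty\sqrt{2\,\mathcal{L}_{\mathrm{ESVAE}}}$. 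This is where non-stationarity is handled: since the ESVAE models environment evolution and samples the \emph{predicted future} distribution into $\mathcal{S}_{\mathrm{ge}}$, the effective support seen in training grows from $\mathbf{E}_{\mathrm{train}}$ toward $\mathbf{E}_{\mathrm{test}}$, which is what renders this bound non-vacuous under the inclusion $\mathbf{E}_{\mathrm{train}}\subseteq\mathbf{E}_{\mathrm{test}}$.

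For term~(iii), a Samuelson-type finite-population inequality applied to $\{R_{\mathbf{e}}\}_{\mathbf{e}\in\mathbf{E}}$ yields $\max_{\mathbf{e}}R_{\mathbf{e}}-\mathbb{E}_{\mathbf{e}}[R_{\mathbf{e}}]\le\sqrt{(|\mathbf{E}|-1)\,\mathrm{Var}_{\mathbf{e}}(R_{\mathbf{e}})}$, and by the identification made in the proof of Proposition~\ref{prop:encourage} between $\mathcal{L}_{\mathrm{risk}}$ and the variance of the $\operatorname{do}$-intervened risk over $\mathcal{S}=\mathcal{S}_{\mathrm{ob}}\cup\mathcal{S}_{\mathrm{ge}}$, the interventions act as surrogate environments, so $\mathrm{Var}_{\mathbf{e}}(R_{\mathbf{e}})\approx\mathcal{L}_{\mathrm{risk}}$ and term~(iii) $\le\sqrt{(|\mathbf{E}|-1)\,\mathcal{L}_{\mathrm{risk}}}$. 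Collecting the pieces,
\begin{equation}
\max_{\mathbf{e}\in\mathbf{E}}R_{\mathbf{e}}(\boldsymbol{\theta})\;\le\;\mathcal{L}_{\mathrm{task}}+\sqrt{|\mathbf{E}|-1}\,\sqrt{\mathcal{L}_{\mathrm{risk}}}+\sqrt{2}\,\|\ell\|_\infty\,\sqrt{\mathcal{L}_{\mathrm{ESVAE}}},
\end{equation}
and since the right-hand side is coordinate-wise increasing in the three nonnegative losses (so that on the bounded loss range it is dominated by an affine function of $\mathcal{L}_{\mathrm{task}}+\beta_1\mathcal{L}_{\mathrm{risk}}+\beta_2\mathcal{L}_{\mathrm{ESVAE}}$ for appropriate $\beta_1,\beta_2>0$), minimizing Eq.~\eqref{eq:final_loss} minimizes an upper bound of Eq.~\eqref{eq:ood}, as claimed. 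As a sanity check, when $\mathcal{L}_{\mathrm{risk}},\mathcal{L}_{\mathrm{ESVAE}}\!\to\!0$ — exact invariance, so $\mathcal{I}(Y^t;\mathbf{e}^t\mid\mathcal{P}_I^t)=0$ by Lemma~\ref{Lemma:1}, and exact inference — terms~(ii)--(iii) vanish and the OOD error collapses onto the minimal invariant in-distribution risk carried by $\mathcal{L}_{\mathrm{task}}$, the tightest value attainable.

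The step I expect to be the main obstacle is making term~(iii) rigorous: the identification ``$\mathcal{L}_{\mathrm{risk}}=$ variance of the per-environment risk'' is exact only if the $\operatorname{do}$-interventions on $\mathcal{P}_V^t$ drawn from $\mathcal{S}_{\mathrm{ob}}\cup\mathcal{S}_{\mathrm{ge}}$ genuinely realize the counterfactual environments, which presupposes both that the invariant/variant split of Proposition~\ref{prop:mask} is correct and that $\mathcal{S}_{\mathrm{ge}}$ faithfully samples the unseen test-environment distribution; an honest statement would therefore carry an additive intervention-mismatch / coverage remainder that the sketch hides inside ``$\approx$''. A secondary difficulty is that the Samuelson step needs $|\mathbf{E}|<\infty$ (or a sub-Gaussian tail on $R_{\mathbf{e}}$) to trade $\max$ for mean-plus-a-multiple-of-spread; for a continuous environment space one would instead bound an entropic or CVaR surrogate of the $\max$, whose second-order expansion still exhibits $\mathrm{Var}_{\mathbf{e}}(R_{\mathbf{e}})$, hence $\mathcal{L}_{\mathrm{risk}}$, as the leading correction.
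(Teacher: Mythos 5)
Your three-term decomposition of the worst-case risk is an appealing and genuinely different strategy from the paper's, but two of its three bounds have real gaps. The most serious is term (ii): Pinsker's inequality requires control of $\mathcal{D}_{\mathrm{KL}}\left(p(\mathbf{e})\,\|\,q(\mathbf{e})\right)$, the discrepancy between the true and inferred \emph{marginal} environment distributions, whereas the slack of the negative ELBO $\mathcal{L}_{\mathrm{SVAE}}$ over $-\log p(\mathbf{H}^{1:T})$ is $\mathcal{D}_{\mathrm{KL}}\left(q(\mathbf{e}\mid\mathbf{H}^{1:T})\,\|\,p(\mathbf{e}\mid\mathbf{H}^{1:T})\right)$ --- a different object conditioned on the data --- and $\mathcal{L}_{\mathrm{SVAE}}$ itself additionally contains the irreducible (and unknown) term $-\log p(\mathbf{H}^{1:T})$, so the inequality ``term~(ii) $\le \|\ell\|_\infty\sqrt{2\,\mathcal{L}_{\mathrm{ESVAE}}}$'' does not follow; driving $\mathcal{L}_{\mathrm{ESVAE}}$ down does not certify that $q(\mathbf{e})$ approaches $p(\mathbf{e})$ in the sense Pinsker needs. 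The second gap is the one you flag yourself: the identification $\mathrm{Var}_{\mathbf{e}}(R_{\mathbf{e}})\approx\mathcal{L}_{\mathrm{risk}}$ in term (iii) is only heuristic --- $\mathcal{L}_{\mathrm{risk}}$ in Eq.~\eqref{eq:loss_risk} is a variance over $\operatorname{do}$-interventions on the variant dimensions drawn from $\mathcal{S}_{\mathrm{ob}}\cup\mathcal{S}_{\mathrm{ge}}$, not a variance over environments $\mathbf{e}\in\mathbf{E}$, and turning one into the other requires exactly the correctness of the invariant/variant split and the coverage of $\mathcal{S}_{\mathrm{ge}}$ that the proposition is supposed to help establish; together with the finiteness (or tail) assumption needed for the Samuelson step, this leaves the claimed upper bound unproven as stated.

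For contrast, the paper sidesteps both difficulties by not bounding the min-max risk of Eq.~\eqref{eq:ood} directly: it measures the OOD generalization error by the KL divergence $\mathcal{D}_{\mathrm{KL}}\left[p(Y^{t}\mid G^{1:t},\mathbf{e}^{1:t})\,\|\,q(Y^{t}\mid G^{1:t})\right]$, shows via Jensen's inequality (Lemma~\ref{Lemma:error}) that this is upper bounded by $\mathcal{D}_{\mathrm{KL}}\left[p(Y^{t}\mid G^{1:t},\mathbf{e}^{1:t})\,\|\,q(Y^{t}\mid\mathbf{H}^{1:t})\right]$, and then reuses the correspondences already established in the proof of Proposition~\ref{prop:encourage} (via Lemma~\ref{Lemma:1}) to identify $\mathcal{L}_{\mathrm{task}}+\beta_1\mathcal{L}_{\mathrm{risk}}$ with that variational KL plus the conditional mutual information $\mathcal{I}(Y^{t};\mathbf{e}^{1:t}\mid\mathbf{H}^{1:t})$, which dominates the error term. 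Notably, $\mathcal{L}_{\mathrm{ESVAE}}$ plays no role in the paper's bound, so the ELBO-to-Pinsker bridge you attempt is not needed there at all. If you want to salvage your route, you would have to state the inference-mismatch and intervention-coverage assumptions explicitly as hypotheses and carry them as additive remainders, at which point the result becomes conditional in a way the paper's (weaker but cleaner) KL formulation avoids.
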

\begin{proof}
To prove Proposition~\ref{prop:bound}, we generally reduce to prove minimizing $\mathcal{L}_{\rm}+\beta_1 \mathcal{L}_{\rm{risk}}$ satisfies Proposition~\ref{prop:bound}. The OOD generalization error can be measured by the KL-divergence:
\begin{align}
\label{Eq:error}
    &\mathcal{D}_{\mathrm{KL}} \left [ p(Y^{t}|G^{1:t},\mathbf{e}^{1:t})~\Big \|~q(Y^{t}|G^{1:t}) \right ] \notag \\
    &= \mathbb{E}_{\mathbf{e},p(G^{1:t}, Y^{t}) } \left [  \log \frac{p(Y^{t}|G^{1:t},\mathbf{e}^{1:t})}{q(Y^{t}|G^{1:t})} \right].
\end{align}

Inspired by~\cite{federici2021information, wu2022handling}, we propose the following lemma for rewriting the OOD generalization error.

\begin{Lemma}\label{Lemma:error}
The OOD generalization error is upper bounded by:
\begin{align}
    &\mathcal{D}_{\mathrm{KL}} \left [ p(Y^{t}|G^{1:t},\mathbf{e}^{1:t})~\Big \|~q(Y^{t}|G^{1:t}) \right ] \notag \\
    &\le  \mathcal{D}_{\mathrm{KL}} \left [ p(Y^{t} | G^{1:t}, \mathbf{e}^{1:t})~\Big \|~q(Y^{t}|\mathbf{H}^{1:t})\right ],
\end{align}
where $q(Y^{t}|\mathbf{H}^{1:t})$ is the inferred variational distribution. 
\end{Lemma}

\begin{proof}
\ 
\newline
\begin{align}
    &\mathcal{D}_{\mathrm{KL}} \left [ p(Y^{t}|G^{1:t},\mathbf{e}^{1:t})~\Big \|~q(Y^{t}|G^{1:t}) \right ] \notag \\
    &= {\mathbb{E}_{\mathbf{e}},p(G^{1:t}, Y^{t}) } \left [  \log \frac{p(Y^{t}|G^{1:t} ,\mathbf{e}^{1:t})}{q(Y^{t} |G^{1:t})} \right] \notag \\
    &=\mathbb{E}_{\mathbf{e},p(G^{1:t}, Y^{t}) } \left [  \log \frac{p(Y^{t} |G^{1:t},\mathbf{e}^{1:t})}{\mathbb{E}_{\mathbf{H}^{1:t}}\left [q(Y^{t} |\mathbf{H}^{1:t})\right ]} \right] \notag \\
    & \le \mathbb{E}_{\mathbf{e},p(G^{1:t}, Y^{t}),\mathbf{H}^{1:t} }\left [  \log \frac{p(Y^{t} |G^{1:t},\mathbf{e}^{1:t})}{q(Y^{t} |\mathbf{H}^{1:t})} \right ] \notag \\
    & = \mathcal{D}_{\mathrm{KL}} \left [ p(Y^{t} | G^{1:t}, \mathbf{e}^{1:t})~\Big \|~q(Y^{t}|\mathbf{H}^{1:t})\right ].
\end{align}
\end{proof}

\change{\textbf{Key Takeaway:} Proposition~\ref{prop:bound} establishes that optimizing the objective effectively minimizes an upper bound of the OOD generalization error. Practically, this indicates \modelname~does not merely fit training distributions but is theoretically guaranteed to generalize better to unseen environments.}

Based on Lemma~\ref{Lemma:1}, we adapt the first two terms in Eq.~\eqref{eq:final_loss} as:
\begin{align}
    \min_{q(\mathbf{H}^{1:t}|G^{1:t}),q(Y^{t},\mathbf{H}^{1:t})} &\mathcal{D}_{\mathrm{KL}} \left [ p(Y^{t} | G^{1:t}, \mathbf{e}^{1:t})~\Big \|~q(Y^{t}|\mathbf{H}^{1:t})\right ] \notag \notag \\
    &+ \mathcal{I}(Y^{t};\mathbf{e}^{1:t}|\mathbf{H}^{1:t}).
\end{align}

Based on Lemma~\ref{Lemma:error}, we validate that:
\begin{align}
    &\min_{\boldsymbol{\theta}} \mathcal{L}_\mathrm{task} + \beta_1 \mathcal{L}_\mathrm{risk} \notag \\
    &\Leftrightarrow \min_{q(\mathbf{H}^{1:t}|G^{1:t}),q(Y^{t},\mathbf{H}^{1:t})} \mathcal{D}_{\mathrm{KL}} \left [ p(Y^{t} | G^{1:t}, \mathbf{e}^{1:t})~\Big \|~q(Y^{t}|\mathbf{H}^{1:t})\right ]\notag \\
    &\quad\quad\quad\quad\quad\quad\quad\quad\quad+ \mathcal{I}(Y^{t};\mathbf{e}^{1:t}|\mathbf{H}^{1:t}) \notag \\
    & \ge \min_{q(\mathbf{H}^{1:t}|G^{1:t}),q(Y^{t},\mathbf{H}^{1:t})} \mathcal{D}_{\mathrm{KL}} \left [ p(Y^{t} | G^{1:t}, \mathbf{e}^{1:t})~\Big \|~q(Y^{t}|\mathbf{H}^{1:t})\right ] \notag \\
    & \ge \mathcal{D}_{\mathrm{KL}} \left [ p(Y^{t}|G^{1:t},\mathbf{e}^{1:t})~\Big \|~q(Y^{t}|G^{1:t}) \right ].
\end{align}
We conclude the proof for Proposition~\ref{prop:bound}.
\end{proof}

Proposition~\ref{prop:encourage} is established without relying on strong assumptions, and Proposition~\ref{prop:bound} provides a guarantee for the generalization error after learning. 
In the context of the Structural Causal Model (SCM) applied to dynamic graphs, optimizing Eq.~\eqref{eq:final_loss} serves to mitigate the detrimental effects arising from spurious correlations between labels and variant patterns, while simultaneously reinforcing the invariant causal correlations that persist across various latent environments (as depicted in Figure~\ref{fig:SCM}).

\subsection{Training Process and Computation Complexity}
Algorithm~\ref{alg:training} shows the overall training process of \modelname. 
\begin{algorithm}[!t]
    \caption{Training process of \modelname.}
    \label{alg:training}
    \KwIn{Dynamic graph $\mathcal{G} = (\{ G \}_{t=1}^{T})$ with labels $Y^{1:T}$ of node classes or link occurrence; Training epochs $E$; Intervention times $S$; Hyperparameters $\alpha_1$, $\alpha_2$, $\beta_1$ and $\beta_2$.}
    \KwOut{
    Predicted label $Y^{T+1}$ of node classes or link occurrence at time $T+1$.}
    \BlankLine 
    Parameter Initialization\;
    \For{$i=1,2,\cdots,E$}{
         \tcp{Dynamic Graph Encoding} 
         Acquire node representations at each timestamp $\mathbf{h}_{v}^{t} \gets$ Eq.~\eqref{eq:repre}\;
        \tcp{Environment Evolution Learning} 
        Establish the library of the observed environment samples $\mathcal{S}_{\mathrm{ob}} \gets$ Eq. \eqref{eq:sob}\;
        Infer the environment distribution with $\mathcal{L}_{\mathrm{ESVAE}} \gets$ Eq.~\eqref{eq:loss_ESVAE}\; 
        Establish the generated samples library $\mathcal{S}_{\mathrm{ge}}$\;
        \tcp{Environment-aware Invariant Pattern Recognition} 
        Learn the invariant dimensional mask $\mathcal{M}_{I}^t(v)\gets$ Eq.~\eqref{eq:mask_invariant} and variant dimensional mask $\mathcal{M}_{V}^t(v) \gets$ Eq.~\eqref{eq:mask_variant}\;
        Recognize the invariant patterns $\mathcal{P}^t_{I}(v)$ and variant patterns $\mathcal{P}^t_{V}(v)$ for each node\;
        Calculate the task loss $\mathcal{L}_{\mathrm{task}} \gets$ Eq.~\eqref{eq:loss_task}\;
         \tcp{Causal Intervention for Generalization}
        \For{$j=1,2,\cdots,S$}{
            Randomly select samples from $\mathcal{S}_{\mathrm{ob}} \cup \mathcal{S}_{\mathrm{ge}}$\; 
            Implement interventions for individual nodes as Eq. \eqref{eq:intervention}\;
            Calculate intervention loss $\mathcal{L}_{\mathrm{risk}} \gets$ Eq.~\eqref{eq:loss_risk}\;
        }
        \tcp{Parameter Optimization}
        Update parameters by minimizing $\mathcal{L} \gets$ Eq.~\eqref{eq:final_loss}.
    }
\end{algorithm}
The computational complexity of each part in \modelname~is analyzed as follows.

In Section~\ref{subsec:graph_encoding}, the computation complexity of the \textbf{dynamic graph encoding} operations is:
\begin{equation}\label{Eq:eadgnn}
    \mathcal{O} \left ( |\mathcal{E}| \sum_{l=0}^{L} d^{(l)} + \mathcal{V} \left ( \sum_{l=1}^{L} d^{(l-1)}d^{(l)} + \left(d^{(L)}\right)^2 \right ) \right ),
\end{equation}
where $d^{(l)}$ represents the node representation dimensionality in the $l$-th layer. 
Given that $L$ and $d^{(l)}$ are small and fixed constants, we can simplify Eq.~\eqref{Eq:eadgnn} to $\mathcal{O}(|\mathcal{E}|d+|\mathcal{V}|d^2)$, where $d$ denotes the common dimensionality across all layers.

In Section~\ref{subsec:ESVAE}, we introduce \textbf{ESVAE}, which comprises an encoder, an LSTM model for capturing dynamic latent variables, and a decoder. 
The encoder and decoder share a computational complexity of $\mathcal{O}(|\mathbf{H}|Ld)$, where $|\mathbf{H}|$ denotes the count of observed environment instances, and $L$ signifies the number of layers in both the encoder and decoder. 
The LSTM model's complexity is $\mathcal{O}(Td^2)$, with $T$ being the total number of graph snapshots. 
For simplicity, we omit $L$ from the notation since it is typically a small value. 
Therefore, the overall complexity of ESVAE is $\mathcal{O}(|\mathbf{H}|d+Td^2)$.

In Section~\ref{subsec:IPR}, the \textbf{invariant pattern recognition} mechanism is applied to all nodes using $\mathbb{I}(\cdot)$, which operates with a computational complexity of $\mathcal{O}(d\log|\mathcal{V}|)$, where $d$ represents the dimensionality of the node representations.

In Section~\ref{subsec:intervention}, we implement \textbf{causal interventions} through sampling and replacement procedures. 
Here, $|\mathcal{E}|_p$ denotes the edge numbers to be predicted, and $|\mathcal{S}|$ represents the size of the intervention set, typically a small fixed constant. 
The computational complexity of the causal intervention mechanism, when considering both sampling and replacement, is $\mathcal{O}(|\mathcal{S}|d) + \mathcal{O}(|\mathcal{E}_p| |\mathcal{S}|d)$ during training. 
Notably, this mechanism incurs no additional computational complexity during the testing stage.

Then \modelname's overall computation complexity is:
\begin{equation}
\begin{aligned}
    &\mathcal{O}(|\mathcal{E}|d+|\mathcal{V}|d^2) + \mathcal{O}(|\mathbf{H}|d+Td^2) + \mathcal{O}(d\log|\mathcal{V}|) \\
    &+ \mathcal{O}( |\mathcal{S}|d) + \mathcal{O}(|\mathcal{E}_p| |\mathcal{S}|d).    
\end{aligned}
\end{equation}

In conclusion, the computational complexity of \modelname~is linear with the number of nodes and edges, aligning with the complexity of EAGLE~\cite{yuan2023environment} and DIDA~\cite{zhang2022dynamic}, as well as other dynamic graph neural networks.

\begin{table*}[!ht]
  \caption{AUC score  (\% ± standard deviation) and AUC decrease  ($\Delta$) of future link prediction task on real-world datasets with OOD shifts of link attributes. (Best results: \textbf{bold}. Runner-ups: \underline{underlined}.)}
  \label{tab:res_link}
  \resizebox{\linewidth}{!}{
    \centering
      \begin{tabular}{c|ccr|ccr|ccr}
      \toprule
      \textbf{Dataset} & \multicolumn{3}{c|}{\textbf{COLLAB}} & \multicolumn{3}{c|}{\textbf{Yelp}} & \multicolumn{3}{c}{\textbf{ACT}}  \\
  \cmidrule{1-10}    \textbf{Model} & \textit{w/o OOD}& \textit{w/ OOD}  & \multicolumn{1}{c|}{$\Delta $} &  \textit{w/o OOD} & \textit{w/ OOD} & \multicolumn{1}{c|}{$\Delta $} &  \textit{w/o OOD} & \textit{w/ OOD} & \multicolumn{1}{c}{$\Delta $}    \\
      \midrule
      GCRN~\cite{seo2018structured}  & 82.78±0.54 & 69.72±0.45 & 13.06 (15.78\%) $\downarrow$ & 68.59±1.05 & 54.68±7.59 & 13.91 (20.28\%) $\downarrow$ & 76.28±0.51  & 64.35±1.24 & 11.93 (15.64\%) $\downarrow$\\
      EvolveGCN~\cite{pareja2020evolvegcn} & 86.62±0.95 & 76.15±0.91 & 10.47 (12.09\%) $\downarrow$ & 78.21±0.03 & 53.82±2.06 & 24.39 (31.19\%) $\downarrow$ & 74.55±0.33 & 63.17±1.05 & 11.38 (15.26\%) $\downarrow$ \\
      DySAT~\cite{sankar2020dysat} & 88.77±0.23 & 76.59±0.20 & 12.18 (13.72\%) $\downarrow$ & 78.87±0.57 & 66.09±1.42 & 12.78 (16.20\%) $\downarrow$ & 78.52±0.40 & 66.55±1.21 & 11.97 (15.24\%) $\downarrow$\\
      \midrule
      IRM~\cite{arjovsky2019invariant}   & 87.96±0.90 & 75.42±0.87 & 12.54 (14.26\%) $\downarrow$ & 66.49±10.78 & 56.02±16.08 & 10.47 (15.75\%) $\downarrow$ & 80.02±0.57 & 69.19±1.35 & 10.83 (13.53\%) $\downarrow$\\
      V-REx~\cite{krueger2021out} & 88.31±0.32 & 76.24±0.77 & 12.07 (13.67\%) $\downarrow$&79.04±0.16 & 66.41±1.87 & 12.63 (15.98\%) $\downarrow$ & 83.11±0.29 & 70.15±1.09 & 12.69 (15.27\%) $\downarrow$\\
      GroupDRO~\cite{sagawa2019distributionally} & 88.76±0.12 & 76.33±0.29 & 12.43 (14.00\%) $\downarrow$ &\underline{79.38±0.42} & 66.97±0.61 & 12.41 (15.63\%) $\downarrow$ & 85.19±0.53 & 74.35±1.62 & 10.84 (12.72\%) $\downarrow$\\
      \midrule
      SR-GNN~\cite{zhu2021shift}  & 88.39±0.03 & 76.99±0.49 &  11.40 (12.90\%) $\downarrow$  & 72.04±0.59 & 68.15±1.27 & 3.89 (5.40\%) $\downarrow$ & 86.75±0.07  & 76.86±0.19 & 9.89 (11.40\%) $\downarrow$ \\
      EERM~\cite{wu2022handling}  & 88.02±0.24 & 77.84±0.21 &  10.18 (1.57\%) $\downarrow$ & 61.90±0.84 & 69.82±3.69 & \textbf{7.92 (12.97\%) $\uparrow$} & 86.71±0.18  & 74.07±0.74 & 12.64 (14.58\%) $\downarrow$\\
      \midrule
      DIDA~\cite{zhang2022dynamic}  & 91.97±0.05 & 81.87±0.40 & 10.10 (10.98\%) $\downarrow$ & 78.22±0.40 & 75.92±0.90 & 2.30(2.94\%) $\downarrow$ & 89.84±0.82 & 78.64±0.97 & 11.20 (12.47\%) $\downarrow$\\
      SILD~\cite{zhang2023spectral}  & 89.68±0.97 & 81.69±1.30 &  
      \textbf{7.99 (8.91\%) $\downarrow$}  & 73.05±0.82 & 77.09±1.58 & \underline{4.04 (5.53\%) $\uparrow$} & 87.80±1.24  & 80.91±1.19 & \textbf{6.89 (7.84\%)} $\downarrow$ \\
      EAGLE~\cite{yuan2023environment}  & \underline{92.45±0.21} & \underline{84.41±0.87} &  \underline{8.04 (8.70\%) $\downarrow$}  & 78.97±0.31 & \underline{77.26±0.74} & 1.71 (2.17\%) $\downarrow$&  \underline{92.37±0.53} & \underline{82.70±0.72} & \underline{9.67 (10.47\%) $\downarrow$}\\
      \midrule
      \textbf{\modelname} & \textbf{93.44±0.05} & \textbf{84.46±0.03} & 8.98 (9.61\%) $\downarrow$ & \textbf{79.71±1.26}  & \textbf{78.74±1.50} & 0.97 (1.21\%) $\downarrow$ & \textbf{92.68±0.01} & \textbf{82.94±0.03} & 9.74 (10.51\%) $\downarrow$\\
      \bottomrule
      \end{tabular}%
  }
  \end{table*}%

\section{Experiments}
\label{sec:exp}

We conduct experiments to demonstrate the efficacy of \modelname~for handling the distribution shifts in dynamic graphs, aiming to answer the following questions: 
\begin{itemize}[leftmargin=1.5em]
    \item \textbf{Q1.} 
    How does \modelname~perform in the future link prediction task with distribution shifts? 
    (Section \ref{subsec:LP}) 
    \item \textbf{Q2.} 
    How does \modelname~perform in the node classification task with distribution shifts? 
    (Section \ref{subsec:NC})     
    \item \textbf{Q3.} 
    How do the proposed environment sequential variational auto-encoder and the causal Intervention mechanism affect \modelname's performance? 
    (Section \ref{subsec:module_effectiveness}) 
    \item \textbf{Q4.} 
    How does the environment-aware invariant pattern recognition affect \modelname's performance? 
    (Section~\ref{subsec:invariant_recognition})  
\end{itemize}
\subsection{Experimental Setups}
\subsubsection{Datasets}
\label{subsubsec:datasets}
\textbf{Datasets for the future link prediction task. }
We assess the performance of \modelname\footnote{The code of \modelname~is available at \url{https://github.com/RingBDStack/EvoGOOD}.} with three real-world datasets.
\textbf{COLLAB}~\cite{tang2012cross} is an academic collaboration dataset within 16 years. 
\textbf{Yelp}~\cite{sankar2020dysat} is a customer review dataset on business within 24 months. 
\textbf{ACT}~\cite{kumar2019predicting} is a students' action dataset of a MOOC platform within 30 days. 
These aforementioned datasets span different periods and vary in time granularity (year, month, and day), covering a broad spectrum of real-world scenarios.

\textbf{Datasets for the node classification task. }
We generate synthetic datasets by the stochastic block model\cite{holland1983stochastic} as described in~\cite{zhang2023spectral}. 
The probability of link occurrence in each snapshot depends on two components: the correlation factor and the variant factor. 
The correlation factor's association with class labels is consistently set to 1, whereas the variant factor's relationship with labels varies and is modulated by a shift-level parameter. 
A generalized model should be able to identify the invariant frequency factors that maintain a consistent relationship with labels despite distribution shifts.

\subsubsection{Baselines}
Baselines include representative graph learning methods, OOD generalization methods, and graph OOD methods. 
\begin{itemize}[leftmargin=1.5em]
    \item \textbf{Dynamic Graph Learning Methods}: 
    GCRN~\cite{seo2018structured} adopts GCN and GRU to capture temporal relations for node embeddings. 
    EvolveGCN~\cite{pareja2020evolvegcn} uses an LSTM to learn the evolution of the GCN's parameters. 
    DySAT~\cite{sankar2020dysat} uses self-attentions to learn spatial-temporal patterns. 
    \item \textbf{OOD Generalization Methods}: IRM~\cite{arjovsky2019invariant} aims to minimize invariant risk with an invariant predictor.
    V-REx~\cite{krueger2021out} proposes to reweight the risk to further extend IRM. GroupDRO~\cite{sagawa2019distributionally} learn to reduce the risk gap across training distributions. 
    \item \textbf{Static graph OOD Methods}: 
    SR-GNN~\cite{zhu2021shift} uses GNN to recognize the distribution shifts between the training set and the true inference distribution. 
    EERM~\cite{wu2022handling} maximizes the risk variance from multiple environments by leveraging invariance principles with graph neural networks adversarially. 
    \item \textbf{Dynamic graph OOD Methods}: 
    DIDA~\cite{zhang2022dynamic} is the first work that exploits invariant patterns on dynamic graphs by a disentangled mechanism.
    SILD~\cite{zhang2023spectral} addresses the dynamic graph distribution shifts by identifying and leveraging invariant/variant patterns in the spectral domain.
    EAGLE~\cite{yuan2023environment} models the coupled environments and performs fine-grained causal interventions with instantiated environment samples.

\end{itemize}

\subsection{Evaluation of Future Link Prediction Task (Q1)}
\label{subsec:LP}

\begin{table*}[!ht]
  \caption{AUC score  (\% ± standard deviation) and AUC decrease  ($\Delta$) of future link prediction task on real-world datasets with OOD shifts of node features. 
  (Best results: \textbf{bold}. Runner-ups: \underline{underlined}.)
  }
  \label{tab:res_node}
  \resizebox{\linewidth}{!}{
    \centering
      \begin{tabular}{c|ccr|ccr|ccr}
      \toprule
      \textbf{Dataset} & \multicolumn{3}{c|}{\textbf{COLLAB  ($\bar{p}=$~0.4)}} & \multicolumn{3}{c|}{\textbf{COLLAB  ($\bar{p}=$~0.6)}} & \multicolumn{3}{c}{\textbf{COLLAB  ($\bar{p}=$~0.8)}}\\
  \cmidrule{1-10}    \textbf{Model} & \multicolumn{1}{c}{Train} & \multicolumn{1}{c}{Test} &\multicolumn{1}{c|}{$\Delta $} & Train & Test &\multicolumn{1}{c|}{$\Delta$}  & Train & Test & \multicolumn{1}{c}{$\Delta $}  \\
      \midrule
  GCRN~\cite{seo2018structured}  & {\hspace{0.0em}69.60±1.14\hspace{0.0em}} & {\hspace{0.0em}72.57±0.72\hspace{0.0em}} & \underline{ 
  2.97 (4.27\%) $\uparrow$} &{\hspace{0.0em}74.71±0.17\hspace{0.0em}} & {\hspace{0.0em}72.29±0.47\hspace{0.0em}} &  
 2.42 (3.24\%) $\downarrow$ &{\hspace{0.0em}75.69±0.07\hspace{0.0em}} & {\hspace{0.0em}67.26±0.22\hspace{0.0em}} & 8.43 (11.14\%) $\downarrow$\\
      EvolveGCN~\cite{pareja2020evolvegcn} & 78.82±1.40 & 69.00±0.53 & 9.82 (12.46\%) $\downarrow$ & 79.47±1.68 & 62.70±1.14 & 16.77 (21.10\%) $\downarrow$& 81.07±4.10 & 60.13±0.89& 20.94 (25.83\%) $\downarrow$ \\
      DySAT~\cite{sankar2020dysat} & 84.71±0.80 & 70.24±1.26 & 14.47 (17.08\%) $\downarrow$ & 89.77±0.32 & 64.01±0.19 & 25.76 (28.70\%) $\downarrow$ & 94.02±1.29 & 62.19±0.39 & 31.83 (33.85\%) $\downarrow$\\
      \midrule
      IRM~\cite{arjovsky2019invariant}   & 85.20±0.07 & 69.40±0.09&15.8 (18.54\%) $\downarrow$ & 89.48±0.22 & 63.97±0.37&25.51 (28.51\%) $\downarrow$ &  \textbf{95.02±0.09} & 62.66±0.33&32.36 (34.06\%) $\downarrow$\\
      V-REx~\cite{krueger2021out} & 84.77±0.84 & 70.44±1.08&14.33 (16.90\%) $\downarrow$ & 89.81±0.21 & 63.99±0.21&25.82 (28.75\%) $\downarrow$ & 94.06±1.30 & 62.21±0.40& 31.85 (33.86\%) $\downarrow$\\
      GroupDRO~\cite{sagawa2019distributionally} & 84.78±0.85 & 70.30±1.23& 14.48 (17.08\%) $\downarrow$ & 89.90±0.11 & 64.05±0.21 &25.85 (28.75\%) $\downarrow$ & 94.08±1.33& 62.13±0.35& 31.95 (33.96\%) $\downarrow$\\
      \midrule
      SR-GNN~\cite{zhu2021shift}  & 75.96±1.19 & 70.95±1.10 &  5.01 (6.59\%) $\downarrow$  & 79.66±0.94 & 70.95±1.57 & 8.71 (10.93\%) $\downarrow$ & 84.73±1.10  & 71.00±1.23 & 13.73 (16.20\%) $\downarrow$ \\
      EERM~\cite{wu2022handling}  & 81.73±0.30 & 62.32±0.28 & 19.41 (23.74\%) $\downarrow$   & 87.42±0.22 & 62.47±0.24 & 24.95 (28.54\%) $\downarrow$ & 91.98±0.42  & 62.55±0.21 & 29.43 (32.00\%) $\downarrow$\\
      \midrule
      DIDA~\cite{zhang2022dynamic}  & 87.92±0.92 & 85.20±0.84 &   2.72 (3.09\%) $\downarrow$ & 91.22±0.59 & 82.89±0.23&8.33 (9.13\%) $\downarrow$ &  92.72±2.16 & 72.59±3.31& 20.13 (21.71\%) $\downarrow$ \\
      SILD~\cite{zhang2023spectral}  & 78.47±1.32 & 85.30±1.02 & \textbf{6.83 (8.70\%) $\uparrow$}   & 78.98±2.40 & 85.06±1.48 & \textbf{6.08 (7.70\%) $\uparrow$} &  79.49±1.66 & \underline{85.88±0.93} & \textbf{6.39 (8.04\%) $\uparrow$} \\
      EAGLE~\cite{yuan2023environment}  & \underline{92.97±0.88} & \underline{88.32±0.61} & 4.65 (5.00\%) $\downarrow$ & \textbf{94.52±0.42} & \underline{87.29±0.71} & 7.23 (7.65\%) $\downarrow$&  \underline{94.11±1.03} & 82.30±0.75 & 11.81 (12.55\%) $\downarrow$\\
      \midrule
      \textbf{\modelname} & \textbf{93.13±0.03} & \textbf{93.12±0.03}& 0.01 (0.01\%) $\downarrow$ & \underline{93.02±0.04} & \textbf{93.01±0.05}&\underline{0.01 (0.01\%) $\downarrow$} & 93.02±0.04 & \textbf{93.01±0.04}& \underline{0.01 (0.01\%) $\downarrow$}\\
      \bottomrule
      \end{tabular}%
  }
  \end{table*}%

\begin{figure*}[!htbp]
\centering
\subfigure[Results on COLLAB.]{
\begin{minipage}[t]{ 0.31\linewidth}
\centering
\includegraphics[width=\linewidth]{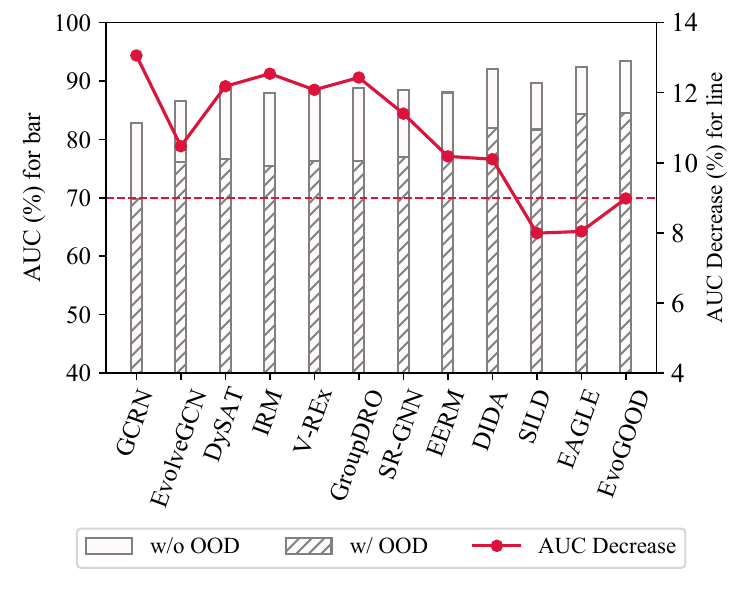}
\end{minipage}
}
\subfigure[Results on Yelp.]{
\begin{minipage}[t]{ 0.31\linewidth}
\centering
\includegraphics[width=\linewidth]{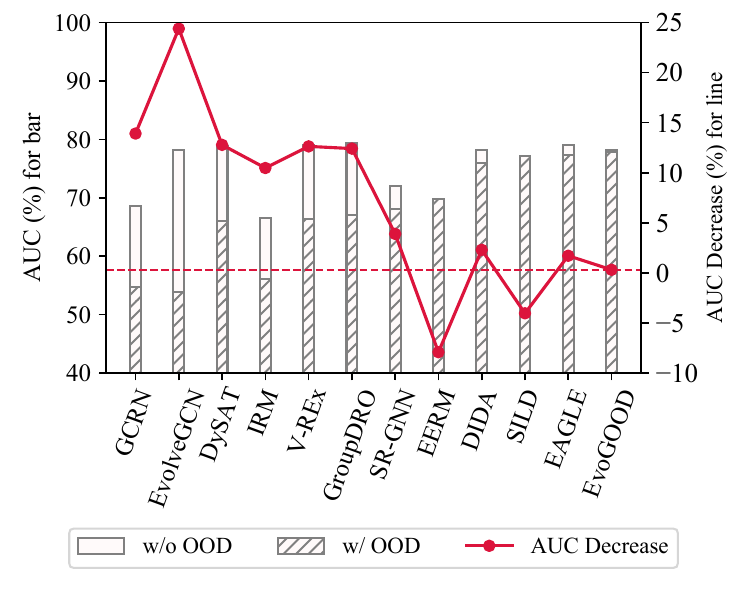}
\end{minipage}
}
\subfigure[Results on ACT.]{
\begin{minipage}[t]{ 0.31\linewidth}
\centering
\includegraphics[width=\linewidth]{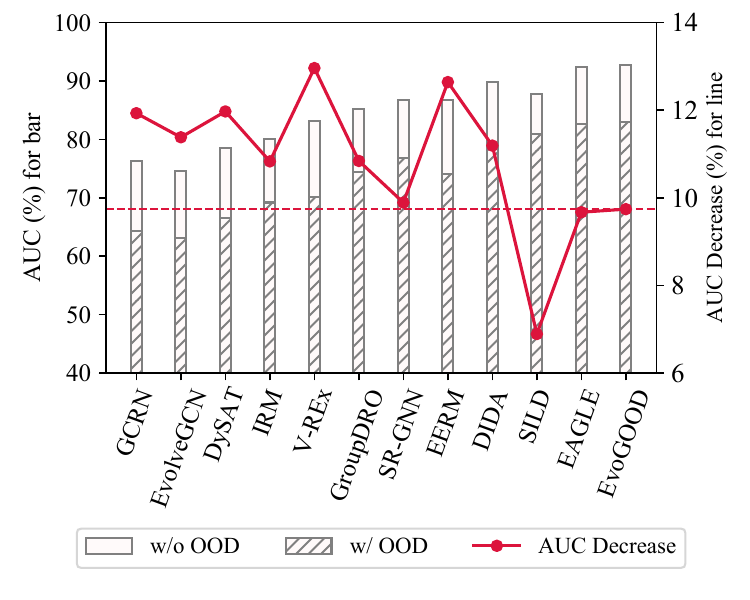}
\end{minipage}
}
\quad
\subfigure[Results on COLLAB ($\bar{p}=0.2$).]{
\begin{minipage}[t]{ 0.31\linewidth}
\centering
\includegraphics[width=\linewidth]{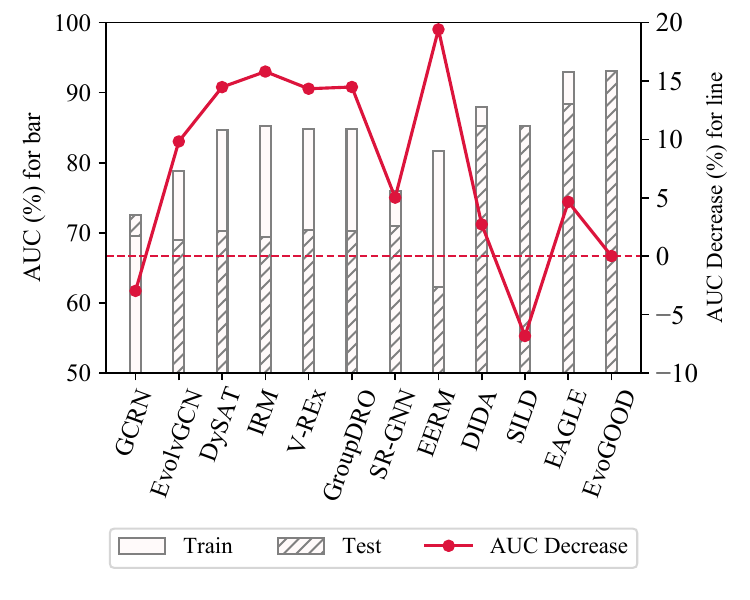}
\end{minipage}
}
\subfigure[Results on COLLAB ($\bar{p}=0.4$).]{
\begin{minipage}[t]{ 0.31\linewidth}
\centering
\includegraphics[width=\linewidth]{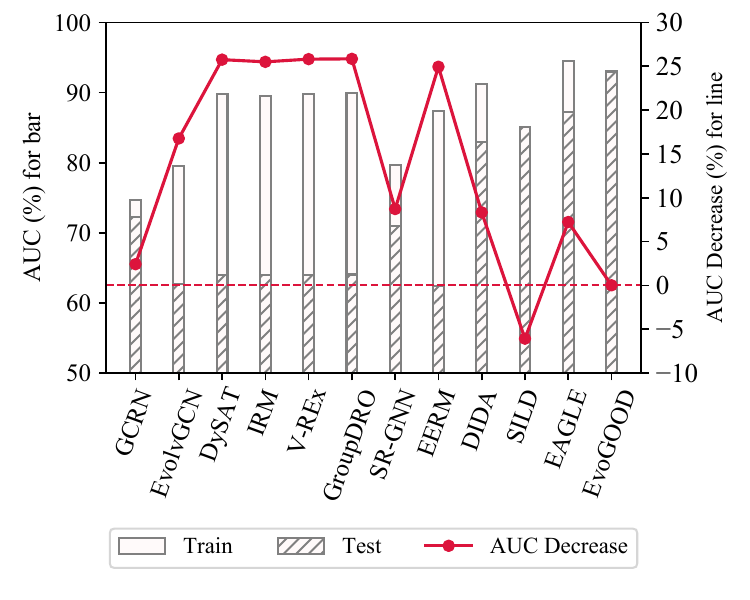}
\end{minipage}
}
\subfigure[Results on COLLAB ($\bar{p}=0.8$).]{
\begin{minipage}[t]{ 0.31\linewidth}
\centering
\includegraphics[width=\linewidth]{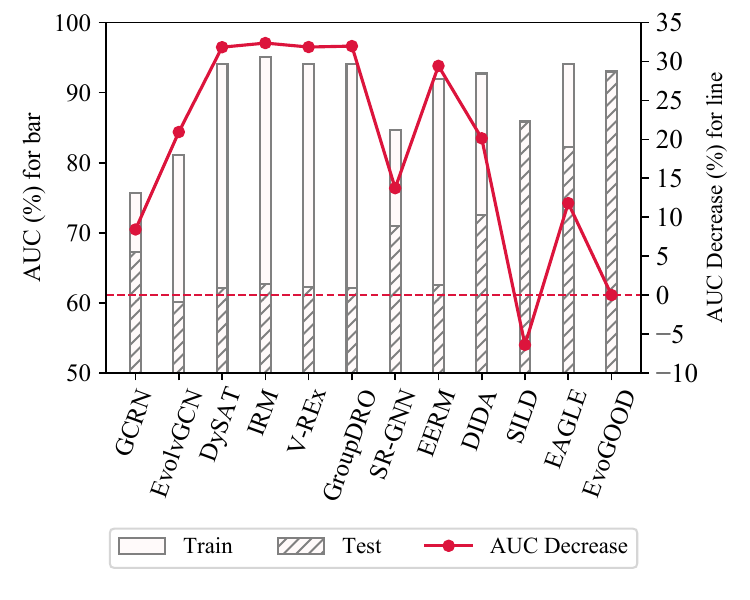}
\end{minipage}
}
\centering
\caption{Additional analysis of the performance. }
\vspace{-0.6em}
\label{fig:additional_analysis}
\end{figure*}

We evaluate the performance of \modelname~under distribution shifts of link attribute and node feature. 
\subsubsection{Distribution Shifts on Link Attributes}
\label{subsec:link_attribute}
\textbf{Settings. }
Following~\cite{zhang2022dynamic}, we selectively exclude specific attributes from the multi-attribute relations to act as shifted factors under OOD environments in the validation and training sets over time. 
The model has no access to any information regarding the excluded links until the testing phase, making the task more realistic and demanding in terms of generalization capabilities.
\change{For node-wise interventions, we sample environment instances from both the observed library $S_{ob}$ and the generated library $S_{ge}$. At each training step, an intervention ratio $s\%$ of the nodes is randomly selected to perform causal replacement according to Eq.~\ref{eq:intervention}. We set $s$ as a constant to balance training efficiency and effectiveness, ensuring that the intervention cost grows linearly with the number of nodes and predicted edges.}

\textbf{Results.} 
Table \ref{tab:res_link} presents the results, with \textit{w/o OOD} indicating testing without distribution shifts and \textit{w/ OOD} referring to testing under distribution shifts, $\Delta$(\%) denotes the AUC decrease and the corresponding percentage (the lower, the better). 
With the investigation of evolving environments, our \modelname~shows the best performance in all \textit{w/ OOD} datasets and two \textit{w/o OOD} datasets. 
Particularly on the most challenging dataset COLLAB, which has the longest period and very different link attribute distribution, \modelname~shows the best generalization ability. 
On all datasets, the performance of all baselines decreases dramatically with distribution shifts. 
Both static and dynamic baselines without OOD generalization (GAE, VGAE, GCRN, EvolveGCN, and DySAT) fail in distribution shifts, primarily due to their reliance on variant patterns with spurious correlations. 
Traditional OOD generalization baselines, such as IRM, V-REx, and GroupDRO, offer only modest improvements since they require environment labels, which are not provided in dynamic graph settings.
As the prior works for dynamic graph OOD generalization, DIDA~\cite{zhang2022dynamic} and EAGLE~\cite{yuan2023environment} show further progress by learning invariant patterns under distribution shifts. 
However, without considering the relation between invariant patterns and environments, its generalization ability is not as well as our \modelname~because of the label shift phenomenon~\cite{yu2023mind}.

\subsubsection{Distribution Shifts of Node Features}
\textbf{Settings. }
Here we focus on the most challenging dataset COLLAB. 
Following~\cite{wu2022handling,zhang2022dynamic}, to induce node feature shifts, we uniformly sample $\Tilde{\mathbf{A}}^{t+1}$ from the $|\mathcal{E}^{t+1}|$ links at the next time-step in COLLAB with the sampling probability $p(t)=\bar{p}+\sigma \cos (t)$. 
Then we factorize links $\Tilde{\mathbf{A}}^{t+1}$ into synthetic shifted features $\Tilde{\mathbf{X}^{t}} \in \mathbb{R}^{N\times d}$ by optimizing the reconstruction cross-entropy loss $\ell(\Tilde{\mathbf{X}^{t}} \Tilde{\mathbf{X}}^{t\top}, \Tilde{\mathbf{A}}^{t+1})$. 
The synthetic node features are concatenated with the original ones to form the input while maintaining the original graph structure. 
$\Tilde{\mathbf{X}}^{t}$ with a larger $p(t)$ exhibits stronger spurious correlations with future environments. 
We set $\bar{p}$ to 0.4, 0.6, or 0.8 for the training set and 0.1 for the test set, indicating that the training graphs contain more spurious correlations.
\change{We adopt the same intervention procedure by sampling from observed and generated instances as in Section~\ref{subsec:link_attribute}.}

\textbf{Results.} 
Table \ref{tab:res_node} shows the results, where $\Delta$(\%) denotes the AUC decrease and the corresponding percentage. 
The results of static GNNs are omitted as they cannot support dynamic node features. 
It is evident that the testing performance of all baselines decreases notably. 
In contrast, our \modelname~manages distribution shifts in node features on dynamic graphs more effectively and exhibits significantly smaller performance gaps. 
Notably, \modelname~outperforms the best baseline, EAGLE, by roughly 5\%, 6\%, and 11\% in AUC on the testing set across different shifting levels. 
Even on the most challenging dataset, COLLAB, with a high level of spurious correlations $(\bar{p}=0.8)$, our \modelname~maintains satisfactory performance, indicating its superior ability to mitigate spatio-temporal spurious correlations in highly challenging OOD settings.

We visualize Table~\ref{tab:res_link} and Table~\ref{tab:res_node} to provide additional analysis. 
Fig.~\ref{fig:additional_analysis} visualizes the task performance (AUC) under \textit{w/o OOD} and \textit{w/ OOD} settings for link attribute distribution shifts and the task performance under training set with more spurious correlations and test set with less spurious correlations.  
Additionally, Fig.~\ref{fig:additional_analysis} illustrates the decreasing value of AUC for baselines, with the horizontal dashed line indicating the decreasing value of AUC for \modelname. 
A smaller decrease suggests that the methods have a more robust generalization ability under OOD shifts. 
On the vast majority of datasets, \modelname~can improve the task performance while minimizing AUC decrease. 
In most cases, the AUC decreasing value of our \modelname~exceeds the baseline except for SILD. 
Although SILD has satisfied generalization ability, its task performance is inherently poorer. 
In summary, \modelname~shows consistent advantage in terms of both performance and generalization ability. 

\begin{figure*}[!htbp]
\centering
\subfigure[Ablation study results for \modelname.]{
\begin{minipage}[t]{ 0.31\linewidth}
\centering
\includegraphics[width=\linewidth]{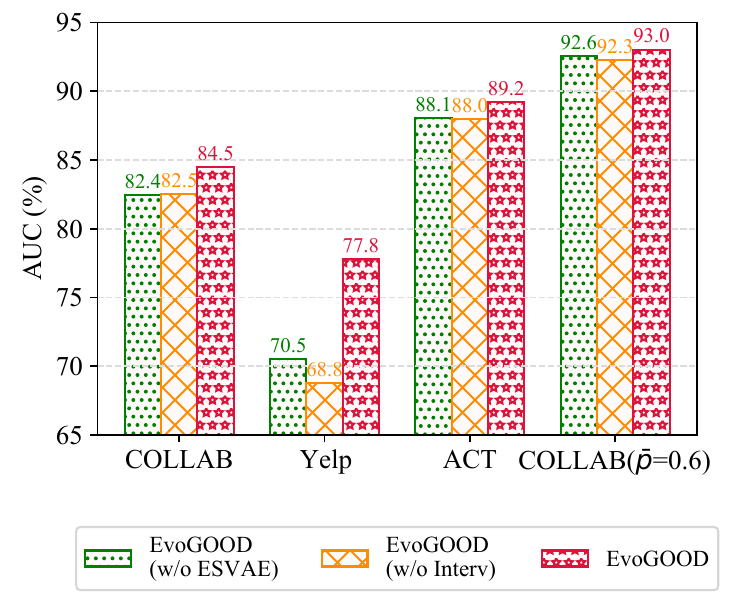}
\label{fig:abla}
\end{minipage}
}
\subfigure[Parameter sensitivity of ESVAE loss.]{
\begin{minipage}[t]{ 0.31\linewidth}
\centering
\includegraphics[width=\linewidth]{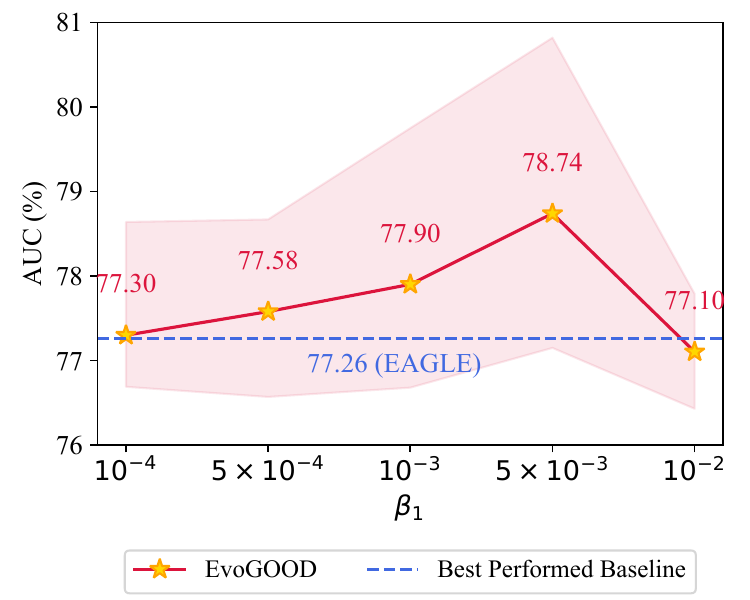}
\label{fig:vae_loss}
\end{minipage}
}
\subfigure[Parameter sensitivity of intervention loss.]{
\begin{minipage}[t]{ 0.31\linewidth}
\centering
\includegraphics[width=\linewidth]{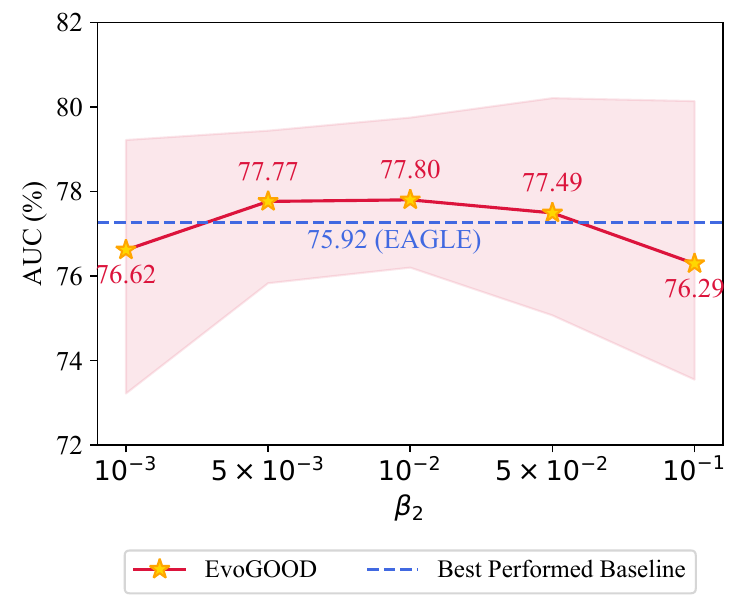}
\label{fig:risk_loss}
\end{minipage}
}
\centering
\caption{Performance analysis of \modelname. }
\vspace{-0.6em}
\label{fig:exp}
\end{figure*}

\subsection{Evaluation of Node Classification Task (Q2)}
\label{subsec:NC}
\textbf{Settings. }
We conduct additional node classification experiments on synthetic OOD datasets as mentioned in Section~\ref{subsubsec:datasets}. 
We assign shift-level parameters of 0.4, 0.6, and 0.8 to the training and validation sets, and a parameter value of 0 to the test set.
Table~\ref{tab:node_cls} shows the results (Accuracy \%). 

\begin{table}[!htp]
\caption{Accuracy (\% ± standard deviation) of node classification task on synthetic datasets with different OOD shift levels. (Best results: \textbf{bold}. Runner-ups: \underline{underlined}.)}
\label{tab:node_cls}
\resizebox{\linewidth}{!}{
\centering
\begin{tabular}{cccc}
\toprule
\textbf{Dataset} & \textbf{Node-Syn-0.4}        & \textbf{Node-Syn-0.6}        & \textbf{Node-Syn-0.8}        \\ \midrule
GCRN~\cite{seo2018structured}          & 27.19±2.18 & 25.95±0.80 & 29.26±0.69 \\
EGCN~\cite{pareja2020evolvegcn}          & 24.01±2.29 & 22.75±0.96 & 24.98±1.32 \\
DySAT~\cite{sankar2020dysat}         & 40.95±2.89 & 37.94±1.01 & 30.90±1.97 \\ \midrule
IRM~\cite{arjovsky2019invariant}           & 33.23±4.70 & 30.29±1.71 & 29.43±1.38 \\
VREx~\cite{krueger2021out}         & 41.78±1.30 & 38.11±2.81 & 29.56±0.44 \\
GroupDRO~\cite{sagawa2019distributionally}      & 41.35±2.19 & 35.74±3.93 & 31.01±1.24 \\ \midrule
SR-GNN~\cite{zhu2021shift}        &  41.33±2.47          &     40.60±3.45       &    \underline{39.42±0.71}        \\
EERM~\cite{wu2022handling}          &    37.18±2.61        &     34.81±3.79       & 34.02±1.71           \\ \midrule
DIDA~\cite{zhang2022dynamic}          & 43.33±7.74 & 39.48±7.93 & 28.14±3.07 \\
SILD~\cite{zhang2023spectral}          & \underline{43.62±2.74} & 39.78±3.56 & 38.64±2.76 \\
EAGLE~\cite{yuan2023environment}         & 41.37±0.68 & \underline{41.11±1.86} & 37.16±0.90 \\ \midrule
\textbf{EvoGOOD (ours) }      & \textbf{44.30±4.61} & \textbf{42.89±1.42} & \textbf{39.81±2.14} \\ \bottomrule
\end{tabular}
}
\end{table}

\textbf{Results. }
We can observe that \modelname~shows the best performance on the node classification task under all OOD levels compared with all baselines. 
As the correlation between variant patterns and labels intensifies, the model tends to rely more on these variant patterns during training, which can hinder its generalization capability. 
As the level of distribution shift increases, we observe a performance decline across nearly all methods; however, the performance decrease for \modelname~is notably less than that of the baselines. This indicates that \modelname~can effectively identify invariant patterns within different environments and mitigate the adverse effects of variant patterns.

\subsection{Effectiveness and Efficiency of \modelname~(Q3)}
\label{subsec:module_effectiveness}
\subsubsection{Ablation Study}
\label{subsubsec:abla}
In this subsection, we conduct ablation studies to analyze the effectiveness of the proposed ESVAE and the causal intervention mechanisms: 
\begin{itemize}[leftmargin=*]
    \item \textbf{\modelname~(\textit{w/o ESVAE})} removes the ESVAE proposed in Section~\ref{subsec:ESVAE} and uses a shared variational auto-encoder across all graph snapshots to learn the environment distribution. 
    \item \textbf{\modelname~(\textit{w/o Interv})} removes the causal intervention mechanism in Section~\ref{subsec:intervention} and is optimized by Eq.~\eqref{eq:final_loss} without the $\mathcal{L}_{\mathrm{risk}}$ term. 
\end{itemize}
Fig.~\ref{fig:abla} shows the results. 
Overall, \modelname~shows consistent advantages over the other two variants on all datasets, showing the advantage of learning the environment evolution and performing corresponding causal interventions. 
Especially on Yelp, the proposed intervention mechanism with inferred environment distribution dramatically improves the generalization ability of the method. 

\change{To further validate our contributions, we investigate the improvements achieved when incorporating the proposed ESVAE (Environment Sequential Variational Autoencoder in Section~\ref{subsec:ESVAE}) and IPR (Invariant Pattern Recognition in Section~\ref{subsec:IPR}) into the original EAGLE~\cite{yuan2023environment} presented in the previous conference version. The results are shown in Table~\ref{tab:improvement_over_conference_1} and Table~\ref{tab:improvement_over_conference_2}.}
\begin{table}[htbp]
  \centering
  \caption{\change{AUC score improvements (\% ± standard deviation) obtained by incorporating the proposed ESVAE and IPR modules into the original EAGLE~\cite{yuan2023environment} presented in the conference version, on future link prediction tasks over real-world datasets with OOD shifts in \textbf{link attributes}.}}
  \label{tab:improvement_over_conference_1}
  {\color{\changecolor}
  \begin{tabularx}{\linewidth}{l *{3}{>{\centering\arraybackslash}X}}
    \toprule
    \textbf{Dataset} & \textbf{COLLAB} & \textbf{YELP} & \textbf{ACT} \\
    \midrule
    EAGLE & 84.41±0.87 & 77.26±0.74 & 82.70±0.72 \\
    EAGLE(+ESVAE) & 84.45±0.28 & 77.78±0.82 & 83.20±0.45 \\
    EAGLE(+IPR) & 84.09±0.25 & 77.64±0.30 & 83.26±0.33 \\
    \bottomrule
  \end{tabularx}
  }
\end{table}

\begin{table}[htbp]
  \centering
  \caption{\change{AUC score improvements (\% ± standard deviation) obtained by incorporating the proposed ESVAE and IPR modules into the original EAGLE~\cite{yuan2023environment} presented in the conference version, on future link prediction tasks over real-world datasets with OOD shifts in \textbf{node features}.}}
  \label{tab:improvement_over_conference_2}
  {\color{\changecolor}
  \begin{tabularx}{\linewidth}{l *{3}{>{\centering\arraybackslash}X}}
    \toprule
    \textbf{Dataset} & \makecell{\textbf{COLLAB}\\($\bar{p}=$~0.4)}  & \makecell{\textbf{COLLAB}\\($\bar{p}=$~0.6)}  & \makecell{\textbf{COLLAB}\\($\bar{p}=$~0.8)}  \\
    \midrule
    EAGLE & 88.32±0.61 & 87.29±0.71 & 82.30±0.75 \\
    EAGLE(+ESVAE) & 89.16±0.36 & 87.68±0.14 & 83.03±0.18 \\
    EAGLE(+IPR) & 89.44±0.30 & 87.77±0.31 & 83.35±0.42 \\
    \bottomrule
  \end{tabularx}
  }
\end{table}

\change{\textbf{Results.} We observe that incorporating the proposed modules into the original EAGLE leads to consistent improvements on future link prediction tasks under OOD shift. As shown in Table 5, adding either ESVAE or IPR yields higher AUC scores than the baseline EAGLE across almost all datasets, except for the COLLAB dataset, when the distribution shift occurs in link attributes. Similarly, Table 6 demonstrates that both modules bring noticeable gains under OOD shifts in node features. In particular, ESVAE consistently enhances the robustness of EAGLE, while IPR further contributes complementary improvements. These results indicate that both modules effectively strengthen the generalization ability of EAGLE by mitigating the adverse impact of distribution shifts in different feature spaces.}

\begin{figure*}[!htbp]
\centering
\subfigure[Efficiency comparision.]{
\begin{minipage}[t]{ 0.31\linewidth}
\centering
\includegraphics[width=\linewidth]{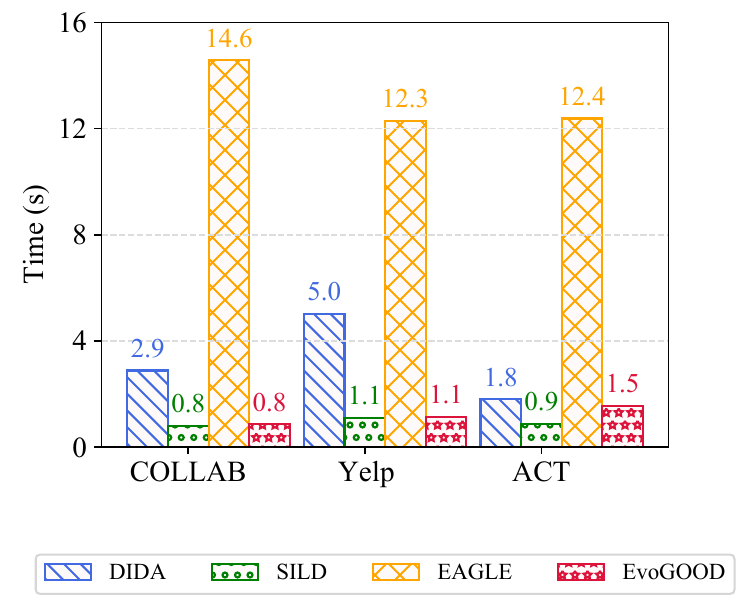}
\label{fig:efficiency}
\end{minipage}
}
\subfigure[AUC with different invariant levels.]{
\begin{minipage}[t]{ 0.31\linewidth}
\centering
\includegraphics[width=\linewidth]{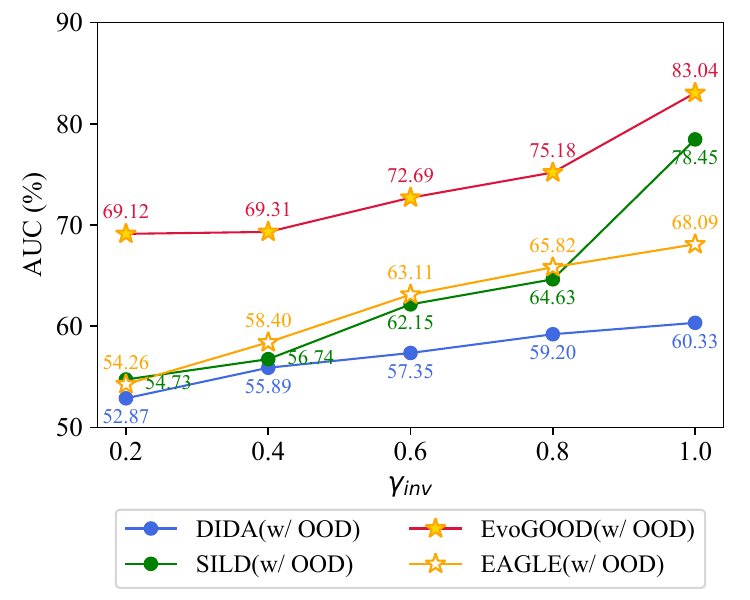}
\label{fig:invariant_level}
\end{minipage}
}
\subfigure[AUC with different evolving levels.]{
\begin{minipage}[t]{ 0.31\linewidth}
\centering
\includegraphics[width=\linewidth]{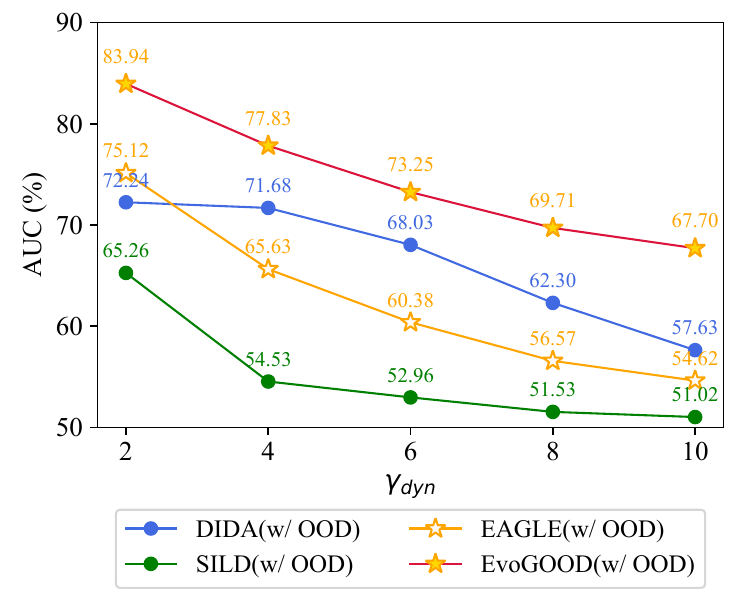}
\label{fig:evolution}
\end{minipage}
}

\centering
\caption{Investigation on Environment-aware Invariant Pattern Recognition. }
\vspace{-0.6em}
\label{fig:exp}
\end{figure*}

\subsubsection{Loss Coefficient}
We analyze the coefficient sensitivity of the ESVAE loss $\mathcal{L}_{\rm ESVAE}$ and the causal intervention loss $\mathcal{L}_{risk}$, $\beta_1$ and $\beta_2$, in Eq.~\eqref{eq:final_loss}, respectively. 
Fig.~\ref{fig:vae_loss} and Fig.~\ref{fig:risk_loss} show the results on Yelp. 
The solid line is the average AUC score in the testing stage and the shaded region is the standard deviation for 5 runs. 
The dashed line is the average AUC score of the best baseline EAGLE.

From the results, we can observe that the task performance experiences a significant decrease when $\beta_1$ and $\beta_2$ are too large or too small. 
Besides, with most parameter settings, \modelname~shows better performance than EAGLE. 
In summary, $\beta_1$ serves to mediate the trade-off between capturing the evolution of environments and estimating the distribution of environments, effectively acting as a bi-level optimization parameter. 
Similarly, $\beta_2$ plays a critical role in striking a balance between utilizing spatio-temporal invariant patterns for consistent predictions and adapting to a variety of latent environments for generalization.

\subsubsection{Efficiency of \modelname}
To compare the efficiency of \modelname~and other dynamic graph OOD methods, we show their average running time (in seconds) per epoch in Fig.~\ref{fig:efficiency}. 
We can observe that \modelname~and SILD show the best efficiency.  
\modelname~has much better efficiency than DIDA and EAGLE, which benefits from the simple dynamic graph encoding mechanism instead of the disentangled graph encoding method as in DIDA and EAGLE. 

\subsection{Investigation on Environment-aware Invariant Pattern Recognition (Q4)}
\label{subsec:invariant_recognition}

To examine the proposed mechanism for recognizing environment-aware invariant patterns, we create synthetic datasets derived from COLLAB by altering the environmental factors.
\begin{figure*}[!htbp]
\centering
\subfigure[Results on COLLAB.]{
\begin{minipage}[t]{ 0.31\linewidth}
\centering
\includegraphics[width=\linewidth]{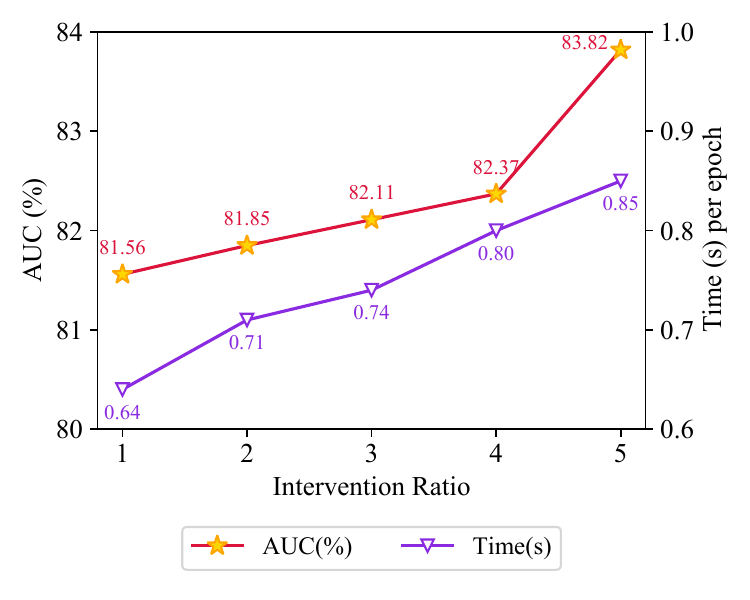}
\end{minipage}
}
\subfigure[Results on Yelp.]{
\begin{minipage}[t]{ 0.31\linewidth}
\centering
\includegraphics[width=\linewidth]{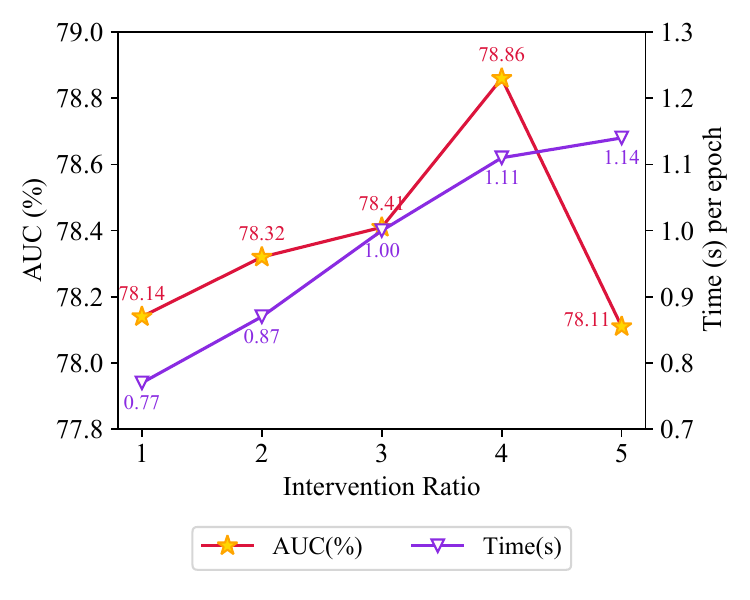}
\end{minipage}
}
\subfigure[Results on ACT.]{
\begin{minipage}[t]{ 0.31\linewidth}
\centering
\includegraphics[width=\linewidth]{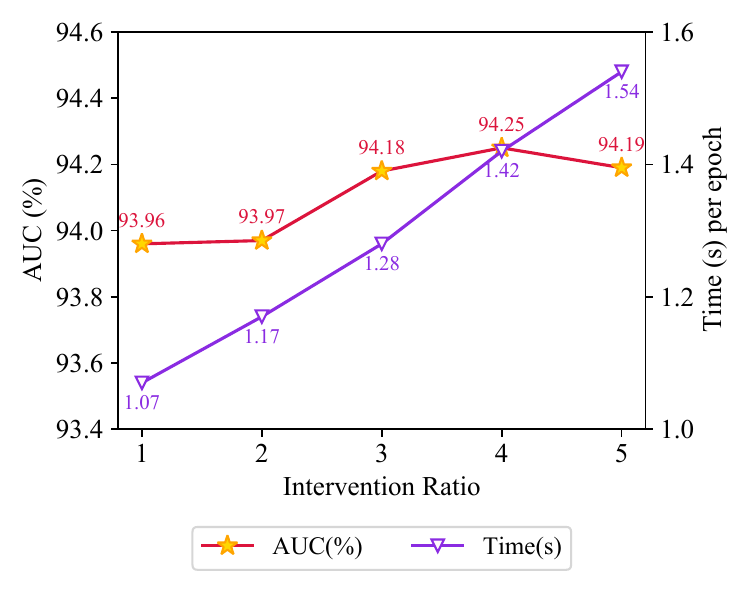}
\end{minipage}
}
\centering
\vspace{-0.6em}
\caption{AUC and efficiency with different intervention ratios. }
\label{fig:ratio}
\end{figure*}

\subsubsection{Stationary Environments. }
We assume the environments are affected by $K$ hidden variables and let $\gamma_{inv}$ denote the ratio of the environments in which the invariant patterns are learned. 
As $\gamma_{inv}$ increases, the reliability of the invariant patterns strengthens. 
Node features for different variables are drawn from $K$ multivariate normal distributions $\mathcal{N}(\boldsymbol{\mu}_{K_i};\boldsymbol{\sigma}_{K_i})$. 
Features associated with invariant patterns pertinent to $K_i$ will undergo minimal perturbation, while the opposite is true for other features.
The graph structures are constructed based on node feature similarity. 
For the OOD settings, we filter out links built under a certain $K_i$ during the training and validation stages as in Section~\ref{subsec:link_attribute}. 
We compare \modelname~with the strongest baselines DIDA~\cite{zhang2022dynamic}, SILD~\cite{zhang2023spectral} and EAGLE~\cite{yuan2023environment}.

The results are shown in Fig.~\ref{fig:invariant_level}, where the reported value is the mean AUC score over 5 runs. 
We can observe that, with $\gamma_{inv}$ increasing, the performance of \modelname~on OOD dataset shows a significant increase from 69.12\% to 83.04\%. 
Although DIDA exhibits a rising trend, its improving rate is more moderate. 
This suggests that DIDA is less effective at detecting changes in the underlying environments induced by varying $\gamma_{inv}$, resulting in suboptimal generalization performance. 
In contrast, \modelname~is able to leverage more dependable invariant patterns and execute high-quality causal interventions, thereby achieving superior generalization capabilities.

\subsubsection{Non-stationary Environments. }
We assume the environments are affected by a static factor and a dynamic factor in Section~\ref{subsec:ESVAE}. 
The node features with respect to the static factor are drawn from a multivariate normal distribution $\mathcal{N}(\boldsymbol{\mu}_{sta};\boldsymbol{\sigma}_{sta})$. 
Node features with respect to the dynamic factor are drawn from $\sin (4T) *\mathcal{N}(\boldsymbol{\mu}_{dyn};\boldsymbol{\sigma}_{dyn})$, where $T$ is the period of the corresponding dataset. 
Let $\gamma_{dyn}$ represent the dynamic level of the environments. 
Then these two types of features are mixed with the dynamic weight $\gamma_{dyn}$. 
The higher $\gamma_{dyn}$ is, the more dynamic the environments are.

We compare \modelname~with the strongest baselines DIDA~\cite{zhang2022dynamic}, SILD~\cite{zhang2023spectral} and EAGLE~\cite{yuan2023environment}. 
Fig.~\ref{fig:evolution} shows the mean AUC scores over 5 runs. 
We can observe that with $\sigma_{dyn}$ increasing, all methods' performance decreases, indicating that the dynamic graphs with a larger evolving level are more challenging to generalize. 
Our \modelname~shows consistent advantage over the other three methods.

\subsubsection{Casual Intervention Ratio}
\label{subsubsec:intervention_ratio}
We investigate the effect of the intervention ratio when performing the fine-grained causal interventions, \ie, the intervention times to the total number of nodes $|\mathcal{V}|$. 
The changes in the AUC score and the average training time per epoch with the intervention ratio increasing on COLLAB, Yelp, and ACT are shown in Fig.~\ref{fig:ratio}.

We can observe that \modelname's performance also improved with the intervention ratio increasing to 4. 
The training time cost also increases with the intervention ratio but is still acceptable.

\section{Conclusion}
\label{sec:conclusion}
This paper introduces a novel OOD generalization framework, \textbf{\modelname}, designed for dynamic graph data.
\modelname~exploits the spatial-temporal invariant patterns by modeling the evolution of the complex non-stationary environments for the first time. 
\modelname~first learns the environment evolution by an environment sequential auto-encoder, which then employs fine-grained, node-wise causal interventions using the inferred environment distribution to identify spatio-temporal invariant patterns. 
The empirical results, derived from both real-world and synthetic datasets, reveal that \modelname~exhibits superior generalization capabilities compared to current methods.

\section*{Acknowledgments}
The corresponding author is Jianxin Li. 
This work is supported by the NSFC through grant No.62225202 and No.62302023.
This work is supported in part by NSF under grant POSE-2346158.

\ifCLASSOPTIONcaptionsoff
  \newpage
\fi

\bibliographystyle{IEEEtran}
\bibliography{ref}

\begin{thebibliography}{100}
\providecommand{\url}[1]{#1}
\csname url@samestyle\endcsname
\providecommand{\newblock}{\relax}
\providecommand{\bibinfo}[2]{#2}
\providecommand{\BIBentrySTDinterwordspacing}{\spaceskip=0pt\relax}
\providecommand{\BIBentryALTinterwordstretchfactor}{4}
\providecommand{\BIBentryALTinterwordspacing}{\spaceskip=\fontdimen2\font plus
\BIBentryALTinterwordstretchfactor\fontdimen3\font minus \fontdimen4\font\relax}
\providecommand{\BIBforeignlanguage}[2]{{%
\expandafter\ifx\csname l@#1\endcsname\relax
\typeout{** WARNING: IEEEtran.bst: No hyphenation pattern has been}%
\typeout{** loaded for the language `#1'. Using the pattern for}%
\typeout{** the default language instead.}%
\else
\language=\csname l@#1\endcsname
\fi
#2}}
\providecommand{\BIBdecl}{\relax}
\BIBdecl

\bibitem{wang2022duality}
J.~Wang, Z.~Zhang, Z.~Shi, J.~Cai, S.~Ji, and F.~Wu, ``Duality-induced regularizer for semantic matching knowledge graph embeddings,'' \emph{IEEE Transactions on Pattern Analysis and Machine Intelligence}, vol.~45, no.~2, pp. 1652--1667, 2022.

\bibitem{cordeiro2016dynamic}
M.~Cordeiro, R.~P. Sarmento, and J.~Gama, ``Dynamic community detection in evolving networks using locality modularity optimization,'' \emph{Social Network Analysis and Mining}, vol.~6, pp. 1--20, 2016.

\bibitem{wu2022graph}
S.~Wu, F.~Sun, W.~Zhang, X.~Xie, and B.~Cui, ``Graph neural networks in recommender systems: a survey,'' \emph{ACM Computing Surveys}, vol.~55, no.~5, pp. 1--37, 2022.

\bibitem{ma2021comprehensive}
X.~Ma, J.~Wu, S.~Xue, J.~Yang, C.~Zhou, Q.~Z. Sheng, H.~Xiong, and L.~Akoglu, ``A comprehensive survey on graph anomaly detection with deep learning,'' \emph{IEEE Transactions on Knowledge and Data Engineering}, 2021.

\bibitem{shen2021towards}
Z.~Shen, J.~Liu, Y.~He, X.~Zhang, R.~Xu, H.~Yu, and P.~Cui, ``Towards out-of-distribution generalization: A survey,'' \emph{arXiv preprint arXiv:2108.13624}, 2021.

\bibitem{zhu2021shift}
Q.~Zhu, N.~Ponomareva, J.~Han, and B.~Perozzi, ``Shift-robust gnns: Overcoming the limitations of localized graph training data,'' in \emph{NeurIPS}, 2021, pp. 27\,965--27\,977.

\bibitem{hendrycks2021many}
D.~Hendrycks, S.~Basart, N.~Mu, S.~Kadavath, F.~Wang, E.~Dorundo, R.~Desai, T.~Zhu, S.~Parajuli, M.~Guo \emph{et~al.}, ``The many faces of robustness: A critical analysis of out-of-distribution generalization,'' in \emph{CVPR}, 2021, pp. 8340--8349.

\bibitem{li2022ood}
H.~Li, X.~Wang, Z.~Zhang, and W.~Zhu, ``Ood-gnn: Out-of-distribution generalized graph neural network,'' \emph{IEEE Transactions on Knowledge and Data Engineering}, 2022.

\bibitem{chen2022learning}
Y.~Chen, Y.~Zhang, Y.~Bian, H.~Yang, M.~Kaili, B.~Xie, T.~Liu, B.~Han, and J.~Cheng, ``Learning causally invariant representations for out-of-distribution generalization on graphs,'' in \emph{NeurIPS}, vol.~35, 2022, pp. 22\,131--22\,148.

\bibitem{gui2022good}
S.~Gui, X.~Li, L.~Wang, and S.~Ji, ``Good: A graph out-of-distribution benchmark,'' \emph{NeurIPS}, vol.~35, pp. 2059--2073, 2022.

\bibitem{fan2021generalizing}
S.~Fan, X.~Wang, C.~Shi, P.~Cui, and B.~Wang, ``Generalizing graph neural networks on out-of-distribution graphs,'' \emph{arXiv preprint arXiv:2111.10657}, 2021.

\bibitem{pearl2009causal}
Pearl and Judea, ``Causal inference in statistics: An overview,'' \emph{Statistics Surveys}, vol.~3, pp. 96--146, 2009.

\bibitem{pearl2010causal}
J.~Pearl, ``Causal inference,'' \emph{Causality: objectives and assessment}, pp. 39--58, 2010.

\bibitem{pearl2018book}
J.~Pearl and D.~Mackenzie, \emph{The book of why: the new science of cause and effect}.\hskip 1em plus 0.5em minus 0.4em\relax Basic books, 2018.

\bibitem{gagnon2022woods}
J.-C. Gagnon-Audet, K.~Ahuja, M.-J. Darvishi-Bayazi, G.~Dumas, and I.~Rish, ``Woods: Benchmarks for out-of-distribution generalization in time series tasks,'' \emph{arXiv preprint arXiv:2203.09978}, 2022.

\bibitem{arjovsky2019invariant}
M.~Arjovsky, L.~Bottou, I.~Gulrajani, and D.~Lopez-Paz, ``Invariant risk minimization,'' \emph{arXiv preprint arXiv:1907.02893}, 2019.

\bibitem{rosenfeld2020risks}
E.~Rosenfeld, P.~Ravikumar, and A.~Risteski, ``The risks of invariant risk minimization,'' \emph{arXiv preprint arXiv:2010.05761}, 2020.

\bibitem{wu2022discovering}
Y.-X. Wu, X.~Wang, A.~Zhang, X.~He, and T.-S. Chua, ``Discovering invariant rationales for graph neural networks,'' \emph{arXiv preprint arXiv:2201.12872}, 2022.

\bibitem{chang2020invariant}
S.~Chang, Y.~Zhang, M.~Yu, and T.~Jaakkola, ``Invariant rationalization,'' in \emph{ICML}, 2020, pp. 1448--1458.

\bibitem{ahuja2020invariant}
K.~Ahuja, K.~Shanmugam, K.~Varshney, and A.~Dhurandhar, ``Invariant risk minimization games,'' in \emph{ICML}, 2020, pp. 145--155.

\bibitem{mitrovic2020representation}
J.~Mitrovic, B.~McWilliams, J.~Walker, L.~Buesing, and C.~Blundell, ``Representation learning via invariant causal mechanisms,'' \emph{arXiv preprint arXiv:2010.07922}, 2020.

\bibitem{peters2017elements}
J.~Peters, D.~Janzing, and B.~Sch{\"o}lkopf, \emph{Elements of causal inference: foundations and learning algorithms}.\hskip 1em plus 0.5em minus 0.4em\relax The MIT Press, 2017.

\bibitem{li2022graphde}
Z.~Li, Q.~Wu, F.~Nie, and J.~Yan, ``Graphde: A generative framework for debiased learning and out-of-distribution detection on graphs,'' in \emph{NeurIPS}, 2022, pp. 30\,277--30\,290.

\bibitem{zhang2022dynamic}
Z.~Zhang, X.~Wang, Z.~Zhang, H.~Li, Z.~Qin, and W.~Zhu, ``Dynamic graph neural networks under spatio-temporal distribution shift,'' in \emph{NeurIPS}, 2022.

\bibitem{sugiyama2012machine}
M.~Sugiyama and M.~Kawanabe, \emph{Machine learning in non-stationary environments: Introduction to covariate shift adaptation}.\hskip 1em plus 0.5em minus 0.4em\relax MIT press, 2012.

\bibitem{park2020hop}
Y.-J. Park, K.~Shin, and K.-M. Kim, ``Hop sampling: A simple regularized graph learning for non-stationary environments,'' in \emph{SIGKDD 2020 Workshop on Mining and Learning with Graphs}, 2020.

\bibitem{yuan2023environment}
H.~Yuan, Q.~Sun, X.~Fu, Z.~Zhang, C.~Ji, H.~Peng, and J.~Li, ``Environment-aware dynamic graph learning for out-of-distribution generalization,'' \emph{NeurIPS}, vol.~36, 2024.

\bibitem{seo2018structured}
Y.~Seo, M.~Defferrard, P.~Vandergheynst, and X.~Bresson, ``Structured sequence modeling with graph convolutional recurrent networks,'' in \emph{ICONIP}, 2018, pp. 362--373.

\bibitem{sankar2020dysat}
A.~Sankar, Y.~Wu, L.~Gou, W.~Zhang, and H.~Yang, ``Dysat: Deep neural representation learning on dynamic graphs via self-attention networks,'' in \emph{WSDM}, 2020, pp. 519--527.

\bibitem{cho2014learning}
K.~Cho, B.~Van~Merri{\"e}nboer, C.~Gulcehre, D.~Bahdanau, F.~Bougares, H.~Schwenk, and Y.~Bengio, ``Learning phrase representations using rnn encoder-decoder for statistical machine translation,'' \emph{arXiv preprint arXiv:1406.1078}, 2014.

\bibitem{zhou2018dynamic}
L.~Zhou, Y.~Yang, X.~Ren, F.~Wu, and Y.~Zhuang, ``Dynamic network embedding by modeling triadic closure process,'' in \emph{AAAI}, vol.~32, no.~1, 2018.

\bibitem{zuo2018embedding}
Y.~Zuo, G.~Liu, H.~Lin, J.~Guo, X.~Hu, and J.~Wu, ``Embedding temporal network via neighborhood formation,'' in \emph{SIGKDD}, 2018, pp. 2857--2866.

\bibitem{ji2023higher}
C.~Ji, T.~Zhao, Q.~Sun, X.~Fu, and J.~Li, ``Higher-order memory guided temporal random walk for dynamic heterogeneous network embedding,'' \emph{Pattern Recognition}, p. 109766, 2023.

\bibitem{wang2020epne}
J.~Wang, Y.~Jin, G.~Song, and X.~Ma, ``Epne: Evolutionary pattern preserving network embedding,'' in \emph{ECAI}, 2020.

\bibitem{pareja2020evolvegcn}
A.~Pareja, G.~Domeniconi, J.~Chen, T.~Ma, T.~Suzumura, H.~Kanezashi, T.~Kaler, T.~Schardl, and C.~Leiserson, ``Evolvegcn: Evolving graph convolutional networks for dynamic graphs,'' in \emph{AAAI}, vol.~34, no.~04, 2020, pp. 5363--5370.

\bibitem{ma2019disentangled}
J.~Ma, P.~Cui, K.~Kuang, X.~Wang, and W.~Zhu, ``Disentangled graph convolutional networks,'' in \emph{ICML}, 2019, pp. 4212--4221.

\bibitem{liu2020independence}
Y.~Liu, X.~Wang, S.~Wu, and Z.~Xiao, ``Independence promoted graph disentangled networks,'' in \emph{AAAI}, vol.~34, no.~04, 2020, pp. 4916--4923.

\bibitem{fan2022debiasing}
S.~Fan, X.~Wang, Y.~Mo, C.~Shi, and J.~Tang, ``Debiasing graph neural networks via learning disentangled causal substructure,'' \emph{NeurIPS}, vol.~35, pp. 24\,934--24\,946, 2022.

\bibitem{yang2020factorizable}
Y.~Yang, Z.~Feng, M.~Song, and X.~Wang, ``Factorizable graph convolutional networks,'' \emph{NeurIPS}, vol.~33, pp. 20\,286--20\,296, 2020.

\bibitem{guo2020interpretable}
X.~Guo, L.~Zhao, Z.~Qin, L.~Wu, A.~Shehu, and Y.~Ye, ``Interpretable deep graph generation with node-edge co-disentanglement,'' in \emph{SIGKDD}, 2020, pp. 1697--1707.

\bibitem{li2021disentangled}
H.~Li, X.~Wang, Z.~Zhang, Z.~Yuan, H.~Li, and W.~Zhu, ``Disentangled contrastive learning on graphs,'' \emph{NeurIPS}, vol.~34, pp. 21\,872--21\,884, 2021.

\bibitem{li2022disentangled}
H.~Li, Z.~Zhang, X.~Wang, and W.~Zhu, ``Disentangled graph contrastive learning with independence promotion,'' \emph{IEEE Transactions on Knowledge and Data Engineering}, 2022.

\bibitem{fan2022debiased}
S.~Fan, X.~Wang, C.~Shi, K.~Kuang, N.~Liu, and B.~Wang, ``Debiased graph neural networks with agnostic label selection bias,'' \emph{IEEE transactions on neural networks and learning systems}, 2022.

\bibitem{sui2022causal}
Y.~Sui, X.~Wang, J.~Wu, M.~Lin, X.~He, and T.-S. Chua, ``Causal attention for interpretable and generalizable graph classification,'' in \emph{SIGKDD}, 2022, pp. 1696--1705.

\bibitem{zhao2022learning}
T.~Zhao, G.~Liu, D.~Wang, W.~Yu, and M.~Jiang, ``Learning from counterfactual links for link prediction,'' in \emph{ICML}.\hskip 1em plus 0.5em minus 0.4em\relax PMLR, 2022, pp. 26\,911--26\,926.

\bibitem{lin2021generative}
W.~Lin, H.~Lan, and B.~Li, ``Generative causal explanations for graph neural networks,'' in \emph{ICML}.\hskip 1em plus 0.5em minus 0.4em\relax PMLR, 2021, pp. 6666--6679.

\bibitem{feng2020graph}
W.~Feng, J.~Zhang, Y.~Dong, Y.~Han, H.~Luan, Q.~Xu, Q.~Yang, E.~Kharlamov, and J.~Tang, ``Graph random neural networks for semi-supervised learning on graphs,'' \emph{NeurIPS}, vol.~33, pp. 22\,092--22\,103, 2020.

\bibitem{kong2022robust}
K.~Kong, G.~Li, M.~Ding, Z.~Wu, C.~Zhu, B.~Ghanem, G.~Taylor, and T.~Goldstein, ``Robust optimization as data augmentation for large-scale graphs,'' in \emph{CVPR}, 2022, pp. 60--69.

\bibitem{you2020graph}
Y.~You, T.~Chen, Y.~Sui, T.~Chen, Z.~Wang, and Y.~Shen, ``Graph contrastive learning with augmentations,'' \emph{NeurIPS}, vol.~33, pp. 5812--5823, 2020.

\bibitem{liu2022graph}
G.~Liu, T.~Zhao, J.~Xu, T.~Luo, and M.~Jiang, ``Graph rationalization with environment-based augmentations,'' in \emph{SIGKDD}, 2022, pp. 1069--1078.

\bibitem{yu2022finding}
J.~Yu, J.~Liang, and R.~He, ``Finding diverse and predictable subgraphs for graph domain generalization,'' \emph{arXiv preprint arXiv:2206.09345}, 2022.

\bibitem{zhang2017mixup}
H.~Zhang, M.~Cisse, Y.~N. Dauphin, and D.~Lopez-Paz, ``mixup: Beyond empirical risk minimization,'' \emph{arXiv preprint arXiv:1710.09412}, 2017.

\bibitem{verma2021graphmix}
V.~Verma, M.~Qu, K.~Kawaguchi, A.~Lamb, Y.~Bengio, J.~Kannala, and J.~Tang, ``Graphmix: Improved training of gnns for semi-supervised learning,'' in \emph{AAAI}, vol.~35, no.~11, 2021, pp. 10\,024--10\,032.

\bibitem{wang2021mixup}
Y.~Wang, W.~Wang, Y.~Liang, Y.~Cai, and B.~Hooi, ``Mixup for node and graph classification,'' in \emph{The Web Conference}, 2021, pp. 3663--3674.

\bibitem{wang2020nodeaug}
Y.~Wang, W.~Wang, Y.~Liang, Y.~Cai, J.~Liu, and B.~Hooi, ``Nodeaug: Semi-supervised node classification with data augmentation,'' in \emph{SIGKDD}, 2020, pp. 207--217.

\bibitem{zhao2021data}
T.~Zhao, Y.~Liu, L.~Neves, O.~Woodford, M.~Jiang, and N.~Shah, ``Data augmentation for graph neural networks,'' in \emph{AAAI}, vol.~35, no.~12, 2021, pp. 11\,015--11\,023.

\bibitem{park2021metropolis}
H.~Park, S.~Lee, S.~Kim, J.~Park, J.~Jeong, K.-M. Kim, J.-W. Ha, and H.~J. Kim, ``Metropolis-hastings data augmentation for graph neural networks,'' \emph{NeurIPS}, vol.~34, pp. 19\,010--19\,020, 2021.

\bibitem{wu2022knowledge}
L.~Wu, H.~Lin, Y.~Huang, and S.~Z. Li, ``Knowledge distillation improves graph structure augmentation for graph neural networks,'' \emph{NeurIPS}, vol.~35, pp. 11\,815--11\,827, 2022.

\bibitem{sui2022adversarial}
Y.~Sui, X.~Wang, J.~Wu, A.~Zhang, and X.~He, ``Adversarial causal augmentation for graph covariate shift,'' \emph{arXiv preprint arXiv:2211.02843}, 2022.

\bibitem{liu2022local}
S.~Liu, R.~Ying, H.~Dong, L.~Li, T.~Xu, Y.~Rong, P.~Zhao, J.~Huang, and D.~Wu, ``Local augmentation for graph neural networks,'' in \emph{ICML}.\hskip 1em plus 0.5em minus 0.4em\relax PMLR, 2022, pp. 14\,054--14\,072.

\bibitem{guo2022intrusionfree}
\BIBentryALTinterwordspacing
H.~Guo and Y.~Mao, ``Intrusion-free graph mixup,'' 2022. [Online]. Available: \url{https://openreview.net/forum?id=ybsh6zEzIKA}
\BIBentrySTDinterwordspacing

\bibitem{han2022g}
X.~Han, Z.~Jiang, N.~Liu, and X.~Hu, ``G-mixup: Graph data augmentation for graph classification,'' in \emph{ICML}.\hskip 1em plus 0.5em minus 0.4em\relax PMLR, 2022, pp. 8230--8248.

\bibitem{miao2022interpretable}
S.~Miao, M.~Liu, and P.~Li, ``Interpretable and generalizable graph learning via stochastic attention mechanism,'' in \emph{ICML}.\hskip 1em plus 0.5em minus 0.4em\relax PMLR, 2022, pp. 15\,524--15\,543.

\bibitem{wu2022handling}
Q.~Wu, H.~Zhang, J.~Yan, and D.~Wipf, ``Handling distribution shifts on graphs: An invariance perspective,'' in \emph{ICLR}, 2022.

\bibitem{li2022learning}
H.~Li, Z.~Zhang, X.~Wang, and W.~Zhu, ``Learning invariant graph representations for out-of-distribution generalization,'' in \emph{NeurIPS}, 2022.

\bibitem{buffelli2022sizeshiftreg}
D.~Buffelli, P.~Li{\`o}, and F.~Vandin, ``Sizeshiftreg: a regularization method for improving size-generalization in graph neural networks,'' \emph{NeurIPS}, vol.~35, pp. 31\,871--31\,885, 2022.

\bibitem{li2022let}
S.~Li, X.~Wang, A.~Zhang, Y.~Wu, X.~He, and T.-S. Chua, ``Let invariant rationale discovery inspire graph contrastive learning,'' in \emph{ICML}.\hskip 1em plus 0.5em minus 0.4em\relax PMLR, 2022, pp. 13\,052--13\,065.

\bibitem{li2023invariant}
H.~Li, Z.~Zhang, X.~Wang, and W.~Zhu, ``Invariant node representation learning under distribution shifts with multiple latent environments,'' \emph{ACM Transactions on Information Systems}, vol.~42, no.~1, pp. 1--30, 2023.

\bibitem{liu2023flood}
Y.~Liu, X.~Ao, F.~Feng, Y.~Ma, K.~Li, T.-S. Chua, and Q.~He, ``Flood: A flexible invariant learning framework for out-of-distribution generalization on graphs,'' in \emph{SIGKDD}, 2023, pp. 1548--1558.

\bibitem{wu2022deconfounding}
Y.-X. Wu, X.~Wang, A.~Zhang, X.~Hu, F.~Feng, X.~He, and T.-S. Chua, ``Deconfounding to explanation evaluation in graph neural networks,'' \emph{arXiv preprint arXiv:2201.08802}, 2022.

\bibitem{zhou2022ood}
Y.~Zhou, G.~Kutyniok, and B.~Ribeiro, ``Ood link prediction generalization capabilities of message-passing gnns in larger test graphs,'' \emph{NeurIPS}, vol.~35, pp. 20\,257--20\,272, 2022.

\bibitem{bevilacqua2021size}
B.~Bevilacqua, Y.~Zhou, and B.~Ribeiro, ``Size-invariant graph representations for graph classification extrapolations,'' in \emph{ICML}.\hskip 1em plus 0.5em minus 0.4em\relax PMLR, 2021, pp. 837--851.

\bibitem{gui2024joint}
S.~Gui, M.~Liu, X.~Li, Y.~Luo, and S.~Ji, ``Joint learning of label and environment causal independence for graph out-of-distribution generalization,'' \emph{NeurIPS}, vol.~36, 2024.

\bibitem{sui2024enhancing}
Y.~Sui, W.~Mao, S.~Wang, X.~Wang, J.~Wu, X.~He, and T.-S. Chua, ``Enhancing out-of-distribution generalization on graphs via causal attention learning,'' \emph{ACM Transactions on Knowledge Discovery from Data}, 2024.

\bibitem{wu2024graph}
Q.~Wu, F.~Nie, C.~Yang, T.~Bao, and J.~Yan, ``Graph out-of-distribution generalization via causal intervention,'' in \emph{The Web Conference}, 2024, pp. 850--860.

\bibitem{zhang2023spectral}
Z.~Zhang, X.~Wang, Z.~Zhang, Z.~Qin, W.~Wen, H.~Li, W.~Zhu \emph{et~al.}, ``Spectral invariant learning for dynamic graphs under distribution shifts,'' in \emph{NeurIPS}, 2023.

\bibitem{yang2024improving}
K.~Yang, Z.~Zhou, Q.~Huang, L.~Li, Y.~Liang, and Y.~Wang, ``Improving generalization of dynamic graph learning via environment prompt,'' \emph{NeurIPS}, vol.~37, pp. 70\,048--70\,075, 2024.

\bibitem{wu2024pure}
H.~Wu, C.~Wang, F.~Xu, J.~Xue, C.~Chen, X.-S. Hua, and X.~Luo, ``Pure: Prompt evolution with graph ode for out-of-distribution fluid dynamics modeling,'' \emph{NeurIPS}, vol.~37, pp. 104\,965--104\,994, 2024.

\bibitem{xia2025prost}
K.~Xia, L.~Lin, S.~Wang, Q.~Zhang, S.~Wang, and T.~He, ``Prost: Prompt future snapshot on dynamic graphs for spatio-temporal prediction,'' in \emph{SIGKDD}, 2025, pp. 1645--1656.

\bibitem{rojas2018invariant}
M.~Rojas-Carulla, B.~Sch{\"o}lkopf, R.~Turner, and J.~Peters, ``Invariant models for causal transfer learning,'' \emph{The Journal of Machine Learning Research}, vol.~19, no.~1, pp. 1309--1342, 2018.

\bibitem{creager2021environment}
E.~Creager, J.-H. Jacobsen, and R.~Zemel, ``Environment inference for invariant learning,'' in \emph{ICML}, 2021, pp. 2189--2200.

\bibitem{li2021learning}
B.~Li, Y.~Wang, S.~Zhang, D.~Li, K.~Keutzer, T.~Darrell, and H.~Zhao, ``Learning invariant representations and risks for semi-supervised domain adaptation,'' in \emph{CVPR}, 2021, pp. 1104--1113.

\bibitem{zhao2019learning}
H.~Zhao, R.~T. Des~Combes, K.~Zhang, and G.~Gordon, ``On learning invariant representations for domain adaptation,'' in \emph{ICML}, 2019, pp. 7523--7532.

\bibitem{buhlmann2020invariance}
P.~B{\"u}hlmann, ``Invariance, causality and robustness,'' 2020.

\bibitem{gong2016domain}
M.~Gong, K.~Zhang, T.~Liu, D.~Tao, C.~Glymour, and B.~Sch{\"o}lkopf, ``Domain adaptation with conditional transferable components,'' in \emph{ICML}, 2016, pp. 2839--2848.

\bibitem{sinha2017certifying}
A.~Sinha, H.~Namkoong, R.~Volpi, and J.~Duchi, ``Certifying some distributional robustness with principled adversarial training,'' in \emph{ICLR}, 2018.

\bibitem{chen2022bagnn}
Z.~Chen, T.~Xiao, and K.~Kuang, ``Ba-gnn: On learning bias-aware graph neural network,'' in \emph{ICDE}, 2022, pp. 3012--3024.

\bibitem{li2022autost}
J.~Li, S.~Zhang, H.~Xiong, and H.~Zhou, ``Autost: Towards the universal modeling of spatio-temporal sequences,'' in \emph{NeurIPS}, 2022, pp. 20\,498--20\,510.

\bibitem{velivckovic2017graph}
P.~Veli{\v{c}}kovi{\'c}, G.~Cucurull, A.~Casanova, A.~Romero, P.~Lio, and Y.~Bengio, ``Graph attention networks,'' in \emph{ICLR}, 2017.

\bibitem{hochreiter1997long}
S.~Hochreiter and J.~Schmidhuber, ``Long short-term memory,'' \emph{Neural Computation}, vol.~9, no.~8, pp. 1735--1780, 1997.

\bibitem{hartigan1979algorithm}
J.~A. Hartigan and M.~A. Wong, ``Algorithm as 136: A k-means clustering algorithm,'' \emph{Journal of the Royal Statistical Society. Series C (Applied Statistics)}, vol.~28, no.~1, pp. 100--108, 1979.

\bibitem{li2016structural}
A.~Li and Y.~Pan, ``Structural information and dynamical complexity of networks,'' \emph{IEEE Transactions on Information Theory}, vol.~62, no.~6, pp. 3290--3339, 2016.

\bibitem{jensen1906fonctions}
J.~L. W.~V. Jensen, ``Sur les fonctions convexes et les in{\'e}galit{\'e}s entre les valeurs moyennes,'' \emph{Acta Mathematica}, vol.~30, no.~1, pp. 175--193, 1906.

\bibitem{federici2021information}
M.~Federici, R.~Tomioka, and P.~Forr{\'e}, ``An information-theoretic approach to distribution shifts,'' in \emph{NeurIPS}, 2021, pp. 17\,628--17\,641.

\bibitem{krueger2021out}
D.~Krueger, E.~Caballero, J.-H. Jacobsen, A.~Zhang, J.~Binas, D.~Zhang, R.~Le~Priol, and A.~Courville, ``Out-of-distribution generalization via risk extrapolation (rex),'' in \emph{ICML}, 2021, pp. 5815--5826.

\bibitem{sagawa2019distributionally}
S.~Sagawa, P.~W. Koh, T.~B. Hashimoto, and P.~Liang, ``Distributionally robust neural networks for group shifts: On the importance of regularization for worst-case generalization,'' \emph{arXiv preprint arXiv:1911.08731}, 2019.

\bibitem{tang2012cross}
J.~Tang, S.~Wu, J.~Sun, and H.~Su, ``Cross-domain collaboration recommendation,'' in \emph{SIGKDD}, 2012, pp. 1285--1293.

\bibitem{kumar2019predicting}
S.~Kumar, X.~Zhang, and J.~Leskovec, ``Predicting dynamic embedding trajectory in temporal interaction networks,'' in \emph{SIGKDD}, 2019, pp. 1269--1278.

\bibitem{holland1983stochastic}
P.~W. Holland, K.~B. Laskey, and S.~Leinhardt, ``Stochastic blockmodels: First steps,'' \emph{Social networks}, vol.~5, no.~2, pp. 109--137, 1983.

\bibitem{yu2023mind}
J.~Yu, J.~Liang, and R.~He, ``Mind the label shift of augmentation-based graph ood generalization,'' in \emph{CVPR}, 2023, pp. 11\,620--11\,630.

\end{thebibliography}

\begin{IEEEbiography}
[{\includegraphics[width=1in,height=1.2in,clip,keepaspectratio]{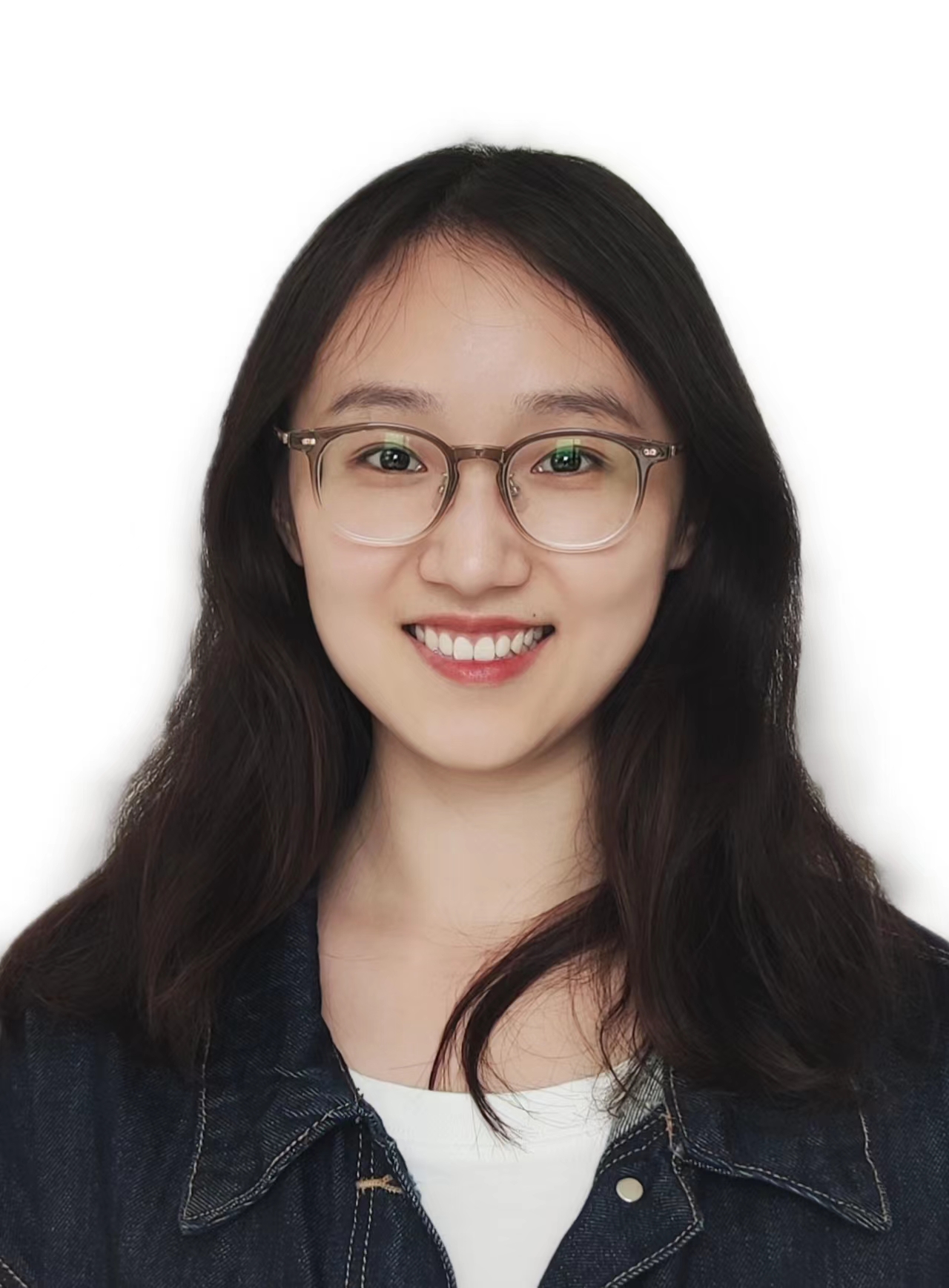}}]{Qingyun Sun} is currently an Assistant Professor at the School of Computer Science and Engineering, and Beijing Advanced Innovation Center for Big Data and Brain Computing at Beihang University. Her research interests include machine learning and graph mining. She has published several papers on IEEE TPAMI, IEEE TKDE, Web Conference, AAAI, ICDM, CIKM, etc.
\end{IEEEbiography}

\begin{IEEEbiography}[{\includegraphics[width=1in,height=1.1in,clip,keepaspectratio]{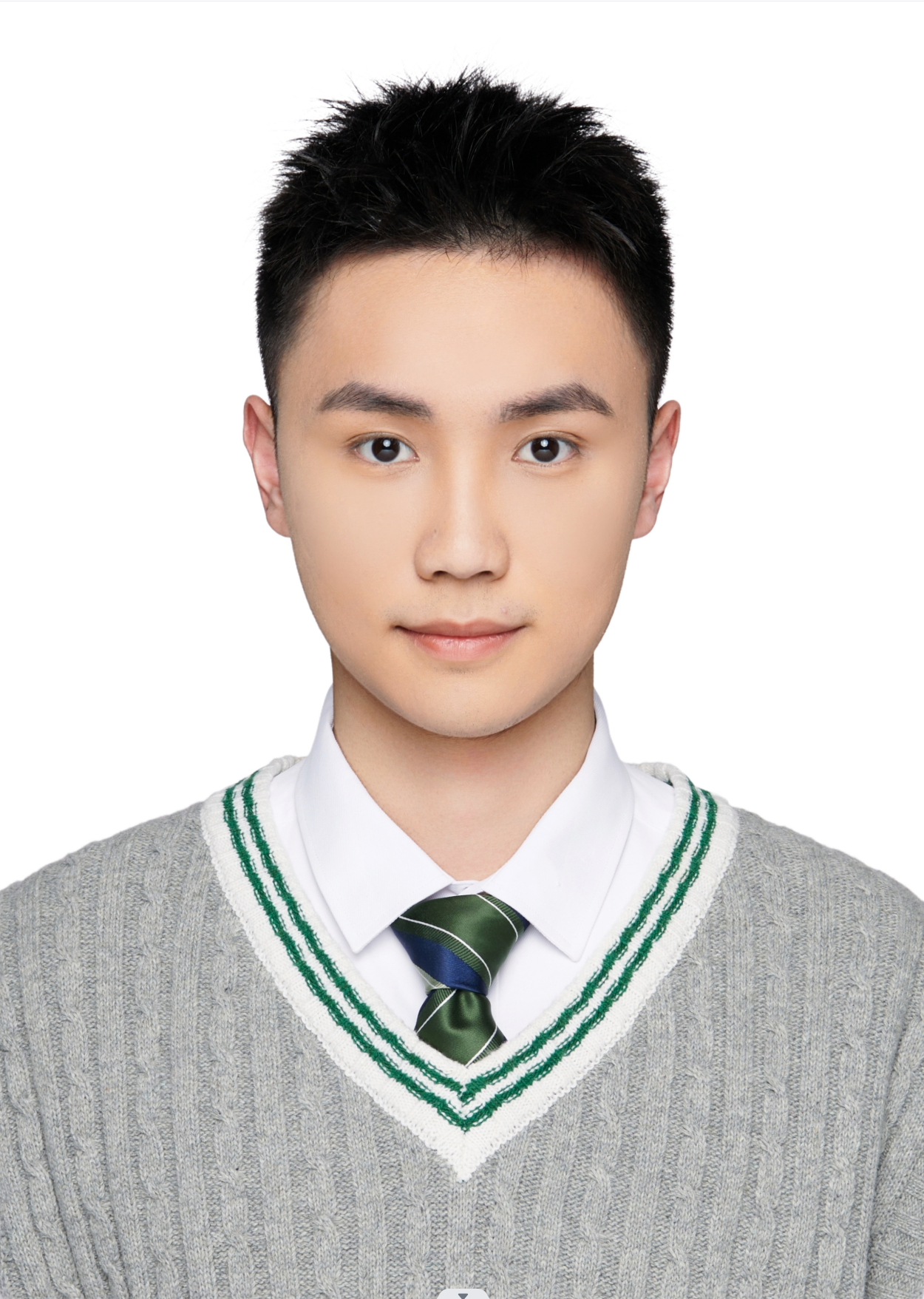}}]{Jiayi Luo}
 is currently a Ph.D. candidate at the Beijing Advanced Innovation Center for Big Data and Brain Computing at Beihang University. His research interests include graph representation learning and OOD generalization.
\end{IEEEbiography}

\begin{IEEEbiography}[{\includegraphics[width=1in,height=1.1in,clip,keepaspectratio]{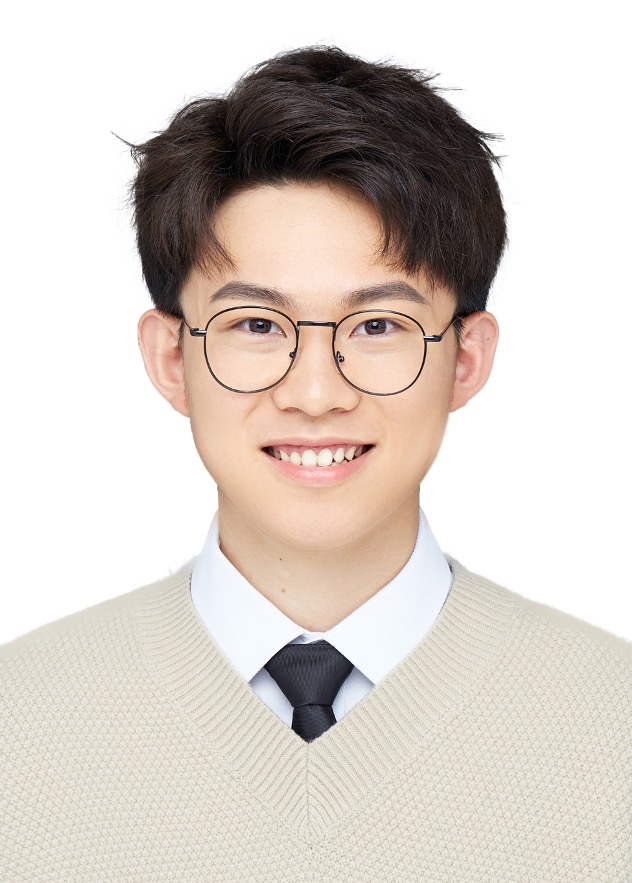}}]{Haonan Yuan}
 is currently a Ph.D. candidate at the Beijing Advanced Innovation Center for Big Data and Brain Computing at Beihang University. His research interests include dynamic graph learning and OOD generalization.
\end{IEEEbiography}

\begin{IEEEbiography}[{\includegraphics[width=1in,height=1.1in,clip,keepaspectratio]{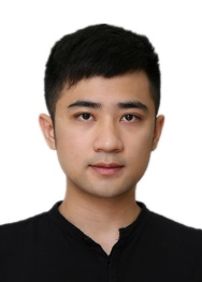}}]{Xingcheng Fu}
 is currently an assistant professor at the Key Lab of Education Blockchain and Intelligent Technology at Guangxi Normal University. His research interests include graph representation learning, complex networks, and social network analysis. 
 He has published several papers on IEEE TKDE, Web Conference, AAAI, ICDM, CIKM, etc.
\end{IEEEbiography}

\begin{IEEEbiography}
[{\includegraphics[width=1in,height=1.2in,clip,keepaspectratio]{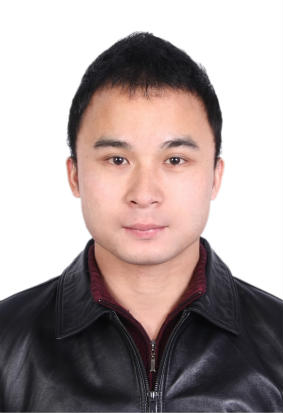}}]{Hao Peng} is currently a Professor at the School of Cyber Science and Technology in Beihang University. His research interests include representation learning, social network mining, and reinforcement learning. To date, Dr. Peng has published over 70+ research papers in top-tier journals and conferences, including the IEEE TPAMI, TKDE, TPDS, TNNLS, TASLP, JAIR, ACM TOIS, TKDD, and Web Conference. 
\end{IEEEbiography}

\begin{IEEEbiography}
[{\includegraphics[width=1in,height=1.2in,clip,keepaspectratio]{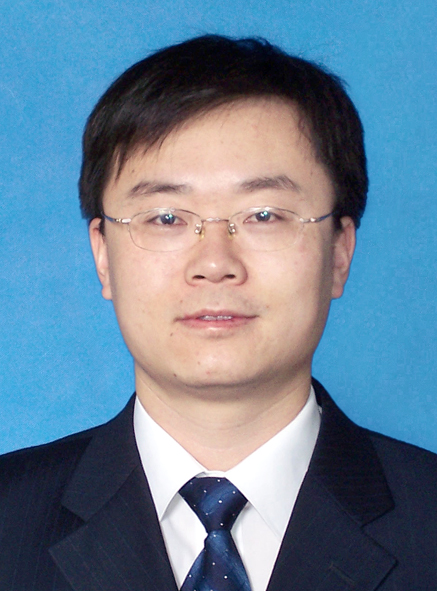}}]{Jianxin Li} is currently a Professor with the School of Computer Science and Engineering, and Beijing Advanced Innovation Center for Big Data and Brain Computing in Beihang University. His current research interests include social networks, machine learning, big data, and trustworthy computing. Dr. Li has published research papers in top-tier journals and conferences, including the IEEE TKDE, TDSC, JAIR, ACM TOIS, TKDD, KDD, AAAI, and WWW. 
\end{IEEEbiography}

\begin{IEEEbiography}
[{\includegraphics[width=1in,height=1.2in, clip,keepaspectratio]{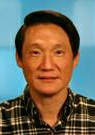}}]{Philip S. Yu} is a Distinguished Professor and the Wexler Chair in Information Technology at the Department of Computer Science, University of Illinois at Chicago. Before joining UIC, he was at the IBM Watson Research Center, where he built a world-renowned data mining and database department. He is a Fellow of the ACM and IEEE. Dr. Yu was the Editor-in-chief of ACM TKDD (2011-2017) and IEEE TKDE (2001-2004).
\end{IEEEbiography}

\end{document}